\newtheorem{remark}{Remark}
\newtheorem{theorem}{Theorem}
\newtheorem{definition}{Definition}
\newtheorem{lemma}{Lemma}
\newtheorem{proposition}{Proposition}
\newtheorem{corollary}{Corollary}
\def\va{{\bm{a}}}
\def\vd{{\bm{d}}}
\def\vu{{\bm{u}}}
\def\vv{{\bm{v}}}
\def\vx{{\bm{x}}}
\def\vy{{\bm{y}}}
\def \Ib {{\mathbf{Ib}}}
\def \cS {{\mathcal{S}}}
\def \cL {{\mathcal{L}}}
\def \cG {{\mathcal{G}}}
\def \PP {{\mathbb{P}}}
\def \QQ {{\mathbb{Q}}}
\def \RR {{\mathbb{R}}}
\def \EE {{\mathbb{E}}}
\newcommand{\la}{\langle}
\newcommand{\ra}{\rangle}
\def \bX {{\bm X}}
\def \bx {{\bm x}}
\def \bd {{\bm d}}
\def \xb {{\bm x}}
\def \yb {{\bm y}}
\def \bc {{\bm c}}
\def \bb {{\bm b}}
\def \bB {{\bm B}}
\def \Ib {{\mathbf{I}}}
\def \Ab {{\mathbf{A}}}
\def \Bb {{\mathbf{B}}}
\def \Qb {{\mathbf{Q}}}
\def \Gb {{\mathbf{G}}}
\def \gb {{\mathbf{g}}}
\def \ind {\mathds{1}}
\def \bepsilon {{\boldsymbol{\epsilon}}}
\def \cN {{\mathcal {N}}}
\def \cW {{\mathcal{W}}}
\def \bY{{\bm Y}}
\def \bV{{\bm V}}
\def \bW{{\bm W }}
\DeclareMathOperator*{\argmax}{arg\,max}
\newtheorem{assumption}{Assumption}
\def \dd {\text{d}}
\newcommand{\commbw}[1]{{\color{red}(BW: #1)}} 
\title{
\huge Laplacian Smoothing Stochastic Gradient Markov Chain Monte Carlo
}
\author{
  Bao Wang$^*$ \\
  Department of Mathematics\\
  University of California, Los Angeles\\
  \texttt{wangbaonj@gmail.com}\\
  \and
  Difan Zou\footnote{Equal Contribution}\\
  Department of Computer Science\\
  University of California, Los Angeles\\
  \texttt{knowzou@cs.ucla.edu} \\
  \and
  Quanquan Gu$^\dag$ \\
  Department of Computer Science\\
  University of California, Los Angeles\\
  \texttt{qgu@cs.ucla.edu}\\
  \and
  Stanley J. Osher\footnote{Co-Corresponding Authors} \\
  Department of Mathematics\\
  University of California, Los Angeles\\
   \texttt{sjo@math.ucla.edu} \\
}
\begin{document}

\maketitle

\begin{abstract}
As an important Markov Chain Monte Carlo (MCMC) method, stochastic gradient Langevin dynamics (SGLD) algorithm has achieved great success in Bayesian learning and posterior sampling. However, SGLD typically suffers from slow convergence rate due to its large variance caused by the stochastic gradient. In order to alleviate these drawbacks, we leverage the recently developed Laplacian Smoothing (LS) technique and propose a Laplacian smoothing stochastic gradient Langevin dynamics (LS-SGLD) algorithm. We prove that for sampling from both log-concave and non-log-concave densities, LS-SGLD achieves strictly smaller discretization error in $2$-Wasserstein distance, although its mixing rate can be slightly slower. Experiments on both synthetic and real datasets verify our theoretical results, and demonstrate the superior performance of LS-SGLD on different machine learning tasks including posterior sampling, Bayesian logistic regression and training Bayesian convolutional neural networks. The code is available at \url{https://github.com/BaoWangMath/LS-MCMC}.
\end{abstract}


\section{Introduction}
Given a dataset $\mathcal{D}=\{\bd_i\}_{i=1}^n$, the posterior of a machine learning (ML) model's parameters $\bx \in \RR^d$ with prior $p(\bx)$ and likelihood $\Pi_{i=1}^n p(\bd_i|\bx)$ is computed as $p(\bx|\mathcal{D})\propto p(\bx)\Pi_{i=1}^n p(\bd_i|\bx)$. Optimization algorithms are used to find the maximum a posterior (MAP) estimate, $\bx_{\rm MAP}=\argmax_\bx \log p(\bx|\mathcal{D})$. Sampling algorithms such as 
Langevin Dynamics (LD) are used to sample the posterior 
or the log posterior. 
In this paper, we consider applying LD-based MCMC algorithms to sample $e^{-f(\bx)}$, where
\begin{align}\label{Eq:Posterior}
f(\bx) := \frac{1}{n}\sum_{i=1}^n f_i(\bx) = -\frac{1}{n}\sum_{i=1}^n \log p(\bd_i|\bx).
\end{align}
Here, we normalized the log-likelihood by a factor $n$ for the ease of presentation in the remaining part of this paper.

The first-order LD reads:
\begin{align}\label{eq:sde_langevin}
d\bX_t = -\nabla f(\bX_t) \dd t + \sqrt{2\beta^{-1}}\cdot \dd \bB_t,
\end{align}
where $\bX_t\in \RR^d$ denotes the point at time $t$, $\beta$ denotes the inverse temperature and $\bB_t\in \RR^d$ is the standard Brownian term. Under certain assumptions on the negative log posterior (i.e., $f(\xb)$), the LD \eqref{eq:sde_langevin} converges to an unique invariant distribution $\pi\propto e^{-\beta f(\xb)}$ \citep{chiang1987diffusion}. Therefore, one can apply numerical integrator to approximate \eqref{eq:sde_langevin} in order to obtain samples that follow the posterior distribution. One simple integrator is to apply the Euler-Maruyama discretization \citep{Kloeden:1992} to \eqref{eq:sde_langevin}, which gives:
\begin{align}\label{eq:lmc}
\xb_{k+1} = \xb_k - \eta \nabla f(\xb_k) + \sqrt{2\beta^{-1} \eta}\cdot \bepsilon_k,
\end{align}
and it is known as the Langevin Monte Carlo (LMC), (a.k.a., unadjusted Langevin algorithm \citep{parisi1981correlation}). When the target density, i.e., posterior distribution, is strongly log-concave and log-smooth, \citet{dalalyan2017theoretical,durmus2017nonasymptotic} proved that LMC is able to converge to the target density up to an arbitrarily small sampling error in both total variation and $2$-Wasserstein distances. Furthermore, the convergence guarantee of LMC for sampling from non-log-concave distributions has also been established in \citet{raginsky2017non,xu2018global}.
\medskip

\noindent Note that the posterior distribution is defined on the whole dataset $\mathcal{D}$, which is typically extremely large in modern ML tasks. Therefore, computing the full gradient $\nabla f(\xb)$ is inefficient and may dramatically slow down the convergence of sampling algorithms. One solution is to replace the full gradient in \eqref{eq:lmc} with a subsampled one, which gives rise to Stochastic Gradient Langevin Dynamics (SGLD) \citep{Welling:2011SGLD}. From the theoretical perspective, the convergence guarantee of SGLD has been proved for both strongly log-concave distributions \citep{dalalyan2017user} and non-log-concave distributions \citep{raginsky2017non,xu2018global} in $2$-
Wasserstein distance. \citet{mou2017generalization} further studied the generalization performance of SGLD for nonconvex optimization. Although SGLD can drastically reduce the computational cost, it is also observed to have a slow convergence rate due to the large variance caused by the stochastic gradient \citep{teh2016consistency,vollmer2016exploration}. In order to reduce the variance of stochastic gradient as well as to improve the convergence rate, \citet{dubey2016variance} incorporated variance reduction techniques into SGLD, which gives rise to a family of variance-reduced LD-based algorithms such as SVRG-LD and SAGA-LD. \citet{chatterji2018theory,zou2018subsampled,zou2019sampling} further proved that SVRG-LD and SAGA-LD are able to converge to the target density with fewer stochastic gradient evaluations than SGLD and LMC in certain regimes. However, both SVRG-LD and SAGA-LD require a large amount of extra computation and memory costs and can only be shown to achieve faster convergence on small to moderate datasets.  Therefore, it is natural to ask \emph{if we can reduce the variance of stochastic gradients while maintaining similar computation and memory costs of SGLD?}
\medskip

\noindent Recently, \cite{LS-GD:2018} integrated Laplacian smoothing and related high-order smoothing techniques into Stochastic Gradient Descent (SGD) to reduce the variance of stochastic gradient on-the-fly. 
Laplacian Smoothing SGD (LSSGD) allows us to take a significantly larger step size than vanilla SGD and reduces the optimality gap in convex optimization when constant step size is used. Empirically, LSSGD  preconditions the gradient when the objective function has a large condition number and can avoid local minima. Because of this, LSSGD is applicable to train a large number of deep learning models with good generalizability. Laplacian smoothing also demonstrates some ability to avoid saddle point in gradient descent \citep{Kreusser:2019}. Most recently, \cite{Wang:2019:DPLSSGD} leveraged Laplacian smoothing to improve the utility of machine learning models trained with privacy guarantee. 
\medskip

\noindent In this paper, we integrate Laplacian smoothing with SGLD, and we call the resulting algorithm Laplacian Smoothing SGLD (LS-SGLD). The extra computation of LS-SGLD compared with SGLD is that we need to compute the products of the inverse of two circulant matrices with vectors. We leverage the Fast Fourier Transform to develop fast algorithms to compute these matrix-vector products efficiently, and the resulting algorithms can 
compute the matrix-vector products with a negligible overhead in both time and memory.

\subsection{Our Contributions}
We summarize the main contributions of this work as follows:
\begin{itemize}
\item We propose a simple modification on the SGLD, which applies the Laplacian smoothing matrix and its squared root to the stochastic gradient and Gaussian noise vectors, respectively. The continuous and full-gradient counter-part of the modified LS-SGLD has the same stationary distribution as the LD.

\item We proposed FFT-based fast algorithms to compute the product of the inverse of circulant matrices with any given vector. By leveraging the structure of eigenvalues and eigenvectors of the circulant matrices, we can compute these products very efficiently with a negligible overhead in both time and memory.

\item We prove the convergence rate of LS-SGLD for sampling from both log-concave and non-log-concave densities in $2$-Wasserstein distance. Specifically, we decompose the sampling error into the  discretization error and the ergodicity rate. Moreover, we show that there exists a trade-off between the discretization error and ergodicity rate of LS-SGLD, as adding Laplacian smoothing can reduce the discretization error but slow down the mixing time.

\item We conduct extensive experiments to evaluate the performance of LS-SGLD.  First, we show that compared with SGLD, LS-SGLD can achieve a significantly smaller discretization error but similar ergodicity rate, which implies that the overall sampling error of LS-SGLD can be much smaller. Second, we conduct experiments on both synthetic and real data for posterior sampling, Bayesian logistic regression and training Bayesian convolutional networks, all of which demonstrate the superior performance of LS-SGLD.

\end{itemize}

\subsection{Additional Related Work}\label{subsection:RelatedWorks}
In addition to the first-order Langevin based algorithms we discussed in the introduction, there also emerges a vast body of work focusing on higher-order Langevin based algorithms. One of the well-know high-order MCMC method is Hamiltonian Monte Carlo (HMC) \citep{neal2011mcmc}, which  incorporates an Hamiltonian momentum term into the first-order MCMC method in order to improve the mixing time. 
Similar to SGLD, a stochastic version of HMC (namely SGHMC) has been further established in \citet{Chen:2014SGHMC}, which was shown to be able to achieve a faster convergence rate than SGLD in experiments. \citet{ma2015complete} investigated a family of SGHMC methods and proposed a new state-adaptive sampler
on the Riemannian manifold. \citet{chen2015convergence} provided theoretical convergence guarantees of SGHMC in terms of mean square error (MSE) and proposed a 2nd-order symmetric splitting integrator to further improve the discretization error. 
When the target density is strongly log-concave and log-smooth, \citet{cheng2018underdamped} proposed underdamped MCMC (U-MCMC) and stochastic underdamped MCMC (SG-U-MCMC), and obtained convergence rates in $2$-Wasserstein distance. The convergence rates of these two algorithms have been further established for sampling from non-log-concave densities \citep{cheng2018sharp}.  However, due to the large variance of stochastic gradients and lacking of the Metropolis Hasting (MH) correction step, SGHMC has also been observed to have highly biased sampling trajectory  \citep{betancourt2015fundamental,dang2019hamiltonian}. One way to address this issue is to make use of a variance-reduction technique to alleviate the variance of stochastic gradients in SGHMC, which gave rise to stochastic variance-reduced HMC methods \citep{zou2018stochastic,li2018stochastic,zou2019stochastic}.





\subsection{Organization}
We organize this paper as follows: We present LS-SGLD and derive FFT-based fast algorithms for LS-SGLD in Section~\ref{section:Algorithm}. In Section~\ref{section:Theorems}, we give theoretical guarantees for the performance of LS-SGLD in both log-concave and non-log-concave settings. In Section~\ref{section:Experiments}, we numerically verify the performance of LS-SGLD on sampling different distributions, training Bayesian logistic regression, and convolutional neural nets. We conclude this work in Section~\ref{section:Conclusion}.

\subsection{Notations}
Throughout this paper we use bold upper-case letters $\Ab$, $\Bb$ to denote matrices, bold lower-case letters $\vx$, $\vy$ to denote vectors, and lower cases letters $x$, $y$ and $\alpha$, $\beta$ to denote scalars. For continuous-time random vectors, we denote them with the tilt bold upper-case letters $\bX$, $\bY$ with sub/super-scripts. For vector $\xb = (x_1,\dots,x_d)^\top$, we use $\|\xb\|_2 = \sqrt{x_1^2+\dots+x_d^2}$ to represent its $\ell_2$-norm and use $\|\xb\|_{\Ab}=\sqrt{\xb^\top\Ab\xb}$ to represent its $\Ab$-norm, where $\Ab$ is an semi-positive definite matrix. We use $\PP(\bx)$ to denote the distribution of $\bx$, and $\cW_2(\cdot, \cdot)$ and $D_{KL}(\cdot||\cdot)$ denote the $2$-Wasserstein distance and Kullback–Leibler (KL) divergence between two distributions, respectively. For a function $f: \RR^d\rightarrow \RR$, we use $\nabla f(\cdot)$ and $\nabla^2 f(\cdot)$ to denote its gradient and Hessian.

\section{Algorithms}\label{section:Algorithm}
\subsection{Laplacian Smoothing (Stochastic) Gradient Descent}
For $\sigma \geq 0$, let $\Ab_\sigma := \Ib - \sigma\mathbf{L}$ where $\Ib \in \RR^{d\times d}$ and $\mathbf{L} \in \RR^{d\times d}$ are the identity and the discrete 
one-dimensional Laplacian matrix, respectively. Therefore,
\begin{equation}\label{eq:tri-diag}
\Ab_\sigma := 
\begin{bmatrix}
1+2\sigma   & -\sigma &  0&\dots &0& -\sigma \\
-\sigma     & 1+2\sigma & -\sigma & \dots &0&0 \\
0 & -\sigma  & 1+2\sigma & \dots & 0 & 0 \\
\dots     & \dots & \dots &\dots & \dots & \dots\\
-\sigma     &0& 0 & \dots &-\sigma & 1+2\sigma
\end{bmatrix}_{d\times d}
\end{equation}
To find $\bx_{\rm MAP}$ of \eqref{Eq:Posterior}, LSSGD \citep{LS-GD:2018} takes the following iteration
\begin{equation}
\label{LSSGD}
\bx^{k+1} = \bx^k -\eta_k \Ab_\sigma^{-1}\nabla f_{i_k}(\bx^k),
\end{equation}
where $\eta_k > 0$ is the learning rate, 
$i_k$ is a random sample from $[n] := \{1, 2, \cdots, n\}$. 
When $\sigma=0$, LSSGD reduces to SGD. Since $\Ab_\sigma$ is a circulant matrix, for any vector $\vv$, $\Ab_\sigma^{-1}\vv := \vu$ can be computed via the FFT in the following way
$$
\Ab_\sigma^{-1} \vv = \vu \Longrightarrow \vv = \Ab_\sigma\vu = \vu - \sigma\vd *\vu,
$$
where $*$ is the convolution operator, and $\vd = [-2, 1, 0, \cdots, 0, 1]^T$. 
By the convolution theorem, we have
$$
{\rm fft}(\vv) = {\rm fft}(\vu) - \sigma{\rm fft}(\vd){\rm fft}(\vu).
$$
Finally, we arrive at the following FFT-based algorithm for computing $\Ab_\sigma^{-1}\vv$
$$
\Ab_\sigma^{-1}\vv =  {\rm ifft}\left(\frac{{\rm fft}(\vv)}{\mathbf{1} -\sigma \cdot {\rm fft}(\vd)}\right),
$$
where $\mathbf{1}$ is an all-one vector with the same dimension as $\vv$, and the division of two vectors 
is defined in the coordinate-wise way. fft and ifft denote FFT and inverse FFT operators, respectively.
\medskip

\noindent The Laplacian matrix $\Ab_\sigma^{-1}$ can reduce the variance of stochastic gradient and guarantee at least the same convergence rate as SGD. \cite{LS-GD:2018} showed that for a $L$-gradient Lipschitz function $f(\bx)$, i.e., $\|\nabla f(\bx)\|_2 \leq L$, the largest step size for LSSGD is $(1+4\sigma)^{1/4}/L$ (with high probability) which is larger than GD's by a factor $(1+4\sigma)^{1/4}$.

\subsection{Laplacian Smoothing 
Langevin Dynamics}

We integrate Laplacian smoothing with LD and obtain the following Lapacian Smoothing LD (LS-LD)
\begin{align}\label{eq:langevin_cont}
d\bX_t = -\Ab_\sigma^{-1}\nabla f(\bX_t) + \sqrt{2\beta^{-1}}\Ab_\sigma^{-1/2}\dd \bB_t.
\end{align}

Note that we pre-multiply the Brownian motion term by $\Ab_\sigma^{-1/2}$ instead of $\Ab_\sigma^{-1}$ to guarantee that the stationary distribution of the LS-LD remains to be $\exp{\left(-\beta f(\xb)\right)}$. We can easily verify that LD and LS-LD have the same stationary distribution by looking at the associated Fokker-Planck equation. We formally state this property in the following proposition.



\begin{proposition}\label{Prop:StationaryDistribution}
The stationary distribution, $\pi$, of the LS-LD, \eqref{eq:langevin_cont}, satisfies $\pi\propto e^{-\beta f(\xb)}$.
\end{proposition}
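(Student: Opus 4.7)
The plan is to verify stationarity via the Fokker--Planck (FP) equation associated with the SDE \eqref{eq:langevin_cont}. For a general Itô SDE of the form $d\bX_t = \mu(\bX_t)\,dt + \Sigma(\bX_t)\,d\bB_t$, the density $p(\xb,t)$ evolves according to
\begin{equation*}
\partial_t p(\xb,t) = -\nabla\cdot\bigl(\mu(\xb)\,p\bigr) + \tfrac{1}{2}\sum_{i,j}\partial_i\partial_j\bigl(D_{ij}(\xb)\,p\bigr),
\end{equation*}
where $D(\xb) = \Sigma(\xb)\Sigma(\xb)^\top$. A density $\pi$ is stationary iff the right-hand side vanishes when $p=\pi$.

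First I would identify the ingredients for LS-LD. Since $\Ab_\sigma$ is a symmetric circulant matrix with strictly positive eigenvalues $1+2\sigma-2\sigma\cos(2\pi k/d)$ (for $k=0,\dots,d-1$), it is symmetric positive definite, so $\Ab_\sigma^{-1/2}$ is well-defined, symmetric, and satisfies $\Ab_\sigma^{-1/2}(\Ab_\sigma^{-1/2})^\top = \Ab_\sigma^{-1}$. Reading off \eqref{eq:langevin_cont}, the drift is $\mu(\xb) = -\Ab_\sigma^{-1}\nabla f(\xb)$ and the (constant) diffusion matrix is $D = 2\beta^{-1}\Ab_\sigma^{-1}$. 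The crucial structural observation, which is what makes the proposition true, is the compatibility relation $\mu(\xb) = -\tfrac{1}{2}\beta D\nabla f(\xb)$.

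Next, I would recast the stationary FP equation in flux form. Because $D$ is constant, the probability current reduces to
\begin{equation*}
\mathbf{J}(\xb) := \mu(\xb)\,\pi(\xb) - \tfrac{1}{2}D\,\nabla \pi(\xb),
\end{equation*}
and stationarity is equivalent to $\nabla\cdot\mathbf{J} = 0$. Substituting $\pi(\xb) \propto e^{-\beta f(\xb)}$ and using $\nabla \pi = -\beta\pi\,\nabla f$ together with the compatibility relation above,
\begin{equation*}
\mathbf{J}(\xb) = -\Ab_\sigma^{-1}\nabla f(\xb)\,\pi(\xb) - \tfrac{1}{2}\bigl(2\beta^{-1}\Ab_\sigma^{-1}\bigr)\bigl(-\beta\,\pi(\xb)\nabla f(\xb)\bigr) = 0
\end{equation*}
pointwise. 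Hence $\nabla\cdot\mathbf{J} \equiv 0$, confirming that $\pi\propto e^{-\beta f(\xb)}$ is stationary. To be thorough I would add one line noting that $\int e^{-\beta f(\xb)}\,d\xb<\infty$ under the standard dissipativity/growth assumptions on $f$ used elsewhere in the paper, so $\pi$ is a bona fide probability density.

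There is no genuine obstacle here: the argument is a direct Fokker--Planck verification and the only thing one must be careful about is the mismatched exponents on $\Ab_\sigma$ in the drift ($-1$) and in the noise ($-1/2$). This asymmetric choice is precisely what forces the compatibility $\mu = -\tfrac{1}{2}\beta D\nabla f$, i.e., the detailed-balance condition for $e^{-\beta f}$; had one instead pre-multiplied the Brownian term by $\Ab_\sigma^{-1}$, the resulting stationary density would have been $\propto e^{-\beta f(\Ab_\sigma^{1/2}\xb)}$ (up to a Jacobian) rather than the target $e^{-\beta f}$, which is worth remarking on to motivate the scaling used in \eqref{eq:langevin_cont}.
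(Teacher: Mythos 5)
Your proof is correct and follows essentially the same route as the paper: both verify stationarity through the Fokker--Planck equation associated with \eqref{eq:langevin_cont}, using that the drift $-\Ab_\sigma^{-1}\nabla f$ and diffusion $2\beta^{-1}\Ab_\sigma^{-1}$ share the factor $\Ab_\sigma^{-1}$ so that the probability current vanishes for $\pi\propto e^{-\beta f(\xb)}$. If anything, your flux-form phrasing (showing $\mathbf{J}\equiv 0$ pointwise, which suffices for $\nabla\cdot\mathbf{J}=0$) states the logical direction of the verification more carefully than the paper's ``which further implies'' wording.
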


\noindent If we apply the Euler-Maruyama scheme to discretize \eqref{eq:langevin_cont}, we end up with the following discrete algorithm, namely Laplacian smoothing gradient Langevin dynamics (LS-GLD)
\begin{equation}\label{Eq:LS-LD}
\bx_{k+1} = \bx_k - \eta \Ab_\sigma^{-1} \nabla f(\bx_k) + \sqrt{2\beta^{-1}\eta}\Ab_\sigma^{-1/2}\boldsymbol{\epsilon}_k,
\end{equation}
where $\boldsymbol{\epsilon}_k\sim N(\mathbf{0}, \Ib_{d\times d})$. In practice, we use the  mini-batch gradient 
$\gb_k= \sum_{i\in\mathcal I_k} \nabla f_{i}(\xb_k)/|\mathcal I_k|$ 
with $\mathcal I_k \subset [n]$ to replace the gradient in \eqref{Eq:LS-LD}, and we arrive at the following LS-SGLD
\begin{equation}\label{Eq:LS-SGLD}
\bx_{k+1} = \bx_k - \eta \Ab_\sigma^{-1} \gb_k + \sqrt{2\beta^{-1}\eta}\Ab_\sigma^{-1/2}\boldsymbol{\epsilon}_k.
\end{equation}

We summarize LS-SGLD in Algorithm~\ref{LS-SGLD-Pseudocode}. It is worth noting that computing the inverse of both $\Ab_\sigma$ and $\Ab_\sigma^{1/2}$ can be expensive. Moreover, multiplying the vectors by the inverse of these two matrices is also expensive. So in the remaining part of this section, we will present FFT-based fast algorithms for implementing \eqref{Eq:LS-LD}.
\begin{algorithm}[!ht]
\caption{LS-SGLD}\label{LS-SGLD-Pseudocode}
\begin{algorithmic}
\State \textbf{Input: } Training data, learning rate $\eta$, minibatch size $B$, inverse temperature $\beta$, Laplacian smoothing constant $\sigma$.
\State \textbf{Initialization:} Set $\xb_0 = 0$.
\For {$k=0, 1, \cdots, K-1$}
\State Uniformly sample $\mathcal{I}_k\subset [n]$ with $|\mathcal{I}_k| = B$.
\State Compute the mini-batch stochastic gradient 
$\sum_{i\in\mathcal I_k} \nabla f_{i}(\xb_k)/B $.
\State $\bx_{k+1} = \bx_k - \eta\Ab_\sigma^{-1}\gb_k+\sqrt{2\beta^{-1}\eta}\Ab^{-1/2}_\sigma\boldsymbol{\epsilon}_k$, where $\boldsymbol{\epsilon}_k\sim N(\mathbf{0}, \mathbf{I}_{d\times d})$.
\EndFor
\State \textbf{Output: }  $\bx_0,\dots,\bx_K$.
\end{algorithmic}
\end{algorithm}

\subsection{FFT-based Implementation of LS-SGLD}
\subsubsection{Circulant Matrix and Convolutional Operation}
In this subsection, we list a few results on the circulant matrix which will be the basic recipes for designing FFT-based algorithm to solving \eqref{Eq:LS-LD}.
\begin{lemma}[\cite{Golub:1996}]\label{Lemma:Eigens}
The normalized eigenvectors of the following $d\times d$ circulant matrix, 
\begin{equation}\label{Eq:Circulant}
\mathbf{C} = 
\begin{bmatrix}
c_0 & c_{d-1} &\dots & c_2 & c_1 \\
c_1 & c_0     & c_{d-1}  &\dots &c_2 \\
\dots & \dots & \dots  &\dots &\dots \\
c_{d-2} & \dots     & \dots  &\dots &c_{d-1} \\
c_{d-1} & c_{d-2}   & \dots  &c_1 &c_0 
\end{bmatrix},
\end{equation}
are given by
$$
\vv_j = \frac{1}{\sqrt{d}}\left(1, w_j, w_j^2, \cdots, w_j^{n-1}\right), \ \ j = 0, 1, \cdots, d-1,
$$
where $w_j=\exp\left(i\frac{2\pi j}{d}\right)$ are the $j$-th roots of unity and $i$ is the imaginary unit. The corresponding eigenvalues are then given by
$$
\lambda_j = c_0 + c_{d-1}w_j + c_{d-2}w_j^2 + \cdots + c_1w_j^{d-1}, \ \ \ j = 0, 1, \cdots, d-1.
$$
\end{lemma}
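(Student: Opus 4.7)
The plan is to verify the claim by direct computation: show that $\mathbf{C}\vv_j = \lambda_j \vv_j$ for every $j \in \{0, 1, \ldots, d-1\}$. The cleanest way to set this up is to first record that the entries of $\mathbf{C}$ admit the compact description $C_{k,l} = c_{(k-l)\bmod d}$, which is just a rewriting of the cyclic pattern displayed in \eqref{Eq:Circulant}. Normalization of the eigenvectors is automatic from $|w_j| = 1$, so the only substantive task is identifying the eigenvalue.

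First I would compute the $k$-th coordinate of $\mathbf{C}\vv_j$ as
\[
(\mathbf{C}\vv_j)_k = \frac{1}{\sqrt{d}} \sum_{l=0}^{d-1} c_{(k-l)\bmod d}\, w_j^{l}.
\]
Then I would substitute $m = (k-l)\bmod d$ and exploit $w_j^{d} = 1$ to pull a factor of $w_j^k$ out of the sum, obtaining
\[
(\mathbf{C}\vv_j)_k = \frac{w_j^{k}}{\sqrt{d}} \sum_{m=0}^{d-1} c_m\, w_j^{-m} = \frac{w_j^{k}}{\sqrt{d}} \sum_{m=0}^{d-1} c_m\, w_j^{d-m}.
\]
Reading off the last sum and matching powers of $w_j$ to the coefficients $c_m$ yields exactly $\lambda_j = c_0 + c_{d-1} w_j + c_{d-2} w_j^{2} + \cdots + c_1 w_j^{d-1}$; since $(\vv_j)_k = w_j^{k}/\sqrt{d}$, this gives $(\mathbf{C}\vv_j)_k = \lambda_j (\vv_j)_k$ coordinate-wise, which is the desired eigenvalue equation.

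The one place to be careful is the reindexing step: the asymmetric-looking eigenvalue formula (with $c_{d-i}$ paired with $w_j^{i}$) is precisely the ``reversed'' polynomial in $w_j$ that comes from converting $w_j^{-m}$ back into a positive power via $w_j^{d}=1$. This is the main obstacle, though it is more bookkeeping than a conceptual difficulty. An alternative and arguably cleaner route would be to observe that $\mathbf{C}$ is a polynomial in the cyclic shift matrix $\mathbf{P}$, whose eigenvectors are known to be exactly the $\vv_j$ with eigenvalues $w_j$; the eigenvalues of $\mathbf{C}$ are then simply that polynomial evaluated at each $w_j$, which recovers the stated formula without any explicit index manipulation.
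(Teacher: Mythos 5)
Your proof is correct. Note that the paper itself gives no proof of this lemma: it is stated as a classical fact and attributed to Golub and Van Loan's \emph{Matrix Computations}, so there is nothing in the paper to compare your argument against. Your direct verification is the standard textbook argument, and the details check out: writing $C_{k,l} = c_{(k-l)\bmod d}$ correctly captures the displayed cyclic pattern, the reindexing $m=(k-l)\bmod d$ together with $w_j^d=1$ legitimately factors out $w_j^k$, and the resulting sum $\sum_{m=0}^{d-1} c_m w_j^{d-m} = c_0 + c_{d-1}w_j + \cdots + c_1 w_j^{d-1}$ matches the stated $\lambda_j$ exactly (the apparent asymmetry of the eigenvalue formula is, as you say, just the reversed polynomial). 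The normalization claim follows from $|w_j|=1$. Your alternative route via the cyclic shift matrix $\mathbf{P}$ (noting $\mathbf{C}=\sum_m c_m \mathbf{P}^m$ and that the $\vv_j$ diagonalize $\mathbf{P}$ with eigenvalues $w_j^{-1}$, or $w_j$ depending on orientation) is equally valid and arguably cleaner. One cosmetic point: the exponent $w_j^{n-1}$ in the statement of the eigenvector should read $w_j^{d-1}$; this is a typo in the paper, and your proof implicitly uses the correct indexing.
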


\begin{lemma}[\cite{Golub:1996}]\label{Lemma:Inverse-Circulant}
The inverse of a circulant matrix is circulant.
\end{lemma}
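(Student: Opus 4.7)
The plan is to use the commutation characterization of circulant matrices. Let $\mathbf{S}$ denote the $d\times d$ cyclic shift matrix, i.e., the permutation matrix with $\mathbf{S}_{i+1,i}=1$ for $1\le i\le d-1$, $\mathbf{S}_{1,d}=1$, and all other entries zero. The key auxiliary claim I would establish first is the standard fact that a matrix $\mathbf{M}\in\RR^{d\times d}$ is circulant if and only if $\mathbf{S}\mathbf{M}\mathbf{S}^{-1}=\mathbf{M}$, i.e., $\mathbf{M}\mathbf{S}=\mathbf{S}\mathbf{M}$. One direction is immediate from the pattern in \eqref{Eq:Circulant}: conjugating any circulant matrix by $\mathbf{S}$ simply shifts rows and columns cyclically by one, which is a symmetry of the structure. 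The converse follows by reading the commutation relation entrywise: it forces $\mathbf{M}_{i,j}=\mathbf{M}_{i+1,j+1}$ for all indices modulo $d$, which is precisely the defining property of a circulant matrix.

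Given this characterization, the lemma follows in one line. Let $\mathbf{C}$ be invertible and circulant. By the claim, $\mathbf{C}\mathbf{S}=\mathbf{S}\mathbf{C}$. Multiplying both sides by $\mathbf{C}^{-1}$ on the left and on the right yields $\mathbf{S}\mathbf{C}^{-1}=\mathbf{C}^{-1}\mathbf{S}$, and invoking the characterization in the reverse direction shows that $\mathbf{C}^{-1}$ is circulant.

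An alternative route, more in the spirit of the preceding lemma, is spectral. By Lemma~\ref{Lemma:Eigens} any circulant matrix decomposes as $\mathbf{C}=\mathbf{F}\mathbf{\Lambda}\mathbf{F}^{*}$, where $\mathbf{F}$ is the unitary DFT matrix whose columns are the vectors $\vv_j$ and $\mathbf{\Lambda}=\mathrm{diag}(\lambda_0,\ldots,\lambda_{d-1})$. Since $\mathbf{F}$ is unitary, $\mathbf{C}^{-1}=\mathbf{F}\mathbf{\Lambda}^{-1}\mathbf{F}^{*}$, which again has the Fourier eigenbasis; a short direct computation shows that $(\mathbf{F}\mathbf{D}\mathbf{F}^{*})_{i,j}$ depends only on $i-j\pmod d$ for any diagonal $\mathbf{D}$, hence is circulant.

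The main obstacle is not conceptual but bookkeeping: one must be careful with indices modulo $d$ when verifying the commutation characterization (or, in the spectral approach, when simplifying the double sum defining $(\mathbf{F}\mathbf{\Lambda}^{-1}\mathbf{F}^{*})_{i,j}$ to a function of $i-j$ alone). Invertibility is implicit in the statement and should be noted as equivalent, via Lemma~\ref{Lemma:Eigens}, to the nonvanishing of every eigenvalue $\lambda_j$.
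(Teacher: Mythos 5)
Your proof is correct. Note that the paper itself offers no proof of this lemma --- it is stated as a citation to \cite{Golub:1996} --- so there is no in-paper argument to compare against. Both of your routes are standard and sound: the commutation characterization ($\mathbf{M}$ circulant iff $\mathbf{M}$ commutes with the cyclic shift $\mathbf{S}$) gives the cleanest one-line deduction, since conjugating $\mathbf{C}\mathbf{S}=\mathbf{S}\mathbf{C}$ by $\mathbf{C}^{-1}$ immediately yields $\mathbf{S}\mathbf{C}^{-1}=\mathbf{C}^{-1}\mathbf{S}$; the spectral route via the unitary DFT diagonalization of Lemma~\ref{Lemma:Eigens} is equally valid and fits better with how the paper actually uses the result (the FFT-based computation of $\Ab_\sigma^{-1}\vv$ and $\Ab_\sigma^{-1/2}\bepsilon_k$). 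Your closing remark is also apt: the lemma tacitly assumes invertibility, which by Lemma~\ref{Lemma:Eigens} is equivalent to all eigenvalues $\lambda_j$ being nonzero --- automatic for $\Ab_\sigma$ since its eigenvalues $1+2\sigma-2\sigma\cos(2\pi j/d)$ are all at least $1$.
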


\begin{lemma}[\cite{Golub:1996}]\label{Lemma:SquareRoot-Circulant}
The square root of a circulant matrix is circulant.
\end{lemma}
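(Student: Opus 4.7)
The plan is to exploit the fact revealed by Lemma \ref{Lemma:Eigens} that every $d\times d$ circulant matrix is unitarily diagonalised by the \emph{same} DFT basis, namely the matrix $\mathbf{F}$ whose $j$-th column is $\vv_j = d^{-1/2}(1,w_j,\ldots,w_j^{d-1})^\top$. Thus every circulant $\mathbf{C}$ factorises as $\mathbf{C}=\mathbf{F}\boldsymbol{\Lambda}\mathbf{F}^{*}$ with $\boldsymbol{\Lambda}=\mathrm{diag}(\lambda_0,\ldots,\lambda_{d-1})$, and conversely any matrix of this form (in the same basis $\mathbf{F}$) is circulant.

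Given this, I would define a square root by $\mathbf{C}^{1/2}:=\mathbf{F}\boldsymbol{\Lambda}^{1/2}\mathbf{F}^{*}$, where $\boldsymbol{\Lambda}^{1/2}:=\mathrm{diag}(\sqrt{\lambda_0},\ldots,\sqrt{\lambda_{d-1}})$. For the intended application to the symmetric positive-definite matrix $\Ab_\sigma$ the eigenvalues are nonnegative reals, so the principal branch makes this choice unambiguous. A direct multiplication using $\mathbf{F}^{*}\mathbf{F}=\Ib$ then yields $\mathbf{C}^{1/2}\mathbf{C}^{1/2}=\mathbf{F}\boldsymbol{\Lambda}\mathbf{F}^{*}=\mathbf{C}$, so this is genuinely a square root.

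To finish, I would verify that $\mathbf{C}^{1/2}$ is itself circulant. The cleanest route is the characterisation that a matrix $\mathbf{M}$ is circulant iff its $(j,k)$-entry depends only on $(j-k)\bmod d$. Expanding $(\mathbf{F}\boldsymbol{\Lambda}^{1/2}\mathbf{F}^{*})_{jk} = d^{-1}\sum_{\ell=0}^{d-1}\sqrt{\lambda_\ell}\,w_\ell^{\,j-k}$ confirms the criterion, since the summand depends on $j,k$ only through $j-k$ modulo $d$. Combined with Lemma \ref{Lemma:Inverse-Circulant}, this also delivers the fact needed in the FFT implementation, namely that $\Ab_\sigma^{-1/2}$ is circulant and therefore amenable to the same FFT-based matrix-vector product scheme.

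The only subtlety, and arguably the ``hard'' part, is the non-uniqueness of matrix square roots: there are $2^d$ sign choices for the scalar roots $\sqrt{\lambda_\ell}$, each yielding a valid, but distinct, circulant square root by the same recipe. The lemma as used in the paper refers implicitly to the principal (positive-definite) square root of $\Ab_\sigma$, for which the construction above is canonical and the statement is sharp.
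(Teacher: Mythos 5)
Your proof is correct. The paper itself supplies no proof of this lemma --- it is imported from \cite{Golub:1996} --- so there is no in-paper argument to compare against; your simultaneous-diagonalization argument in the common DFT eigenbasis of Lemma~\ref{Lemma:Eigens}, with the explicit check that the $(j,k)$ entry of $\mathbf{F}\boldsymbol{\Lambda}^{1/2}\mathbf{F}^{*}$ equals $d^{-1}\sum_{\ell}\sqrt{\lambda_\ell}\,w_\ell^{\,j-k}$ and hence depends only on $(j-k)\bmod d$, is the standard and complete way to establish it. Your caveat about non-uniqueness is also well taken: the claim holds for the principal square root but not for an arbitrary one (the identity is circulant yet admits non-circulant involutive square roots), and the principal branch is exactly what the paper needs for the symmetric positive definite matrix $\Ab_\sigma$ in the FFT implementation.
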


\begin{lemma}\label{Lemma:CirculantInverse}
For any circulant matrix $\mathbf{C}$ of the form in \eqref{Eq:Circulant}, and for any given vector $\vv$. Let $\vu = \mathbf{C}^{-1}\vv$, then $\vu$ can be computed by the fast Fourier transform with sublinear scaling in the following way
\begin{equation}\label{Eq:fft-matrix-vec-product}
\vu = {\rm ifft}\left(\frac{{\rm fft}(\vv) }{{\rm fft}(\bc)}\right),
\end{equation}
where $\bc$ is the first row of the matrix $\mathbf{C}$,
and the division in \eqref{Eq:fft-matrix-vec-product} is defined coordinate-wise.
\end{lemma}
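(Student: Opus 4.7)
The plan is to reduce matrix inversion against a circulant matrix to a pointwise division in the Fourier domain, by exploiting the fact that circulant matrices are simultaneously diagonalized by the DFT. There are two equivalent routes to doing this: (a) direct diagonalization using the eigenbasis from Lemma~\ref{Lemma:Eigens}, or (b) recognizing that $\mathbf{C}\vu$ equals a circular convolution and invoking the convolution theorem. I would write the proof via route (b), since it connects most directly to the $\mathrm{fft}$/$\mathrm{ifft}$ notation in the statement.

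First, I would observe that for the circulant matrix displayed in \eqref{Eq:Circulant}, the matrix--vector product $\mathbf{C}\vu$ can be written coordinate-wise as $(\mathbf{C}\vu)_i = \sum_{j=0}^{d-1} c_{(i-j)\bmod d}\, u_j$, i.e. as the circular convolution $\bc * \vu$, where $\bc$ encodes the generating sequence of $\mathbf{C}$ (I would clarify here the orientation convention so that ``first row of $\mathbf{C}$'' as written in the lemma matches the convolution kernel; if necessary one replaces $\bc$ by its appropriate cyclic reflection, but Lemma~\ref{Lemma:Eigens} ensures this only reorders the eigenvalues and does not change the argument). Next, I would apply the circular convolution theorem: $\mathrm{fft}(\bc * \vu) = \mathrm{fft}(\bc)\odot \mathrm{fft}(\vu)$, where $\odot$ denotes coordinate-wise multiplication. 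Setting $\vu = \mathbf{C}^{-1}\vv$, so that $\mathbf{C}\vu = \vv$, this gives
\begin{equation*}
\mathrm{fft}(\vv) = \mathrm{fft}(\bc)\odot \mathrm{fft}(\vu).
\end{equation*}

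Assuming $\mathrm{fft}(\bc)$ has no zero coordinate (which is exactly the invertibility of $\mathbf{C}$, since by Lemma~\ref{Lemma:Eigens} the entries of $\mathrm{fft}(\bc)$ are, up to reordering, precisely the eigenvalues $\lambda_0,\ldots,\lambda_{d-1}$ of $\mathbf{C}$), I would divide coordinate-wise and apply $\mathrm{ifft}$ to both sides, yielding
\begin{equation*}
\vu = \mathrm{ifft}\!\left(\frac{\mathrm{fft}(\vv)}{\mathrm{fft}(\bc)}\right),
\end{equation*}
which is \eqref{Eq:fft-matrix-vec-product}. The cost is dominated by two FFTs of length $d$ plus one coordinate-wise division, giving an $O(d\log d)$ algorithm as claimed.

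The main obstacle is not analytical but notational: making sure that the ``first row'' convention in the statement aligns with the convolution kernel induced by the matrix action. The cleanest way to sidestep this is to prove the identity in the eigenbasis $\{\vv_j\}$ from Lemma~\ref{Lemma:Eigens}: since $\mathbf{C} = F^* \Lambda F$ where $F$ is the (unitary) DFT matrix and $\Lambda = \mathrm{diag}(\lambda_0,\ldots,\lambda_{d-1})$, one has $\mathbf{C}^{-1}\vv = F^* \Lambda^{-1} F \vv$, which is exactly the formula above once one recognizes $F\vv = \mathrm{fft}(\vv)$, $F\bc$ gives the vector of $\lambda_j$'s, and $F^{-1} = F^*$ corresponds to $\mathrm{ifft}$. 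This spectral viewpoint also certifies, via Lemma~\ref{Lemma:Inverse-Circulant} and Lemma~\ref{Lemma:SquareRoot-Circulant}, that the same FFT-based procedure applies verbatim to $\mathbf{C}^{-1/2}\vv$ by replacing $\mathrm{fft}(\bc)$ with $\sqrt{\mathrm{fft}(\bc)}$ coordinate-wise --- a point worth noting since LS-SGLD needs both $\Ab_\sigma^{-1}$ and $\Ab_\sigma^{-1/2}$ applied to vectors.
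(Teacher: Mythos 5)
Your proposal is correct and follows essentially the same route as the paper's own (one-line) proof: identify $\mathbf{C}\vu$ with the circular convolution $\bc * \vu$ and apply the convolution theorem to get $\mathrm{fft}(\vv) = \mathrm{fft}(\bc)\cdot\mathrm{fft}(\vu)$. Your additional care about the first-row orientation convention, the invertibility condition $\mathrm{fft}(\bc)$ having no zero coordinate, and the spectral viewpoint via Lemma~\ref{Lemma:Eigens} are all sound refinements of the same argument.
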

\begin{proof}
Since $\mathbf{C}$ is a circulant matrix we have $\vv = \mathbf{C}\vu = \bc * \vu$, therefore ${\rm fft}(\vv) = {\rm fft}(\bc)\cdot {\rm fft}(\vu)$.
\end{proof}

\subsubsection{Fast Algorithm for Computing the Square Root of Laplacian Smoothing}
We will derive an FFT-based algorithm for computing
$\Ab_\sigma^{-1/2}\boldsymbol{\epsilon}_k$ in this subsection.
According to Lemmas~\ref{Lemma:Inverse-Circulant} and \ref{Lemma:SquareRoot-Circulant}, $\Ab_\sigma^{-1/2}$ is circulant. Note $\Ab_\sigma^{-1}$ is positive definite, we denote its eigen-decomposition as
$$
\Ab_\sigma^{-1} = \Qb\Lambda \Qb^{-1},
$$
where $\Qb=[\vv_1, \vv_2, \cdots, \vv_d]^T$ with $\vv_i$ being the eigenvector associated with the eigenvalue $\lambda_i > 0$, and $\Lambda = {\rm diag}(\lambda_1, \lambda_2, \cdots, \lambda_d)$.
Therefore, we have
\begin{equation}\label{Eq:SqrtMatrix}
\Ab_\sigma^{-1/2} = \Qb\sqrt{\Lambda}\Qb^{-1},
\end{equation}
where $\sqrt{\Lambda} = {\rm diag}(\sqrt{\lambda_1}, \sqrt{\lambda_2}, \cdots, \sqrt{\lambda_d})$.
\medskip

\noindent Furthermore, note that $\Ab_\sigma$ is symmetric, therefore $\Qb^{-1} = \Qb^T$. It follows that we can compute $\Ab^{-1/2}_\sigma$ without inverting the matrix $\Qb$. By the fact that $\Ab_\sigma^{-1/2}$ is circulant, we have $\Ab_\sigma^{-1/2}\boldsymbol{\epsilon}_k = {\rm ifft}({\rm fft}(\bb)\cdot {\rm fft}(\boldsymbol{\epsilon}_k))$, where $\bb$ is the first row of $\Ab_\sigma^{-1/2}$.
\medskip

\begin{remark}
In computing \eqref{Eq:SqrtMatrix}, there is no need to store the matrix $\Qb$,  according to Lemma~\ref{Lemma:Eigens}, each row of $\Qb$ and $\sqrt{\Lambda}$ can be written down explicitly which enables us to compute $\Ab_\sigma^{-1/2}$ quickly with negligible memory overhead and scalable to very high dimensional problems.
\end{remark}

\section{Main Results}\label{section:Theorems}
We first make the following three assumptions regarding the function $f(\xb)$.
\begin{assumption}[Dissipativeness]\label{assump:dissipative}
For any $\xb\in\RR^d$, there exist constants $m$ and $b$ such that
\begin{align*}
\la\nabla f(\xb),\xb\ra\ge m\|\xb\|_2^2 - b.
\end{align*}
This assumption has been widely made to study the convergence of Langevin based sampling algorithms \citep{mattingly2002ergodicity,raginsky2017non,xu2018global,zou2019sampling}, which is essential to guarantee the convergence of the continuous-time Langenvin dynamics \eqref{eq:sde_langevin}.


\end{assumption}

\begin{assumption}[smoothness]\label{assump:smooth}
For any $\xb,\yb\in\RR^d$, there exists a positive constant $M$ such that for all $i=1,\dots,n$, it holds that
\begin{align*}
\|\nabla f_i(\xb) - \nabla f_i(\yb)\|_2\le M\|\xb - \yb\|_2.
\end{align*}
Unlike Assumption \ref{assump:dissipative}, Assumption \ref{assump:smooth} is made for all component function $f_i(\xb)$.

\end{assumption}
\begin{assumption}[Bounded Variance]\label{assump:bound_var} For any $\xb \in \RR^d$, there exists a constant $\omega$ such that the variance of the stochastic gradient is bounded as follows,
\begin{align*}
\EE[\|\nabla f_i(\xb) - \nabla f(\xb)\|_2^2]\le d \omega^2.
\end{align*}

\end{assumption}

\begin{definition}[Logarithmic Sobolev inequality]
Let $\mu$ be a probability measure, then we say $\mu$ satisfies logarithmic Sobolev inequality with constant $\lambda$ if
for any smooth function $g$, the following holds:
\begin{align*}
\int g^2\log g^2 \dd\mu - \int g^2 \dd \mu \log \int g^2\dd \mu \le \lambda \int \|\nabla g\|_2^2 \dd \mu.
\end{align*}
\end{definition}
\medskip
\noindent Then the following proposition states that if the function $f(\cdot)$ satisfies Assumptions \ref{assump:dissipative} and \ref{assump:smooth}, the target density $\pi\propto e^{-f(\xb)}$ satisfies Logarithmic Sobolev inequality.
\begin{proposition}[\citet{raginsky2017non}]\label{prop:sobo}
Under Assumptions \ref{assump:dissipative} and \ref{assump:smooth}, the target density $\pi\propto e^{-f(\xb)}$ satisfies Logarithmic Sobolev inequality with some constant $\lambda>0$.
\end{proposition}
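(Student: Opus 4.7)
The plan is to derive the logarithmic Sobolev inequality by combining a semi-convexity bound coming from Assumption \ref{assump:smooth} with a global drift/Lyapunov condition coming from Assumption \ref{assump:dissipative}, in the spirit of the Bakry--\'Emery and Cattiaux--Guillin--Wu frameworks for non-log-concave Gibbs measures. The intuition is that although $f$ need not be convex, dissipativity forces $\pi\propto e^{-f(\xb)}$ to have Gaussian-like tails, while $M$-Lipschitz continuity of each $\nabla f_i$ implies $\nabla^2 f \succeq -MI$ (in the distributional sense), which rules out pathological local curvature.

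As a preliminary step, I would verify that $\pi$ is a well-defined probability measure. Integrating dissipativity along a ray in the direction of any unit vector $\vu$ gives $f(T\vu) \ge f(\vu) + \tfrac{m}{2}(T^2-1) - b\log T$ for $T\ge 1$, so $e^{-f}$ has Gaussian-type tails and is integrable over $\RR^d$.

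The main step is a Lyapunov construction for the overdamped Langevin generator $\cL\phi = -\la\nabla f, \nabla\phi\ra + \beta^{-1}\Delta\phi$. Taking $V(\xb) = \exp(\alpha\|\xb\|_2^2/2)$ for small $\alpha>0$, a direct computation gives
\begin{align*}
\frac{\cL V(\xb)}{V(\xb)} \;=\; -\alpha\la\nabla f(\xb),\xb\ra \;+\; \frac{\alpha d}{\beta} \;+\; \frac{\alpha^2\|\xb\|_2^2}{\beta}.
\end{align*}
By Assumption \ref{assump:dissipative} this is at most $-\alpha(m - \alpha/\beta)\|\xb\|_2^2 + \alpha b + \alpha d/\beta$. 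Picking $\alpha\in(0, m\beta)$ makes the leading coefficient negative, which yields a drift inequality of the form $\cL V \le -\theta V + K\,\mathbf{1}_{B(\vzero,R)}$ for some $\theta,K,R>0$.

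Combining this drift condition with the lower Hessian bound $\nabla^2 f\succeq -MI$ furnished by Assumption \ref{assump:smooth}, I would then invoke the Cattiaux--Guillin--Wu criterion (which itself rests on the Bakry--\'Emery semigroup approach together with a Poincar\'e-to-LSI boosting argument) to conclude that $\pi$ satisfies LSI with some constant $\lambda>0$. The hard part is not any individual step but rather extracting an \emph{explicit} quantitative value of $\lambda$ in terms of $m,b,M,\beta,d$: the CGW constant enters through an implicit Poincar\'e constant of the invariant measure, and the dependence on dimension can be delicate. Since the proposition only asserts existence of some $\lambda>0$, however, this implicitness is harmless for the present statement.
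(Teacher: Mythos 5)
Your proposal is correct and follows essentially the same route as the paper, which does not reprove this proposition but imports it directly from \citet{raginsky2017non}; that reference establishes the LSI exactly as you sketch, by verifying a Lyapunov drift condition of the form $\mathcal{L}V \le (-c_1\|\xb\|_2^2 + c_2)V$ from dissipativity and combining it with the lower Hessian bound $\nabla^2 f \succeq -M\Ib$ implied by smoothness, via the Cattiaux--Guillin--Wu criterion. Your computation of $\mathcal{L}V/V$ and the choice $\alpha \in (0, m\beta)$ are exactly the standard verification, and your remark that only existence (not an explicit value) of $\lambda$ is needed matches how the paper uses the result.
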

\noindent It has been shown in \citet{durmus2017nonasymptotic,raginsky2017non} that if the function $f(\xb)$ is smooth and strongly convex (which is stronger than Assumption \ref{assump:dissipative}), the logarithmic Sobolev constant $\lambda$ is an universal constant. However, if the function $f(\xb)$ is nonconvex, in the worst case the logarithm Sobolev constant $\lambda$ can have exponential dependency on the problem dimension $d$ and inverse temperature $\beta$ \citep{bovier2004metastability,raginsky2017non}.

\subsection{Convergence Analysis of Sampling from Log-concave Densities}

In this subsection, we 
assume that the target density is log-concave, which is equivalent to the following assumption on the function $f(\xb)$. 
\begin{assumption}[Convexity]\label{assump:convex} 
For any $\xb, \yb \in \RR^d$, it holds that
\begin{align*}
f(\xb) - f(\yb)\ge \la\nabla f(\yb), \xb - \yb\ra.
\end{align*}

\end{assumption}

Then we are ready to establish the convergence rate of LS-SGLD for sampling from log-concave densities, which is stated in the following theorem.
\begin{theorem}\label{thm:main_theory_convex}
Under Assumptions \ref{assump:dissipative}, \ref{assump:smooth}, \ref{assump:bound_var} and \ref{assump:convex}, if set the step size $\eta \le C m\beta^{-1}/M^2$ for some sufficiently small constant $C$, there exist constants $c_0\in[\|\Ab_\sigma\|_2^{-1}, 1]$, $\gamma_1\in[\|\Ab_\sigma\|_2^{-2}, 1]$ and $\gamma_2 = d^{-1}\sum_{i=1}^d(1+2\sigma - 2\sigma \cos(2\pi i/d))^{-1}$ such that the output of LS-SGLD satisfies,
\begin{align}\label{eq:bound_convex}
\cW_2(\PP(\xb_{K}), \pi) \le \bigg(\frac{2\gamma_1 K\eta^2\beta d\omega^2}{B}\bigg)^{1/2} + \big[ 8\gamma_2 K\eta^2\cdot(K+1) \beta d\eta \big]^{1/2} + \big[2\lambda\big(\beta f(0) + \log(\Lambda)\big)\big]^{1/2}\cdot e^{-c_0K\eta/(2\beta \lambda)},
\end{align}
where $\Lambda = \int_{\RR^d} e^{-\beta f(\xb)} \dd\xb$ and $\lambda$ denotes the logarithmic Sobolev constant of the target distribution $\pi\propto e^{-\beta f(\xb)}$.
\end{theorem}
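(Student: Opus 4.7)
The plan is to decompose the sampling error via triangle inequality into three pieces corresponding exactly to the three terms on the right-hand side of \eqref{eq:bound_convex}. Let $\nu_K$ denote the law of $\xb_K$ from LS-SGLD, let $\tilde\nu_K$ denote the law of the analogous LS-GLD iterate using the full gradient $\nabla f$ in place of the mini-batch estimator $\gb_k$, and let $\mu_t$ denote the law of the continuous-time LS-LD $\bX_t$ in \eqref{eq:langevin_cont} started from $\delta_0$. Then I would write
\begin{align*}
\cW_2(\nu_K,\pi) \le \cW_2(\nu_K,\tilde\nu_K) + \cW_2(\tilde\nu_K,\mu_{K\eta}) + \cW_2(\mu_{K\eta},\pi),
\end{align*}
and attribute each term to the stochastic-gradient variance, the Euler--Maruyama discretization, and the continuous ergodicity, respectively.

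For the first term I would use synchronous coupling: run LS-SGLD and LS-GLD with the same Gaussian noise $\bepsilon_k$ and the same initialization. The difference $\xb_k-\tilde\xb_k$ is driven only by $\Ab_\sigma^{-1}(\gb_k-\nabla f(\xb_k))$. Expanding $\EE\|\xb_{k+1}-\tilde\xb_{k+1}\|_2^2$ and using convexity of $f$ (Assumption~\ref{assump:convex}) together with $M$-smoothness to absorb cross terms, plus Assumption~\ref{assump:bound_var} to bound $\EE\|\gb_k-\nabla f(\xb_k)\|_2^2\le d\omega^2/B$, I would obtain a recursion whose solution telescopes to the first term, with $\gamma_1$ appearing as $\|\Ab_\sigma^{-1}\|_2^2$ (or, refined, the operator norm squared on the noise direction).

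For the second term I would again couple synchronously, using the same Brownian motion $\bB_t$ to drive the continuous LS-LD and letting $\tilde\nu_K$ be the law of the piecewise-constant interpolation at time $K\eta$. The error $\tilde\bX_{K\eta}-\bX_{K\eta}$ then satisfies an SDE whose drift is $\Ab_\sigma^{-1}(\nabla f(\bX_s)-\nabla f(\tilde\bX_{\lfloor s/\eta\rfloor\eta}))$. Standard Itô calculus, $M$-smoothness, and the identity $\EE\|\bX_s-\bX_{\lfloor s/\eta\rfloor\eta}\|_2^2 \lesssim \eta^2\EE\|\nabla f\|_2^2 + 2\beta^{-1}\eta\,\Tr(\Ab_\sigma^{-1})$ produce the $K\eta^3 d\beta\gamma_2$ scaling, with $\gamma_2=d^{-1}\Tr(\Ab_\sigma^{-1})$ expressible explicitly via the eigenvalues $1+2\sigma-2\sigma\cos(2\pi i/d)$ from Lemma~\ref{Lemma:Eigens}.

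For the third term I would invoke the logarithmic Sobolev inequality from Proposition~\ref{prop:sobo}. The generator of the LS-LD is $\cL g = -\la\Ab_\sigma^{-1}\nabla f,\nabla g\ra + \beta^{-1}\Tr(\Ab_\sigma^{-1}\nabla^2 g)$, so de~Bruijn's identity gives $\tfrac{\dd}{\dd t}D_{KL}(\mu_t\|\pi) = -\beta^{-1}\int\la\Ab_\sigma^{-1}\nabla\log(\mu_t/\pi),\nabla\log(\mu_t/\pi)\ra\dd\mu_t$, and since the smallest eigenvalue of $\Ab_\sigma^{-1}$ is $\|\Ab_\sigma\|_2^{-1}\le c_0$, LSI yields exponential decay of KL at rate $c_0/(\beta\lambda)$. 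Talagrand's $T_2$ inequality then converts this to $\cW_2(\mu_{K\eta},\pi)\le\sqrt{2\lambda D_{KL}(\mu_0\|\pi)}\,e^{-c_0 K\eta/(2\beta\lambda)}$, with $D_{KL}(\delta_0\|\pi)$ handled by the standard smoothing argument giving $\beta f(0)+\log\Lambda$.

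The main obstacle will be the coupling argument in the first two steps: the preconditioner $\Ab_\sigma^{-1}$ destroys the isotropy used in ordinary SGLD analyses, so the telescoping recursion has to be carried out in the $\Ab_\sigma$-norm (or equivalently weighted by eigenvalues of $\Ab_\sigma^{-1}$) and then translated back to the $\ell_2$-norm; this is precisely where the spectral constants $\gamma_1$ and $\gamma_2$ enter, and verifying the claimed ranges $\gamma_1\in[\|\Ab_\sigma\|_2^{-2},1]$ and the closed form of $\gamma_2$ will require careful bookkeeping of which eigenvalue of $\Ab_\sigma^{-1}$ appears where. A secondary but nontrivial point is choosing the step size $\eta\le Cm\beta^{-1}/M^2$ small enough that the convexity-plus-smoothness contraction dominates the expansive effect introduced by $\Ab_\sigma^{-1}$ in the synchronous coupling recursion.
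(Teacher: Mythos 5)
Your proposal is correct in its essential ingredients, but it organizes the discretization analysis differently from the paper. The paper uses only a two-term split, $\cW_2(\PP(\xb_K),\pi)\le \cW_2(\PP(\xb_K),\PP(\bX_{K\eta}))+\cW_2(\PP(\bX_{K\eta}),\pi)$, and handles both error sources inside a single lemma via a \emph{per-step} Pythagorean decomposition: writing $\xb_K=\cS_\eta\xb_{K-1}$ and comparing it to $\cG_\eta\xb_{K-1}$ (one full-gradient step from the \emph{same} random iterate), the cross term vanishes exactly because the mini-batch noise is conditionally unbiased, so the stochastic-gradient variance enters additively per step; the remaining piece is split by Young's inequality with parameter $\alpha=K$ into a one-step integrator error ($4\gamma_2\beta^{-1}d\eta^3$) plus a contraction term controlled by the convexity-based non-expansiveness of the continuous flow in the $\Ab_\sigma$-norm. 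Your global three-term triangle inequality through an intermediate full-gradient chain $\tilde\nu_K$ is a legitimate alternative and arguably cleaner conceptually, and your treatment of the ergodicity term (preconditioned de~Bruijn identity, lower-bounding the $\Ab_\sigma^{-1}$-weighted Fisher information by $c_0$ times the standard one, LSI plus Otto--Villani) is exactly the paper's Lemma~\ref{lemma:ergodicity}. Two caveats on your middle term, though. First, your claimed ``$K\eta^3$'' scaling for the integrator error is too optimistic: with only non-expansiveness (convexity, not strong convexity), the $K$ one-step errors of $L^2$-size $\eta^{3/2}$ accumulate linearly in $\cW_2$, giving $K^2\eta^3$ in squared distance --- which is precisely the $K(K+1)\eta^3$ appearing in \eqref{eq:bound_convex}; obtaining $K\eta^3$ would require either strict contraction or orthogonality of the local errors, neither of which holds here. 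Second, for that middle term you invoke only ``standard It\^{o} calculus and $M$-smoothness,'' but without also using the convexity-based non-expansiveness of the flow (the paper's Lemma~\ref{lemma:contraction}) a naive Gronwall bound produces a factor $e^{cMK\eta}$ that would ruin the polynomial dependence; you need the same contraction device in the $\Ab_\sigma$-norm that you already use for the first term.
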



\begin{table}[!t]
\centering
\fontsize{10}{10}\selectfont
\begin{threeparttable}
\caption{The values of $\gamma_2$ corresponding to some $\sigma$ and $d$.}\label{Beta-Table}
\begin{tabular}{cccccc}
\toprule[1.0pt]
$\sigma$   &\ \ \ \ \ \ \ \  1 \ \ \ \ \ \ \ \ &\ \ \ \ \ \ \ \  2\ \ \ \ \ \ \ \  &\ \ \ \ \ \ \ \  3\ \ \ \ \ \ \ \  &\ \ \ \ \ \ \ \  4\ \ \ \ \ \ \ \  &\ \ \ \ \ \ \ \  5\ \ \ \ \ \ \ \   \cr
\midrule[0.8pt]
$d=1000$    & 0.268 & 0.185 & 0.149 & 0.128 & 0.114 \cr
$d=10000$   & 0.268 & 0.185 & 0.149 & 0.128 & 0.114 \cr
$d=100000$  & 0.268 & 0.185 & 0.149 & 0.128 & 0.114 \cr
\bottomrule[1.0pt]
\end{tabular}
\end{threeparttable}
\end{table}


\begin{remark}
We emphasize that the the three terms on the R.H.S. of \eqref{eq:bound_convex} have their respective meanings. In particular, the first and second terms represent the discretization errors introduced by stochastic gradient estimator and numerical integrator of \eqref{eq:langevin_cont}, respectively.  The third term represents the ergodicity of the continuous-time Markov process \eqref{eq:langevin_cont}, which characterizes the mixing time of  LS-LD \eqref{eq:langevin_cont}. Moreover, we remark here that the convergence rate of LS-GLD (LS-SGLD with full gradient) can be directly implied from
Theorem \ref{thm:main_theory_convex} by removing the first term on the R.H.S. of \eqref{eq:bound_convex}. 
\end{remark}
Based on Theorem \ref{thm:main_theory_convex}, we can also derive the convergence rate of SGLD in the same setting by setting $\Ab_\sigma = \Ib$ (i.e., $\sigma = 0$), which implies that the constants $\gamma_1, \gamma_2$ and $c_0$ in Theorem \ref{thm:main_theory_convex} are all $1$'s. We formally state the convergence result of SGLD in the following corollary.

\begin{corollary}
Under the same assumptions in Theorem \ref{thm:main_theory_convex},  the output of standard SGLD, denoted by $\yb_K$, satisfies
\begin{align}\label{eq:bound_convex_sgld}
\cW_2(\PP(\yb_{K}), \pi) \le \bigg(\frac{2 K\eta^2 d\omega^2}{B}\bigg)^{1/2} + \big[ 8 K\eta^2\cdot(K+1) \beta^{-1} d\eta \big]^{1/2} + \big[2\lambda\big(\beta f(0) + \log(\Lambda)\big)\big]^{1/2}\cdot e^{-c_0K\eta/(2\beta \lambda)},
\end{align}
\end{corollary}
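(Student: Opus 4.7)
The plan is to recognize that standard SGLD is precisely the special case of LS-SGLD obtained by turning off the Laplacian preconditioner, i.e.\ setting $\sigma = 0$ so that $\Ab_\sigma = \Ib$. Under this specialization the iteration \eqref{Eq:LS-SGLD} reduces to the classical SGLD update, and all hypotheses of Theorem~\ref{thm:main_theory_convex} (Assumptions~\ref{assump:dissipative}--\ref{assump:convex} together with the step-size restriction $\eta \le Cm\beta^{-1}/M^2$) are unchanged by this choice, since none of them reference $\sigma$. Hence the corollary amounts to substituting the $\sigma=0$ values of the three problem-dependent constants $c_0$, $\gamma_1$, and $\gamma_2$ into the bound \eqref{eq:bound_convex}.

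First I would evaluate $\|\Ab_\sigma\|_2$ at $\sigma=0$: since $\Ab_0 = \Ib$ we have $\|\Ab_0\|_2 = 1$, so $\|\Ab_0\|_2^{-1} = \|\Ab_0\|_2^{-2} = 1$. The stated intervals $c_0 \in [\|\Ab_\sigma\|_2^{-1}, 1]$ and $\gamma_1 \in [\|\Ab_\sigma\|_2^{-2}, 1]$ therefore collapse to the singleton $\{1\}$, forcing $c_0 = \gamma_1 = 1$. Next I would substitute $\sigma = 0$ into the closed-form expression $\gamma_2 = d^{-1}\sum_{i=1}^d(1+2\sigma - 2\sigma\cos(2\pi i/d))^{-1}$; every summand reduces to $1$, yielding $\gamma_2 = 1$. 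As a sanity check, Table~\ref{Beta-Table} shows $\gamma_2$ monotonically decreasing in $\sigma$ (from $0.268$ at $\sigma=1$ down to $0.114$ at $\sigma=5$), consistent with $\gamma_2 \to 1$ as $\sigma \to 0$.

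Plugging $c_0 = \gamma_1 = \gamma_2 = 1$ into \eqref{eq:bound_convex} then yields \eqref{eq:bound_convex_sgld} term by term: the stochastic-gradient discretization term, the Euler--Maruyama discretization term, and the log-Sobolev ergodicity term all specialize directly, and the exponential rate $c_0/(2\beta\lambda)$ simplifies to $1/(2\beta\lambda)$. There is no genuine technical obstacle; the only point to verify is that the proof of Theorem~\ref{thm:main_theory_convex} remains valid when $\Ab_\sigma = \Ib$, which it does because the argument uses $\Ab_\sigma$ only through its extremal singular values and the trace-type spectral average captured by $\gamma_2$, all of which are well-defined (indeed trivial) in that limit. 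Hence the corollary follows as an immediate specialization of the main theorem.
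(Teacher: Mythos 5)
Your proposal is correct and is exactly the paper's argument: the authors obtain this corollary by setting $\Ab_\sigma=\Ib$ (i.e.\ $\sigma=0$) in Theorem~\ref{thm:main_theory_convex}, which forces $c_0=\gamma_1=\gamma_2=1$, and then reading off the bound. The only caveat is a bookkeeping mismatch in the paper itself (the powers of $\beta$ in the first two terms of \eqref{eq:bound_convex} versus \eqref{eq:bound_convex_sgld} do not agree), which is an inconsistency in the stated theorem/corollary rather than a gap in your reasoning.
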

\begin{remark}
We can now compare the convergence rates of LS-SGLD and SGLD. In terms of the discretization error, it is clear that LS-SGLD is strictly better since the constants $\gamma_1$ and $\gamma_2$ are strictly less than $1$ (some values of $\gamma_2$ corresponding to different choices of $\sigma$ and $d$ can be found in Table \ref{Beta-Table}). In terms of the ergodicity of the continuous-time Markov process (the third terms in \eqref{eq:bound_convex} and \eqref{eq:bound_convex_sgld}), LS-SGLD is worse than SGLD due to the fact that $c_0\le1$. Therefore, there exists a trade-off between the discretization error and the ergodicity rate of LS-SGLD. In our experiments we will conduct numerical evaluations of these error terms and demonstrate that LS-LD and LD achieve similar ergodicity performance (i.e., mixing time), but LS-SGLD can achieve a significantly smaller discretization error.
\end{remark}

\subsection{Convergence Analysis of Sampling from Non-log-concave Densities}

Here we consider the setting where the target density is no longer log-concave. The following theorem states the convergence rate of LS-SGLD in $2$-Wasserstein distance.

\begin{theorem}\label{thm:main_theory}
Under Assumptions \ref{assump:dissipative}, \ref{assump:smooth} and \ref{assump:bound_var}, if set the step size $\eta \le C m\beta^{-1}/M^2$ for some sufficiently small constant $C$, there exist constants $c_0\in[\|\Ab_\sigma\|_2^{-1}, 1]$, $\gamma_1\in[\|\Ab_\sigma\|_2^{-2}, 1]$, $\gamma_2 = d^{-1}\sum_{i=1}^d(1+2\sigma - 2\sigma \cos(2\pi i/d))^{-1}$ and $\bar \Gamma=\big(3/2 + 2(b + \beta^{-1} d)\big)^{1/2}$ such that the output of LS-SGLD satisfies,
\begin{align}\label{eq:bound_nonconvex}
\cW_2\big(\PP(\xb_K),\pi\big)&\le \bar \Gamma (K\eta)^{1/2} \bigg[\bigg(\frac{\gamma_1\beta d\omega^2}{B}K\eta+2\gamma_2 M^2 dK\eta^2\bigg)^{1/2} + \bigg(\frac{\gamma_1\beta d\omega^2}{B}K\eta+2\gamma_2 M^2 dK\eta^2\bigg)^{1/4}\bigg]\notag\\
&\quad+ \big[2\lambda\big(\beta f(0) + \log(\Lambda)\big)\big]^{1/2}\cdot e^{-c_0K\eta/(2\beta \lambda)},
\end{align}
where $\Lambda = \int_{\RR^d} e^{-\beta f(\xb)} \dd\xb$ and $\lambda$ denotes the logarithmic Sobolev constant of the target distribution $\pi\propto e^{-\beta f(\xb)}$.
\end{theorem}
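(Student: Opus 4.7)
The plan is to decompose the Wasserstein distance through the continuous-time LS-LD as an intermediate reference process. Let $\bX_t$ denote the solution of \eqref{eq:langevin_cont} from the same initialization $\xb_0 = 0$, and let $\tilde\bX_t$ denote the piecewise-frozen-drift continuous interpolation of the LS-SGLD iterates (so that $\tilde\bX_{k\eta} = \xb_k$). By the triangle inequality,
\begin{align*}
\cW_2(\PP(\xb_K), \pi) \le \cW_2\bigl(\PP(\tilde\bX_{K\eta}),\,\PP(\bX_{K\eta})\bigr) + \cW_2\bigl(\PP(\bX_{K\eta}),\,\pi\bigr),
\end{align*}
which splits the task into a discretization error (first term, yielding the first line of \eqref{eq:bound_nonconvex}) and an ergodicity rate of the continuous LS-LD (second term, yielding the exponential third term).

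For the discretization term I would follow the Girsanov-KL approach of \citet{raginsky2017non} but with the modified diffusion coefficient $\sqrt{2\beta^{-1}}\,\Ab_\sigma^{-1/2}$. Computing the Radon-Nikodym derivative between the laws of $\tilde\bX$ and $\bX$ on the path space $[0,K\eta]$ and applying Novikov gives
\begin{align*}
D_{KL}\bigl(\PP(\tilde\bX_{[0,K\eta]})\,\|\,\PP(\bX_{[0,K\eta]})\bigr) \le \frac{\beta}{4}\sum_{k=0}^{K-1}\int_{k\eta}^{(k+1)\eta}\EE\bigl\|\Ab_\sigma^{-1/2}\bigl(\gb_k - \nabla f(\tilde\bX_t)\bigr)\bigr\|_2^2\,\dd t,
\end{align*}
where one factor of $\Ab_\sigma^{-1/2}$ is absorbed against the diffusion coefficient when inverting the quadratic form. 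Splitting the integrand as $\gb_k - \nabla f(\xb_k) + \nabla f(\xb_k) - \nabla f(\tilde\bX_t)$, Assumption \ref{assump:bound_var} on the stochastic piece produces the factor $\gamma_1 = d^{-1}\mathrm{tr}(\Ab_\sigma^{-2})$ after diagonalizing $\Ab_\sigma$ in its eigenbasis, while Assumption \ref{assump:smooth} combined with a short-time bound on $\EE\|\tilde\bX_t - \xb_k\|_2^2$ handles the Euler piece and introduces $\gamma_2 = d^{-1}\mathrm{tr}(\Ab_\sigma^{-1})$ through the variance of the $\Ab_\sigma^{-1/2}$-scaled Brownian increments. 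Converting this KL bound to $\cW_2$ via the Bolley-Villani weighted CKP inequality then produces exactly the composite $(\cdot)^{1/2} + (\cdot)^{1/4}$ structure; Assumption \ref{assump:dissipative} supplies the exponential moment $\int e^{c\|\xb\|_2^2}\dd\pi < \infty$ that yields the prefactor $\bar\Gamma = (3/2+2(b+\beta^{-1}d))^{1/2}$, with the extra $(K\eta)^{1/2}$ coming from the standard uniform-in-time second moment estimate on $\tilde\bX$.

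For the ergodicity term I would analyze the entropy dissipation of the Fokker-Planck equation of \eqref{eq:langevin_cont}. Since Proposition \ref{Prop:StationaryDistribution} identifies $\pi\propto e^{-\beta f(\xb)}$ as the invariant measure, a direct computation of $H_t := D_{KL}(p_t\|\pi)$ gives
\begin{align*}
\dot H_t = -\beta^{-1}\int \bigl\la\nabla\log(p_t/\pi),\,\Ab_\sigma^{-1}\nabla\log(p_t/\pi)\bigr\ra\,p_t\,\dd\xb \le -\frac{c_0}{\beta}\int \|\nabla\log(p_t/\pi)\|_2^2\,p_t\,\dd\xb,
\end{align*}
with $c_0 = \lambda_{\min}(\Ab_\sigma^{-1}) = \|\Ab_\sigma\|_2^{-1}$. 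Applying Proposition \ref{prop:sobo} and Gr\"onwall's inequality delivers $H_{K\eta} \le H_0\,e^{-c_0 K\eta/(\beta\lambda)}$, and Talagrand's $T_2$ inequality (which follows from log-Sobolev) converts this to $[2\lambda H_0]^{1/2}e^{-c_0 K\eta/(2\beta\lambda)}$; the initial entropy is estimated by $H_0 \le \beta f(0) + \log\Lambda$ at the Dirac initialization $\xb_0 = 0$, giving precisely the third term of \eqref{eq:bound_nonconvex}.

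The main obstacle I anticipate is the careful bookkeeping of the two distinct roles played by Laplacian smoothing. In the drift, $\Ab_\sigma^{-1}$ enters the Fokker-Planck dissipation only through its smallest eigenvalue, producing the possibly-slower ergodicity factor $c_0 \le 1$; in the diffusion, $\Ab_\sigma^{-1/2}$ enters the Girsanov quadratic form and so trace-style averages $\gamma_1,\gamma_2$ (which can be substantially smaller than $1$) control the discretization. Keeping these pathways separate, while ensuring the cancellation $\Ab_\sigma^{-1/2}\Ab_\sigma^{-1/2}=\Ab_\sigma^{-1}$ that preserves $\pi\propto e^{-\beta f}$ as invariant (so that the \emph{same} log-Sobolev constant $\lambda$ as for the unmodified LD is what appears in Proposition \ref{prop:sobo}), is the delicate part of the argument and is what makes the discretization-vs-mixing trade-off noted after Theorem \ref{thm:main_theory_convex} visible in the final bound.
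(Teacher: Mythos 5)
Your proposal is correct and follows essentially the same route as the paper: the same triangle-inequality decomposition through the continuous-time LS-LD, the same Girsanov/KL-divergence bound on the discretization error converted to $\cW_2$ via the Bolley--Villani inequality with the exponential moment from dissipativity, and the same Fokker--Planck entropy-dissipation plus log-Sobolev/Talagrand argument for the ergodicity term with $c_0$ arising from $\lambda_{\min}(\Ab_\sigma^{-1})$. The only detail you elide is that the interpolated LS-SGLD process is not Markov because of the stochastic-gradient randomness, so before applying Girsanov the paper first replaces it by the mimicking diffusion of \citet{gyongy1986mimicking} with drift $\EE[\Gb_s\,|\,\hat\bX_s=\tilde\bX_s]$, which has the same one-time marginals.
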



\begin{remark}
The convergence rate of SGLD in $2$-Wasserstein distance can also be obtained from Theorem \ref{thm:main_theory} by setting $\Ab_\sigma = \Ib$, which implies that the constants $c_0,\gamma_1, \gamma_2$ become all $1$'s. It can be verified that the resulting convergence rate matches that proved in \cite{raginsky2017non}.  As a clear comparison, the discretization error induced by both stochastic gradient and numerical integrator of LS-SGLD (the first bracket term of \eqref{eq:bound_nonconvex}) is smaller than that of SGLD, while the ergodicity term of LS-SGLD (the last term of \eqref{eq:bound_nonconvex}) is worse than that of SGLD. Again, we will experimentally demonstrate that the mixing time of LS-LD is not much slower compared with LD, but LS-SGLD can achieve significantly smaller discretization error than SGLD. 
\end{remark}


\section{Numerical Results}\label{section:Experiments}
In this section, we will perform numerical experiments on sampling 2D distributions, training Bayesian Logistic Regression (BLR), and training Convolutional Neural Nets (CNNs). Throughout all the experiments, we regard SGLD \citep{Welling:2011SGLD} and preconditioned SGLD (pSGLD) \citep{Li:2016pSGLD}, which considers local curvature of $f(\bx)$ with RMSProp type of adaptive step size, as benchmarks. In addition, we also incorporated the precondition technique, proposed in \cite{Li:2016pSGLD}, into LS-SGLD, which leads to a variant of LS-SGLD, namely Laplacian smoothing precondictioned SGLD (LS-pSGLD). 


\subsection{Numerical Simulations on Synthetic Dataset}

\subsubsection{2D Gaussian Distribution}
As a simple illustration, we apply the proposed LS-SGLD and LS-pSGLD to sample a 2D Gaussian distribution, studied in \cite{Chen:2014SGHMC}, with the probability density function $e^{f(\bx)} = \exp\left(\frac{1}{2}\bx^T \Sigma^{-1}\bx\right)$ with $\bx \in \mathbb{R}^2$ where $\Sigma=\begin{bmatrix} 
1    & 0.9 \\
0.9  & 1
\end{bmatrix}$. 
We let the prior to be $p(\bx) = \mathcal{N}(0, \nu^2 \Ib)$ with $\nu = 1$. For both SGLD and pSGLD, we let the step size to be $0.19$ which is obtained based on the grid search. For LS-SGLD and LS-pSGLD, we let the Laplacian smoothing parameter $\sigma$ to be $0.1$ with step size to be either $0.19$ or $0.19(1+4\sigma)^{1/4}$. It is worth noting that in 2D $\Ab_\sigma$ becomes
\begin{equation}
\label{Eq:2DLaplacian}
\Ab_\sigma =  \begin{bmatrix} 
1+\sigma & -\sigma \\
-\sigma       & 1+\sigma
\end{bmatrix}.
\end{equation}

\noindent To measure the quality of samples, we consider the MSE between the true and reconstructed covariance matrices; and we calculate the autocorrelation time of the samples to verify the efficacy of different samplers in sampling the correlated distribution above. The autocorrelation time is defined as following
\begin{equation}
\label{Eq:AutocorrelationTime}
\tau = \frac{1}{2} + \sum_{t=1}^\infty \frac{A(t)}{A(0)},
\end{equation}
where $A(t) = \mathbb{E}\left[(\bar{\phi}_\eta - \phi(\bx_0))(\bar{\phi}_\eta - \phi(\bx_t)) \right]$ for any given bounded function $\phi(\bx)$, $\bar{\phi} = \int_\chi \phi(\bx)p(\bx|\mathcal{D})d\bx$ is the population mean of $\phi$, and the empirical mean $\hat{\phi} = \frac{1}{S_T}\sum_{t=1}^T \eta_t\phi(\bx_t)$ with $\eta_t$ being the step size at the $t$-th step and $S_T = \sum_{t=1}^T \eta_t$.
\medskip

\noindent Figure~\ref{Fig:Gaussian:Low-Condition} (a) and (c) plot the first $600$ samples from the target distribution by different samplers. We use the same step size $0.19$ for all the four samplers in the experiments shown in Fig.~\ref{Fig:Gaussian:Low-Condition} (a), and use a larger step size $0.19(1+4\sigma)^{1/4}$ for LS-SGLD and LS-pSGLD in experiments shown in Fig.~\ref{Fig:Gaussian:Low-Condition} (c). Qualitatively, Laplacian smoothing can enhance the quality of samples, and the improvement becomes more remarkable when we use a larger step size. Next, we draw $2\times 10^5$ samples from the target distribution by different samplers and we use these samples to reconstruct the covariance matrix. For SGLD and pSGLD, we use a set of step size $\{0.19, 0.19\times 0.8, 0.19\times 0.8^2, 0.19\times 0.8^3, 0.19\times 0.8^4\}$. For LS-SGLD and LS-pSGLD we test two sets of step sizes: (i) $\{0.19, 0.19\times 0.8, 0.19\times 0.8^2, 0.19\times 0.8^3, 0.19\times 0.8^4\}$, (ii) $\{0.19, 0.19\times 0.8, 0.19\times 0.8^2, 0.19\times 0.8^3, 0.19\times 0.8^4\} \times (1+4\sigma)^{1/4}$. Figure~\ref{Fig:Gaussian:Low-Condition} (b) and (d) plot the autocorrelation time v.s. reconstruction error of the covariance matrix. In (b) we use the same set of step sizes for all the four samplers, and in (d) we use a larger step size for LS-SGLD and LS-pSGLD. We see that reconstruction errors can be reduced significantly when Laplacian smoothing is used. Moreover, Laplacian smoothing can also reduce the autocorrelation time in pSGLD.

\begin{figure}[!ht]
\centering
\begin{tabular}{cc}
\includegraphics[width=0.59\columnwidth]{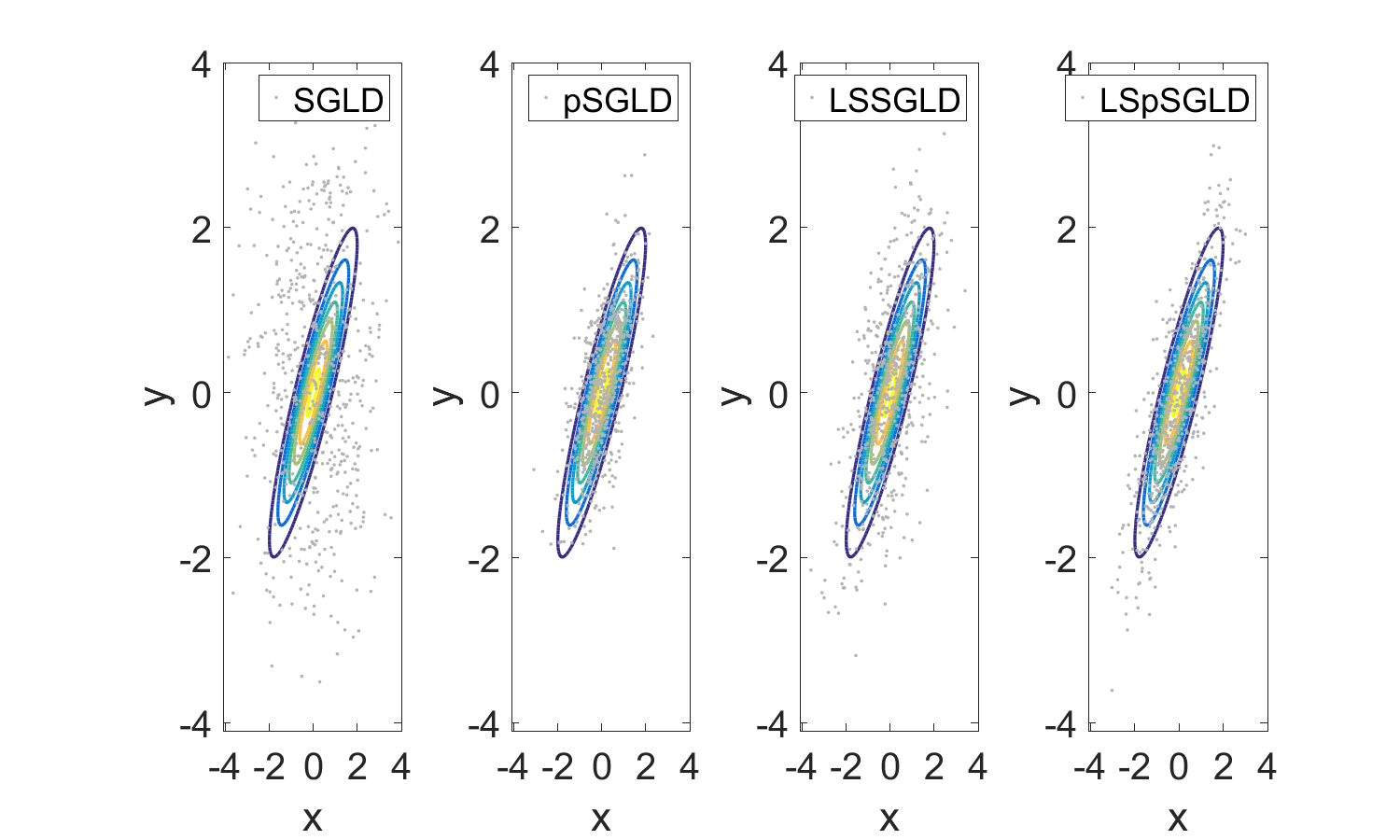}&
\includegraphics[width=0.35\columnwidth]{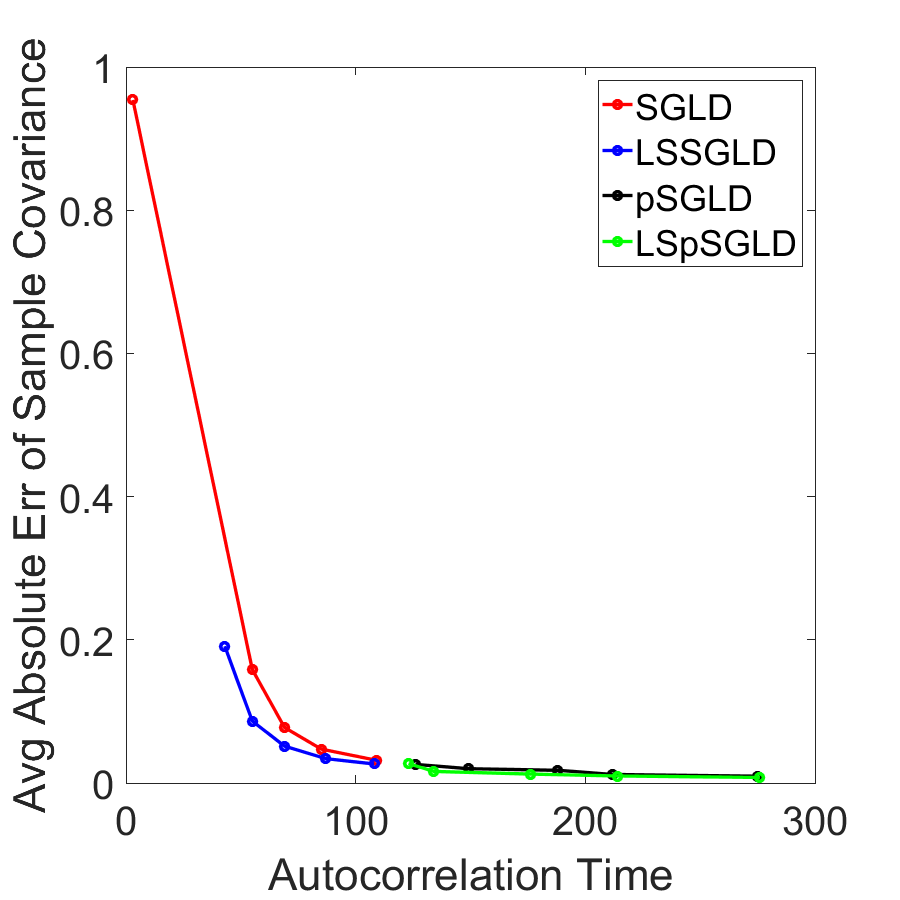}\\
(a) Samples & (b) Error v.s. ACT\\
\includegraphics[width=0.59\columnwidth]{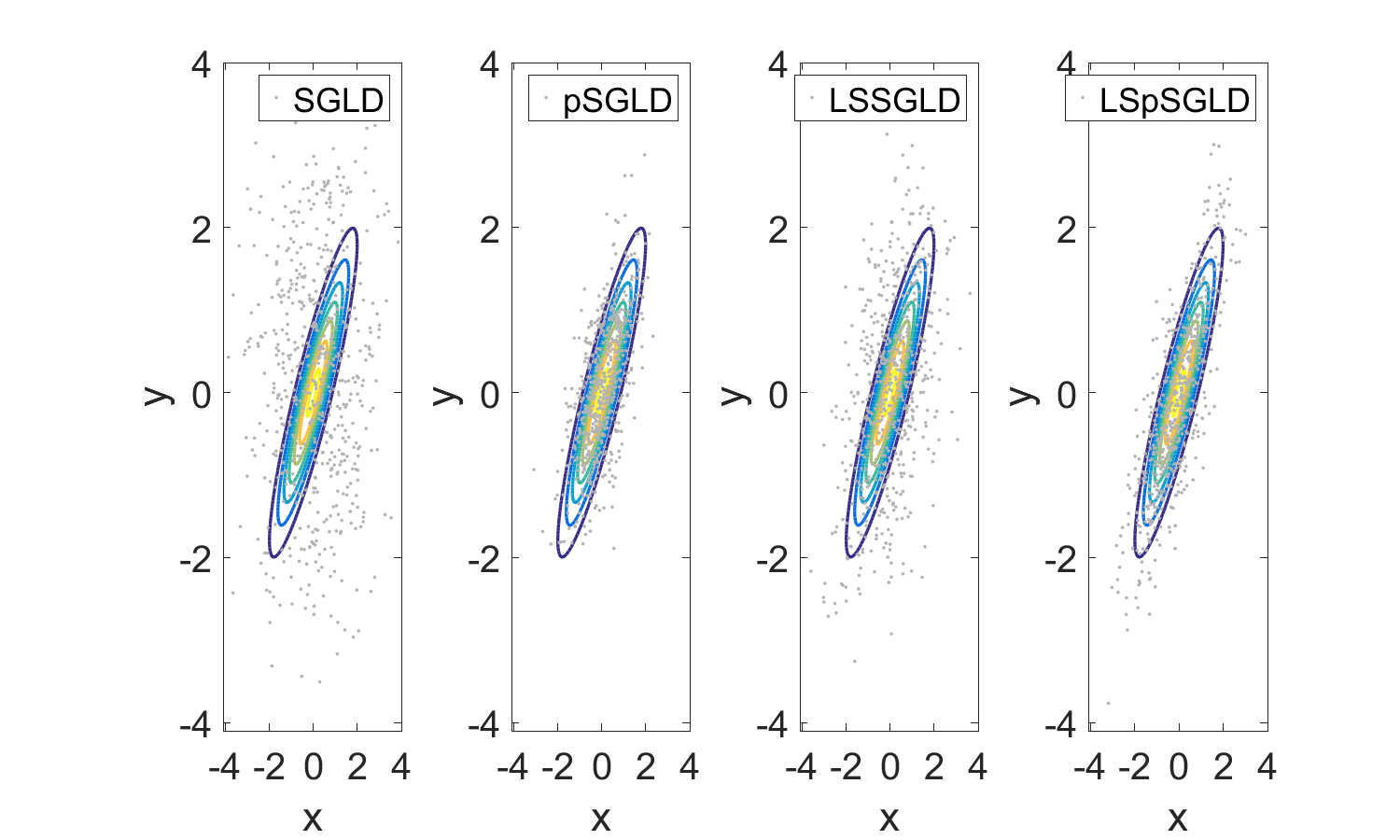}&
\includegraphics[width=0.35\columnwidth]{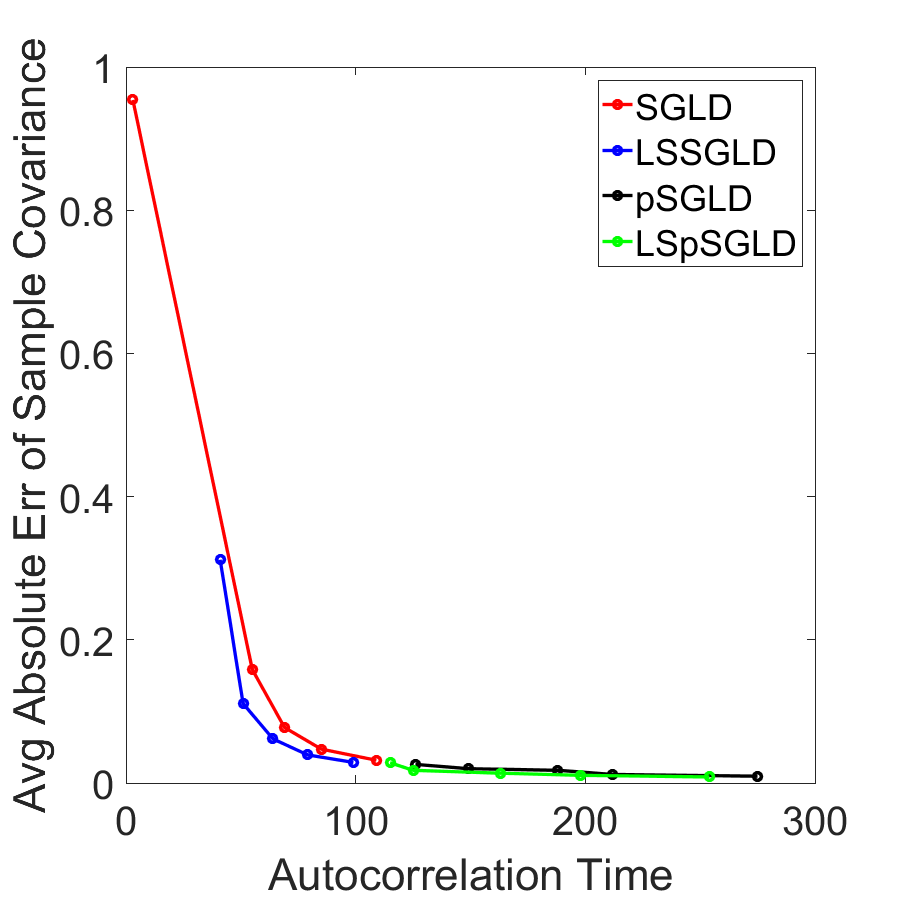}\\
(c) Samples & (d) Error v.s. ACT\\
\end{tabular}
\caption{Contrasting sampling of a 2D Gaussian distribution with covariance matrix $\Sigma$, with $\Sigma_{11}=\Sigma_{22}=1$ and $\Sigma_{12}=\Sigma_{21}=0.9$, using different samplers. (a) and (c): the first $600$ samples draw by SGLD, pSGLD. (b) and (d): LS-SGLD, and LS-pSGLD. In (c) and (d), we multiply the step size for LS-SGLD and LS-pSGLD by a factor $(1+4\sigma)^{1/4}$.}
\label{Fig:Gaussian:Low-Condition}
\end{figure}

\subsubsection{2D Gaussian Mixture Distribution}\label{subsection:GaussianMixture}
In this subsection, we compare the performance of SGLD, pSGLD, LS-SGLD and LS-pSGLD on a Gaussian mixture distribution. In particular, we consider the target distribution 
$$
\pi \propto \exp{\left(-f(\bx)\right)} = \exp{\left( -\frac{1}{n}\sum_{i=1}^nf_i(\bx) \right)},\ \ n = 500,
$$
where each component $\exp{\left(-f_i(\bx)\right)}$ is defined as
$$
\exp{\left(-f_i(\bx)\right)} = \frac{2}{3}e^{-\frac{\|\bx - \va_i\|_2^2}{2}} + \frac{1}{3}e^{-\frac{\|\bx + \va_i\|_2^2}{2}},
$$
where we sample $\va_i$ by the MCMC sampler with MH correction from the following 2D Gaussian distribution 
$$
\mathcal{N}\left(\begin{bmatrix} 
2  \\
2 
\end{bmatrix}, \begin{bmatrix} 
2 & 0 \\
0 & 2
\end{bmatrix}\right).
$$

\noindent The function $f_i(\bx)$ and its gradient can be simplified as
$$
f_i(\bx) = \frac{\|\bx - \va_i\|_2^2}{2} - \log\left( \frac{2}{3}+\frac{1}{3}\exp{\left(-2\la\va_i, \bx\ra\right)} \right),
$$
$$
\nabla f_i(\bx) = \bx - \va_i + \frac{2\va_i}{2+ \exp\left(2 \la\bx, \va_i \ra\right) }.
$$
It can be easily verified that if $\va_i$ satisfies $\|\va_i\|_2> 3/2$, the function $f_i(\bx)$ defined above is nonconvex. Moreover, it can be seen that
$$
\la\nabla f_i(\bx), \bx\ra = \|\bx\|_2^2 - \frac{ \exp\left(2 \la\bx, \va_i \ra\right)}{2 + \exp\left(2 \la\bx, \va_i \ra\right)}\la\va_i, \bx\ra \geq \frac{1}{2}\|\bx\|_2^2 - \frac{1}{2}\|\va_i\|_2^2,
$$
which suggests that the function $f_i(\bx)$ satisfies the Dissipative Assumption~\ref{assump:dissipative} with $m=\frac{1}{2}$ and $b=\frac{\|\va_i\|_2^2}{2}$, and it further implies that $f(\bx)$ is also dissipative. 
\medskip

\noindent Since it takes a large number of samples to characterize the distribution, which makes repeated experiments computationally expensive, we instead follow \citet{bardenet2017markov, zou2019sampling, zou2019stochastic} to use iterates along one Markov chain to visualize the distribution of iterates obtained by MCMC algorithms. We run the four samplers with different numbers of iteration where we set the batch size to be $10$. We plot the distributions generated by different samplers with different numbers of iterations in Fig.~\ref{Fig:Binormal:Samples}. As shown in Fig.~\ref{Fig:Binormal:Samples} (c), (f), and (u), when the number of iterations is large enough, e.g. $10^6$, the sample distributions of all the three samplers matches well with the reference distribution (sampled by ground-truth sampler, e.g., MCMC with MH step). However, when the number of iterations is not enough, there is a large discrepancy between the sample and target distributions, as shown in Fig.~\ref{Fig:Binormal:Samples} (a), (d), and (g). With a moderate number of iterations, say $5\times 10^5$, the sample distribution from LS-SGLD is better than the other two (Fig.~\ref{Fig:Binormal:Samples} (b), (e), and (d)).
\medskip

\begin{figure}[!ht]
\centering
\begin{tabular}{ccc}
\includegraphics[width=0.3\columnwidth]{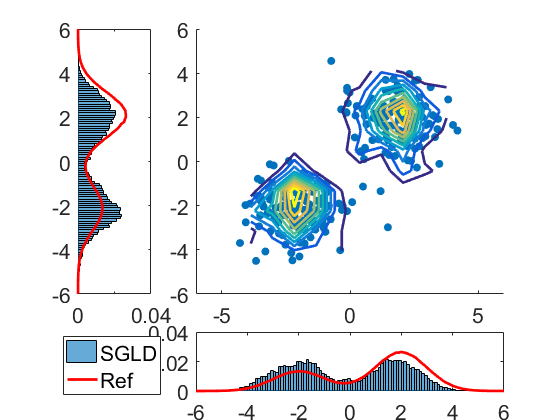}&
\includegraphics[width=0.3\columnwidth]{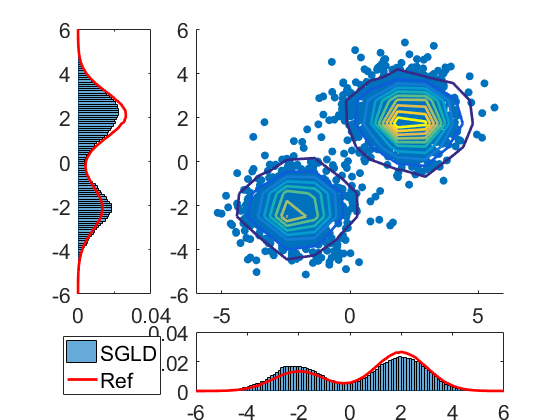}&
\includegraphics[width=0.3\columnwidth]{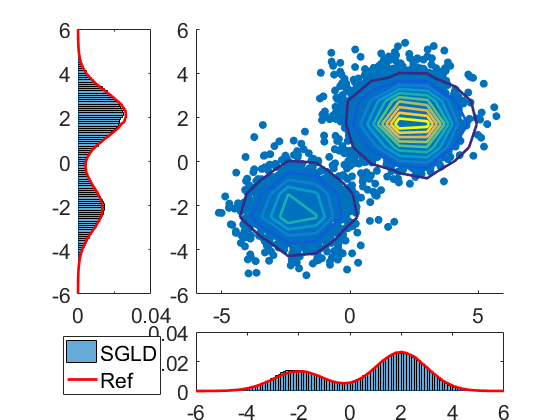}\\
(a) SGLD (1E5)  & (b) SGLD (5E5) & (c) SGLD (1E6)\\
\includegraphics[width=0.3\columnwidth]{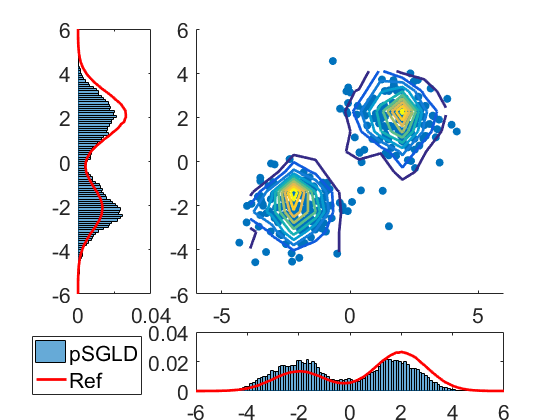}&
\includegraphics[width=0.3\columnwidth]{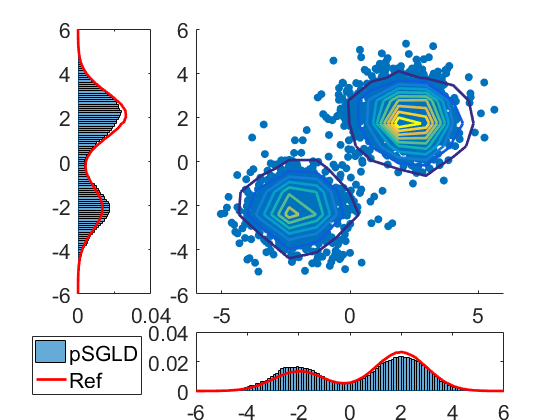}&
\includegraphics[width=0.3\columnwidth]{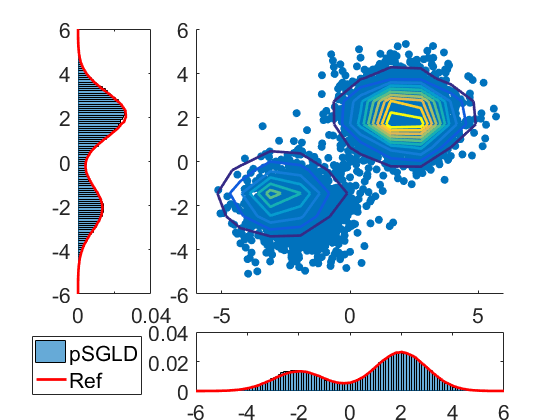}\\
(d) pSGLD (1E5) & (e) pSGLD (5E5) & (f) pSGLD (1E6)\\
\includegraphics[width=0.3\columnwidth]{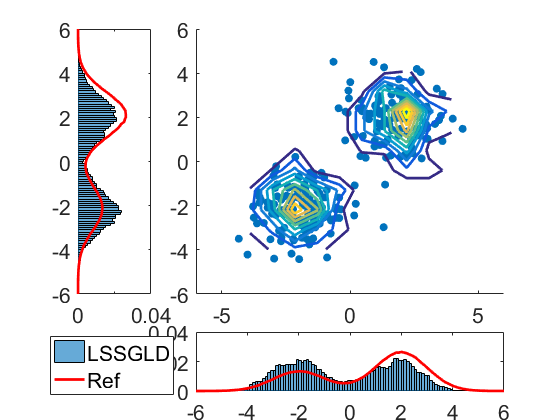}&
\includegraphics[width=0.3\columnwidth]{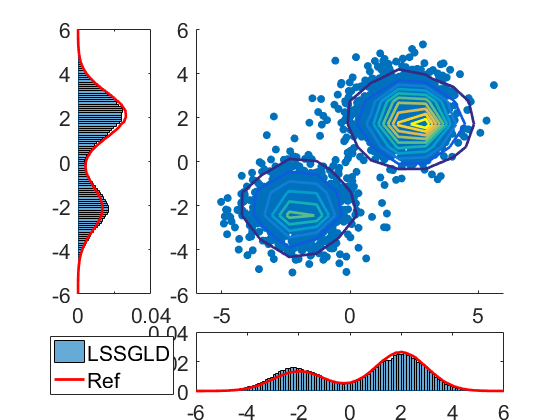}&
\includegraphics[width=0.3\columnwidth]{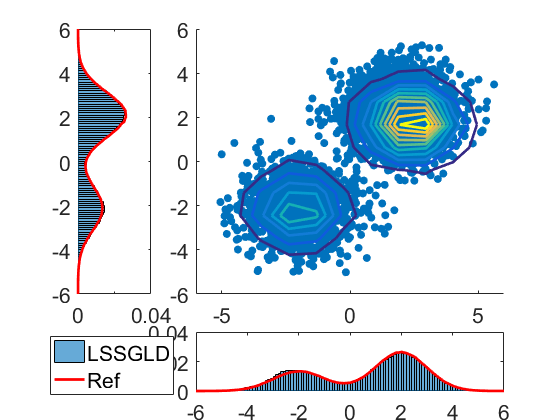}\\
(g) LS-SGLD (1E5) & (h) LS-SGLD (5E5) & (i) LS-SGLD (1E6)\\
\end{tabular}
\caption{Kernel density plots of samples generated from Gaussian mixture distribution using SGLD, LS-SGLD, pSGLD and LS-pSGLD. We set $\sigma=1.0$ for LS-SGLD and LS-pSGLD.}
\label{Fig:Binormal:Samples}
\end{figure}

\noindent Let us further evaluate the sample quality in a quantitative approach. We first apply the MCMC with MH step to sample $10K$ samples from the above target distribution. Then we apply SGLD, pSGLD, and LS-SGLD to sample different numbers of samples, respectively, from the target distribution. We measure the 2-Wasserstein distance between the last $10K$ samples of the different number of samples by the above three stochastic gradient samplers with the MH samples. We list the Wasserstein distance between the last $10K$ samples of different numbers of samples from different samplers with the MH samples in Table~\ref{Table:Wasserstein-Distance}. These results show that the samples generated by LS-SGLD are consistently closer to the samples from MCMC with MH correction.

\begin{table}[!ht]
\centering
\fontsize{10.}{10.}\selectfont
\begin{threeparttable}
\caption{2-Wasserstein distance between samples sampled by MCMC with Metropolis–Hastings correction and different stochastic gradient Langevin dynamics.}\label{Table:Wasserstein-Distance}
\begin{tabular}{cccc}
\toprule[1.0pt]
\ \ \ \ \ \ \ \ \ \ \ \# of Samples\ \ \ \ \ \ \ \ \ \ \   &\ \ \ \ \ \ \ \ \ \ \  1E5\ \ \ \ \ \ \ \ \ \ \   &\ \ \ \ \ \ \ \ \ \ \  5E5\ \ \ \ \ \ \ \ \ \ \   &\ \ \ \ \ \ \ \ \ \ \   9E5\ \ \ \ \ \ \ \ \ \ \  \cr
\midrule[0.8pt]
SGLD    & 0.695 & 6.726 & 0.285 \cr
pSGLD   & 5.364 & 0.286 & 6.728 \cr
LS-SGLD  & 0.421 & 0.414 & 0.418 \cr
\bottomrule[1.0pt]
\end{tabular}
\end{threeparttable}
\end{table}

\subsubsection{Comparison of the mixing time between LD and LS-LD}
\noindent To verify that Laplacian smoothing does not slow down the mixing rate of the continuous-time Markov process, we conduct the following experiments. First, we apply the MCMC with Metropolis-Hasting correction step to sample $10$K points, respectively, from the following two distributions.
\begin{itemize}
\item Gaussian distribution with the following probability density function
\begin{equation}\label{Eq:PDF}
p(x, y) = \frac{1}{9\pi}\exp{\left(-\left(\frac{(x-1)^2}{3^2} + \frac{(y-2)^2}{3^2}\right)\right)}.
\end{equation}

\item The Gaussian mixture distribution described in subsection~\ref{subsection:GaussianMixture}.
\end{itemize}
Second, we use either LD or LS-LD (which can be approximated by Euler-Maruyama discretization with very small step size), to draw samples from the above two distributions and use these samples to estimate the mean of the target densities. Figure~\ref{Fig:Reconstruction:MSE} plots the MSE between the true and reconstructed (from a different number of samples) means, and they show that LD and LS-LD perform similarly in reconstructing the mean of the target densities.


\begin{figure}[!ht]
\centering
\begin{tabular}{cc}
\includegraphics[width=0.45\columnwidth]{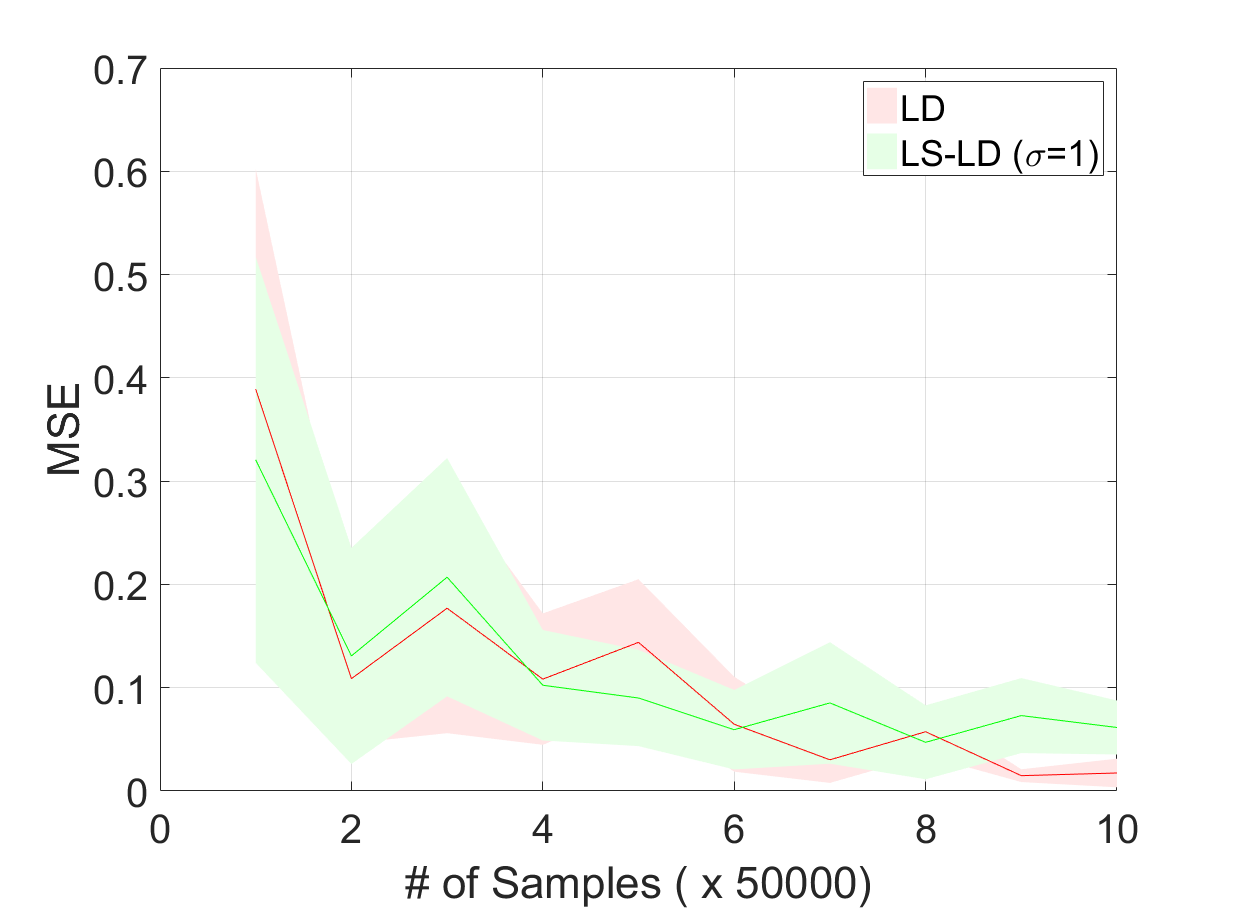}&
\includegraphics[width=0.45\columnwidth]{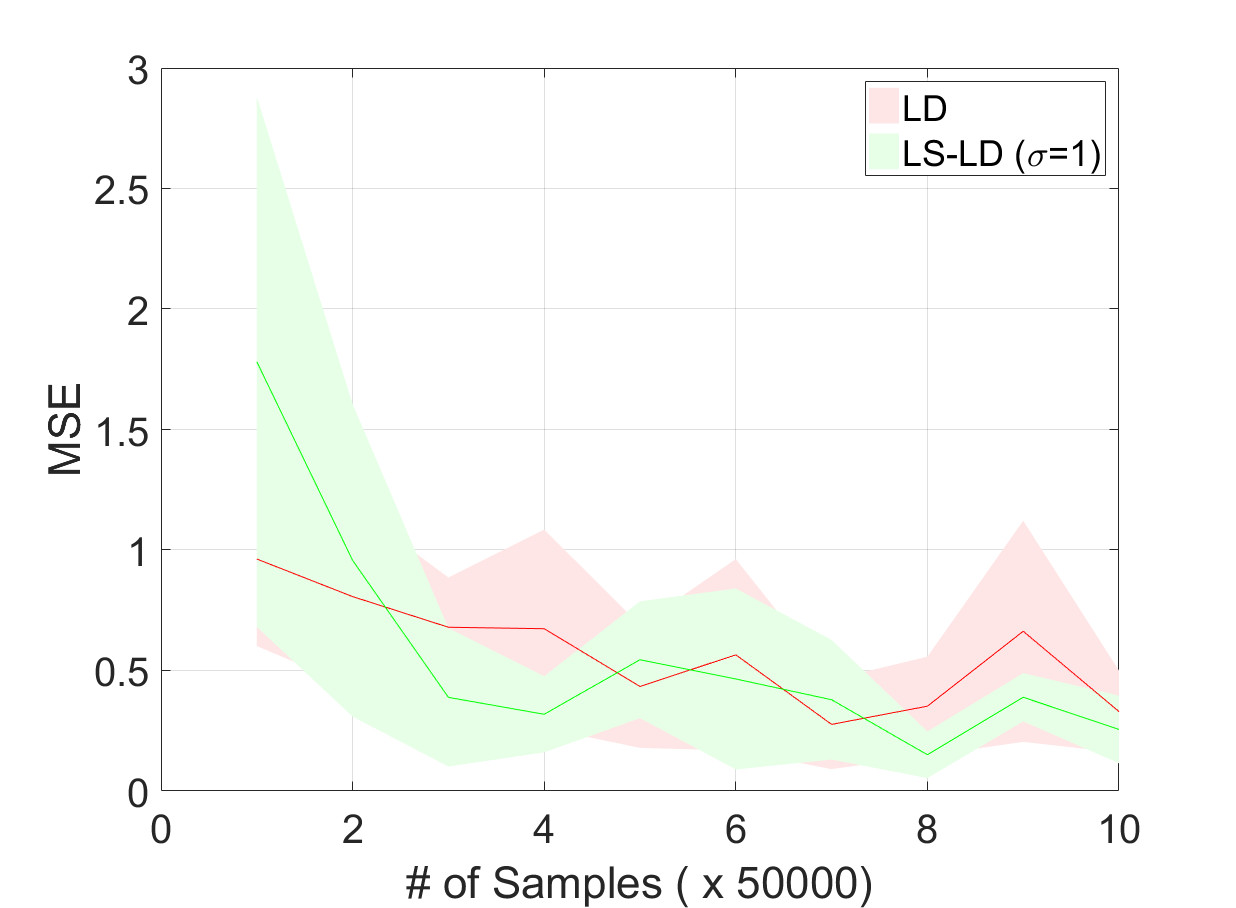}\\
\end{tabular}
\caption{MSE between the true and reconstructed means from different numbers of samples generated by LD and LS-LD (10 independent runs).
}
\label{Fig:Reconstruction:MSE}
\end{figure}

\subsection{Bayesian Logistic Regression}
Suppose we observe $n$ i.i.d samples $\{\bd_i, y_i\}_{i=1, 2, \cdots, n}$ where $\bd_i\in \RR^d$ and $y_i \in \{-1, 1\}$ denote the feature and the corresponding label of the $i$-th sample instance. The likelihood of BLR model is given by
$$
p(y_i|\bd_i, \bx) = \frac{1}{1+\exp{\left(-y_i\la\bd_i, \bx \ra \right)}}
$$
where $\bx$ is the parameter to be learned. We use a Gamma prior $p(\bx) \propto \|\bx\|_2^{-\lambda}\exp{\left(-\theta \|\bx\|_2\right)}$ with $\lambda=1$ and $\theta = 10^{-2}$. Then we formulate the logarithmic posterior distribution as follows
$$
\log\left[ p(\bx| \bd_1, \bd_2, \cdots, \bd_n; y_1, y_2, \cdots, y_n) \right] \propto -\frac{1}{n}\sum_{i=1}^n f_i(\bx),
$$
where $f_i(\bx) = n\log\left(1+e^{-y_i\la\bd_i, \bx\ra} \right) + \lambda\log\left(\|\bx\|_2\right) + \theta \|\bx\|_2$.
\medskip

\noindent We use SGLD, pSGLD, LS-SGLD, and LS-pSGLD with batch size $5$ to train a Bayesian logistic regression (BLR) model on the benchmark {\it a3a} dataset from the UCI machine learning repository \footnote{\url{https://archive.ics.uci.edu/ml/index.php}}. The {\it a3a} dataset contains $3185$ training data and $29376$ test instances, each data instance is of dimension $122$.
We use the grid search to determine the optimal learning rate for SGLD ($0.001$) and pSGLD ($0.002$), and then we multiply them by $(1+4\sigma)^{1/4}$ to get the learning rate for LS-SGLD and LS-pSGLD. We set the burn-in to be $1000$ for all these four samplers. After burn-in, we compute the moving average of the sample parameters to estimate the regression parameters $\bx$. 
We plot iteration v.s. negative log-likelihood and accuracy in Fig.~\ref{Fig:BLR}, 
and we see that Laplacian smoothing reduces the negative log-likelihood and increases the accuracy. The preconditioning accelerates mixing initially, however, the gap between sampling and target distribution is remarkably larger than the case without preconditioning.
\begin{figure}[!ht]
\centering
\begin{tabular}{cc}
\includegraphics[width=0.45\columnwidth]{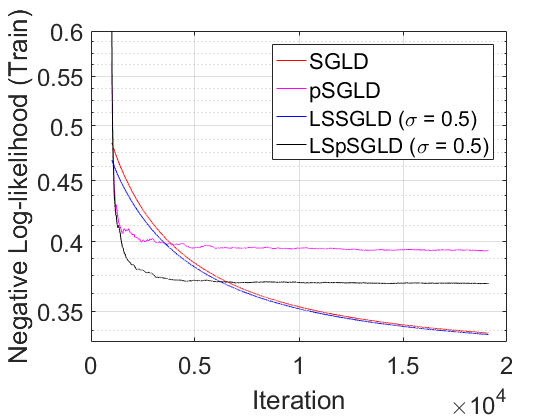}&
\includegraphics[width=0.45\columnwidth]{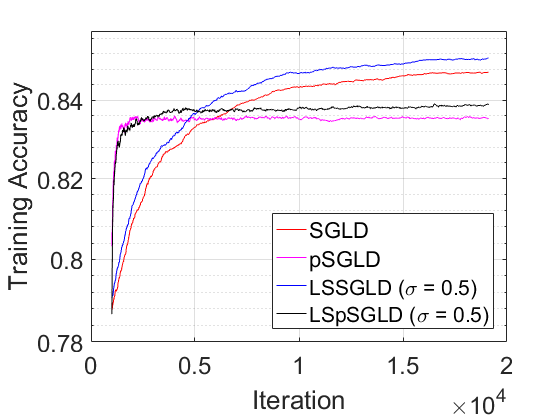}\\
(a) Log-likelihood (Training Set)  & (b) Training Accuracy \\
\includegraphics[width=0.45\columnwidth]{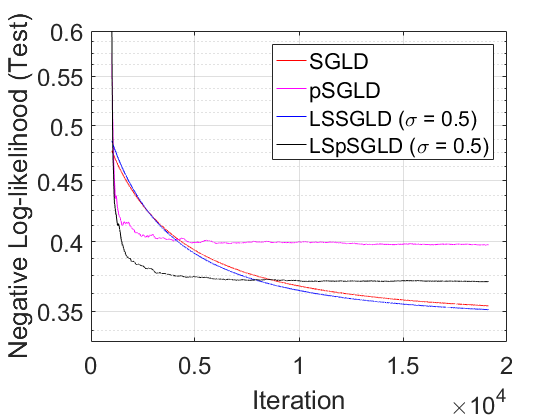}&
\includegraphics[width=0.45\columnwidth]{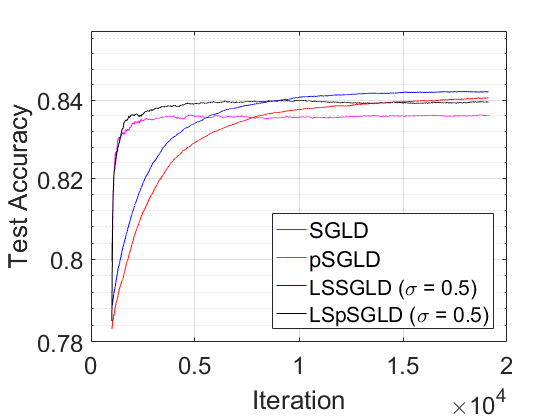}\\
(c) Log-likelihood (Test Set) & (d) Test Accuracy \\
\end{tabular}
\caption{Convergence comparison for Bayesian logistic regression, where the X-axis represents the number of iterations and Y-axis represents the negative log-likelihood/accuracy. (a) Negative log-likelihood on training dataset; (b) Accuracy on training dataset; (c) Negative log-likelihood on test dataset; (d) Accuracy on test dataset. 
}
\label{Fig:BLR}
\end{figure}


\subsubsection{Variance reduction in stochastic gradient}
We numerically verify the efficiency of variance reduction on BLR for {\it a3a} dataset classification. We first compute a path by full batch SGLD with the same learning rate as before, and meanwhile, we record the Laplacian smoothing gradient on each point along the path. Then we compute the Laplacian smoothing stochastic gradients on each point along the path by using different batch size and $\sigma$. We run 100 independent experiments to acquire the Laplacian smoothing stochastic gradients, and then we compute the variance of these stochastic gradients by regarding the full batch Laplacian smoothing gradient as the mean. In Table~\ref{Table:Variance}, we report the maximum variance, among all coordinates of the gradient and all points on the descent path, for each pair of batch size and $\sigma$.

\begin{table*}[!ht]
\centering
\fontsize{10.0}{10}\selectfont
\begin{threeparttable}
\caption{
The maximum variance of the stochastic gradients generated by LS-SGLD on training BLR on the {\it a3a} data.
$\sigma=0$ reduces to SGLD.}
\begin{tabular}{cccc}
\toprule
\ \ \ \ \ \ \ \ \ Batch Size\ \ \ \ \ \ \ \ \      &\ \ \ \ \ \ \ \ \ \ \ 10\ \ \ \ \ \ \ \ \ \ \   &\ \ \ \ \ \ \ \ \ \ \ 15\ \ \ \ \ \ \ \ \ \ \   &\ \ \ \ \ \ \ \ \ \ \ 50\ \ \ \ \ \ \ \ \ \ \  \cr
\midrule
$\sigma = 0$     & 7.69E-1 & 3.17E-1 & 5.69E-2\cr
$\sigma = 0.5$   & 2.56E-1 & 1.06E-2 & 1.96E-2 \cr
$\sigma = 1.0$   & 1.54E-1 & 6.37E-2 & 1.21E-2 \cr
$\sigma = 2.0$   & 8.52E-2 & 3.54E-2 & 7.04E-3 \cr
\bottomrule
\end{tabular}
\label{Table:Variance}
\end{threeparttable}
\end{table*}

\subsection{Bayesian Convolutional Neural Network}
We consider training a CNN by SGLD, pSGLD, LS-SGLD, and LS-pSGLD on the MNIST benchmark with batch size $100$, the architecture of the CNN is
\begin{eqnarray*}
\mbox{CNN:}\  {\rm input}_{28 \times 28} \rightarrow {\rm conv}_{20, 5, 2} \rightarrow
{\rm conv}_{20, 20, 5} \rightarrow {\rm fc}_{128} \rightarrow {\rm softmax}.
\end{eqnarray*}
The notation ${\rm conv}_{c, k, m}$ denotes a 2D convolutional layer with $c$ output channels, each of which is the sum of a channel-wise convolution operation on the input using a learnable kernel of size $k \times k$, it further adds ReLU nonlinearity and max-pooling with stride size $m$. ${\rm fc}_{128}$ is an affine transformation that transforms the input to a vector of dimension $128$. Finally, the tensors are activated by a multi-class logistic function.
\medskip

\noindent 
Similar to BLR, we use a Gamma prior $p(\bx) \propto \|\bx\|_2^{-\lambda}\exp{\left(-\theta \|\bx\|_2\right)}$ with $\lambda=1$ and $\theta = 5e^{-4}$. Again, we use the grid search to find the optimal step size for SGLD and pSGLD which is $0.02$ and $2e-4$, respectively. 
We multiply the optimal step size for SGLD and pSGLD by a factor $(1+4\sigma)^{1/4}$ to get the step size for LS-SGLD and LS-pSGLD, and we let $\sigma=0.5$ for Laplacian smoothing. The comparisons between different sampling algorithms are plotted in Fig.~\ref{Fig:CNN}, we see that Laplacian smoothing reduces the negative log-likelihood and increases the accuracy of both training and test datasets. The preconditioning accelerates mixing and reduces the gap between sampling and target distribution.
Here, we applied early stopping in training CNN by pSGLD and LS-pSGLD.
\begin{figure}[!ht]
\centering
\begin{tabular}{cc}
\includegraphics[width=0.45\columnwidth]{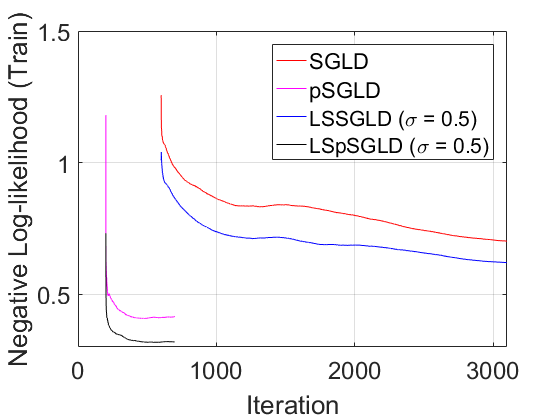}&
\includegraphics[width=0.45\columnwidth]{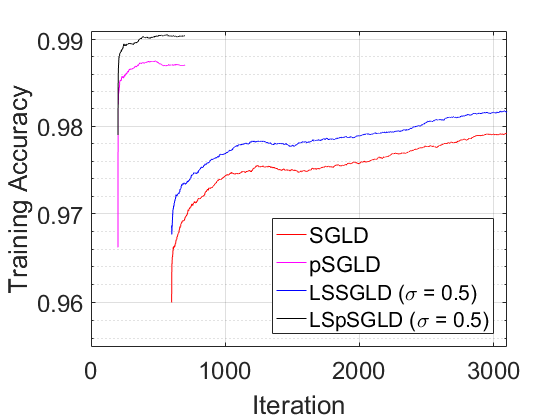}\\
(a) Log-likelihood (Training Set)  & (b) Training Accuracy \\
\includegraphics[width=0.45\columnwidth]{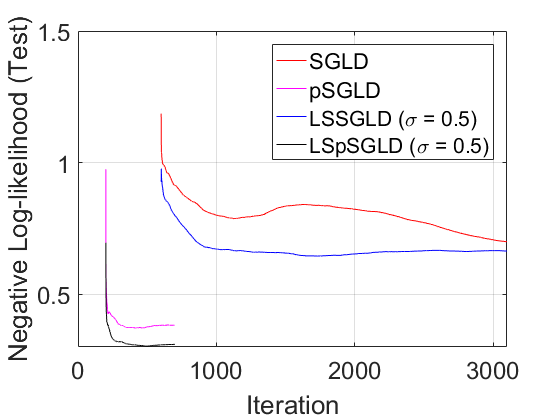}&
\includegraphics[width=0.45\columnwidth]{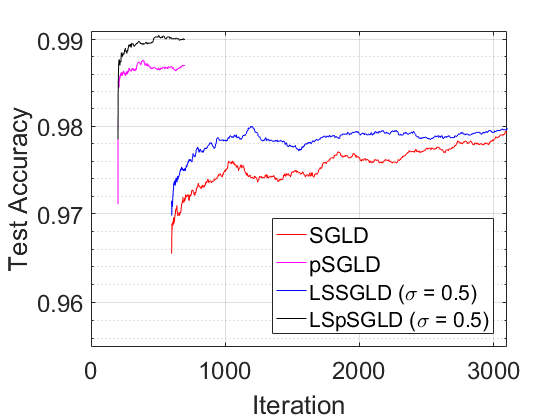}\\
(c) Log-likelihood (Test Set) & (d) Test Accuracy \\
\end{tabular}
\caption{
Convergence comparison for Bayesian convolutional neural network, where the X-axis represents the number of iterations and  Y-axis represents the negative log-likelihood/accuracy. (a) Negative log-likelihood on training dataset; (b) Accuracy on training dataset; (c) Negative log-likelihood on test dataset; (d) Accuracy on test dataset.
}
\label{Fig:CNN}
\end{figure}


\section{Conclusions}\label{section:Conclusion}
In this paper, we integrate Laplacian smoothing with Stochastic Gradient Langevin Dynamics (SGLD) to reduce the gap between the sample and target distributions. The resulting algorithm also allows us to take a larger step size. The proposed algorithm is simple to implement and the extra computation and memory costs compared with the SGLD are negligible when the Fast Fourier Transform (FFT)-based algorithms is employed to resolve the dynamics of the resulting Laplacian Smoothing SGLD (LS-SGLD). We show, both theoretically and empirically, that LS-SGLD can improve the sample quality. It is straightforward to extend Laplacian smoothing to the other Markov Chain Monte Carlo (MCMC) algorithms, e.g., the stochastic gradient Hamiltonian Monte Carlo \citep{Chen:2014SGHMC}. 



\section*{Acknowledgments}
This material is based on research sponsored by the National Science Foundation under grant number DMS-1924935 and DMS-1554564 (STROBE). The Air Force Research Laboratory under grant numbers FA9550-18-0167 and MURI FA9550-18-1-0502, the Office of Naval Research under grant number N00014-18-1-2527. QG is partially supported by the National Science Foundation under grant number SaTC-1717950.

\newpage
\appendix
\section{Missing proof in Section \ref{section:Algorithm}}
In this section, we provide the proof of Proposition \ref{Prop:StationaryDistribution}.

\begin{proof}[Proof of Proposition \ref{Prop:StationaryDistribution} ]
Let $p(\xb, t)$ be the distribution of $\bX_t$. Then we know that $p(\xb, t)$ satisfies the following Fokker-Planck equation
\begin{align}\label{eq:fokker_planck}
\frac{\partial p(\xb, t)}{\partial t} &= \frac{1}{\beta}\la\Ab_\sigma^{-1}, \nabla^2 p(\xb,t)\ra + \la\nabla, p(\xb, t)\Ab_\sigma^{-1} \nabla f(\xb)\ra\notag\\
& = \frac{1}{\beta}\la\nabla, \Ab_\sigma^{-1}\nabla p(\xb,t)\ra + \la\nabla, p(\xb, t)\Ab_\sigma^{-1} \nabla f(\xb)\ra,
\end{align}
where $\la\nabla, \mathbf{h}(\xb)\ra$ denotes the divergence of the vector field $\mathbf{h}(\xb)$.
Since the stationary distribution $\pi$ satisfies $\partial \pi/\partial t = 0$, we have
\begin{align*}
\frac{1}{\beta}\la\nabla, \Ab_\sigma^{-1}\nabla p(\xb,t)\ra + \la\nabla, p(\xb, t)\Ab_\sigma^{-1} \nabla f(\xb)\ra = 0,
\end{align*}
which further implies that $\beta^{-1} \nabla \pi + \pi\nabla f(\xb)=0$. Solving this equation directly gives $\pi \propto e^{-\beta f(\xb)}$, which completes the proof.

\end{proof}

\section{Proof of Main Theory}\label{sec:proof_main}

In order to bound sampling error between the distribution of the output of LS-SGLD and the target distribution $\pi\propto e^{-\beta f(\xb)}$, we consider a reference sequence generated by LS-LD  \eqref{eq:langevin_cont}, denoted by $\{\bX_t\}_{t\ge 0}$. Let $\bX_0 = \xb_0$, by triangle inequality, we can decompose the $2$-Wasserstein distance $\cW_2(\PP(\xb_K), \pi)$ as follows
\begin{align*}
\cW_2(\PP(\xb_K), \pi)\le \cW_2(\PP(\xb_K), \PP(\bX_{K\eta})) + \cW_2(\PP(\bX_{K\eta}), \pi).
\end{align*}
The first term on the R.H.S. stands for the discretization error of the numerical integrator, and the second term denotes the ergodicity of LS-LD \eqref{eq:langevin_cont}, which characterizes the mixing time of LS-LD.
In what follows, we first deliver the following lemma that characterizes the error term $\cW_2(\PP(\bX_{K\eta}), \pi)$. 
\begin{lemma}\label{lemma:ergodicity}
Under Assumptions \ref{assump:dissipative} and \ref{assump:smooth}, there exists a constant $c_0\in[\|\Ab_\sigma\|_2^{-1}, 1]$
\begin{align*}
\cW_2\big(\PP(\bX_{K\eta}), \pi\big)\le \big[2\lambda\big(\beta f(0) + \log(\Lambda)\big)\big]^{1/2}\cdot e^{-c_0K\eta/(2\beta \lambda)},
\end{align*}
where $\Lambda = \int_{\RR^d} e^{-\beta f(\xb)} \dd\xb$ and $\lambda$ denotes the logarithmic Sobolev constant of the target distribution $\pi\propto e^{-\beta f(\xb)}$.
\end{lemma}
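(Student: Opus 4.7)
}
The plan is to combine the classical entropy-dissipation method with the logarithmic Sobolev inequality (LSI) and Otto--Villani's $T_2$ inequality. The starting point is the Fokker--Planck equation for $p_t$, the density of $\bX_t$ under \eqref{eq:langevin_cont}, which was already derived in the proof of Proposition~\ref{Prop:StationaryDistribution}:
\begin{align*}
\partial_t p_t = \beta^{-1}\la\nabla, \Ab_\sigma^{-1}\nabla p_t\ra + \la\nabla, p_t\Ab_\sigma^{-1}\nabla f\ra.
\end{align*}
Since $\log\pi = -\beta f - \log\Lambda$, the identity $\nabla p_t = p_t\nabla\log(p_t/\pi) - \beta p_t\nabla f$ lets us rewrite the Fokker--Planck equation in the gradient-flow form $\partial_t p_t = \beta^{-1}\la\nabla, p_t \Ab_\sigma^{-1}\nabla\log(p_t/\pi)\ra$.

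The second step is to differentiate the relative entropy $H(p_t\|\pi) := \int p_t\log(p_t/\pi)\di\xb$ in time and integrate by parts:
\begin{align*}
\frac{d}{dt} H(p_t\|\pi) = -\beta^{-1}\int p_t\,\la\nabla\log(p_t/\pi),\,\Ab_\sigma^{-1}\nabla\log(p_t/\pi)\ra\di\xb \le -\beta^{-1}\|\Ab_\sigma\|_2^{-1} I(p_t\|\pi),
\end{align*}
where $I(p_t\|\pi) = \int p_t\|\nabla\log(p_t/\pi)\|_2^2\di\xb$ and we used that all eigenvalues of $\Ab_\sigma^{-1}$ are at least $\|\Ab_\sigma\|_2^{-1}$. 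Applying Proposition~\ref{prop:sobo} (LSI for $\pi$ with constant $\lambda$) to the function $g=\sqrt{p_t/\pi}$ yields $H(p_t\|\pi) \le (\lambda/4) I(p_t\|\pi)$, so Gr\"onwall's inequality gives exponential decay $H(p_t\|\pi) \le H(p_0\|\pi)\cdot e^{-4\|\Ab_\sigma\|_2^{-1} t/(\beta\lambda)}$. Invoking Otto--Villani ($T_2$ follows from LSI with matched constants) converts this to $\cW_2^2(p_t,\pi) \le 2\lambda H(p_t\|\pi)$, absorbing the remaining constants into the definition $c_0 := c\cdot\|\Ab_\sigma\|_2^{-1}$ for the appropriate universal $c$ so that $c_0\in[\|\Ab_\sigma\|_2^{-1},1]$ after taking $\sigma\ge 0$.

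The final step is to bound the initial relative entropy. Since the algorithm is initialized at the deterministic point $\xb_0=0$, one regularizes by either replacing $\delta_{\vzero}$ with a narrow Gaussian (and letting its variance vanish after applying entropy decay for any $t>0$) or by running the continuous process for an arbitrarily short time and invoking the instantaneous smoothing of the diffusion. In either case the resulting bound reads $H(p_0\|\pi) \le -\log\pi(0) = \beta f(0) + \log\Lambda$, which plugged into the $\cW_2$ bound above yields the stated estimate.

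The main technical obstacle is making the constants $c_0$, $\lambda$, and the factor~$2$ inside the square root rigorously consistent: one must carefully track the Otto--Villani constant against the precise form of LSI used in Proposition~\ref{prop:sobo}, and handle the degenerate Dirac initial condition via a smoothing/limiting argument. The anisotropy introduced by $\Ab_\sigma^{-1}$ in the diffusion is benign here because $\Ab_\sigma^{-1}$ is positive definite with a uniform lower bound $\|\Ab_\sigma\|_2^{-1}$ on its eigenvalues; this is what permits the clean reduction to the scalar entropy-dissipation inequality and explains why the only dependence of the mixing rate on $\sigma$ enters through $c_0\in[\|\Ab_\sigma\|_2^{-1},1]$.
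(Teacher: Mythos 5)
Your proposal follows essentially the same route as the paper's proof: differentiate the KL divergence along the Fokker--Planck flow, lower-bound the dissipation via the smallest eigenvalue of $\Ab_\sigma^{-1}$ to introduce $c_0$, apply the logarithmic Sobolev inequality and Gr\"onwall's inequality to get exponential entropy decay, convert to $\cW_2$ via the Talagrand/Otto--Villani inequality (Lemma~\ref{lemma:bound_KL_w2_sobo}), and evaluate the initial KL at the point mass as $\beta f(0)+\log(\Lambda)$. The only differences are bookkeeping: the paper invokes the normalization $D_{KL}\le \lambda^{-1}\Ib(\cdot\|\pi)$ from \citet{markowich1999trend} rather than your $H\le(\lambda/4)I$, and it computes $D_{KL}(\PP_0\|\pi)$ formally at the Dirac initialization where you regularize, but neither changes the substance of the argument.
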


Note that Lemma \ref{lemma:ergodicity} does not require that the target density is log-concave, which can be utilized to prove the convergence rate of LS-SGLD for sampling both log-concave and non-log-concave densities. In the following, we are going to complete the proofs of Theorems \ref{thm:main_theory_convex} and \ref{thm:main_theory}. 

\subsection{Proof of Theorem \ref{thm:main_theory_convex}}
We first provide the following lemma which proves an upper bound of the discretization error $\cW_2\big(\PP(\xb_K), P(\bX_{K\eta})\big)$ for sampling log-concave densities.
\begin{lemma}\label{lemma:discrete_error_convex}
Under Assumptions \ref{assump:dissipative}, \ref{assump:smooth}, \ref{assump:bound_var} and \ref{assump:convex}, if set the step size $\eta \le C m\beta^{-1}/M^2$ for some sufficiently small constant $C$, there exist constants $\gamma_1\in[\|\Ab_\sigma\|_2^{-2},1]$ and $\gamma_2 = d^{-1}\sum_{i=1}^d(1+2\sigma - 2\sigma \cos(2\pi i/d))^{-1} $ such that the following holds
\begin{align*}
\cW_2\big(\PP(\xb_K), P(\bX_{K\eta})\big)\le \bigg(\frac{2\gamma_1d\omega^2K\eta^2}{B}\bigg)^{1/2} +\big[8\gamma_2K(K+1) \beta^{-1} d\eta ^3\big]^{1/2}
\end{align*}
\end{lemma}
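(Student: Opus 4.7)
The plan is to construct a synchronous coupling between the discrete LS-SGLD chain $\{\xb_k\}$ and the continuous-time reference $\{\bX_t\}$ satisfying \eqref{eq:langevin_cont}, driving both by the same Brownian motion so that on every mesh interval $[k\eta,(k+1)\eta]$ the diffusion increments cancel exactly: the only contributions to the coupling gap $\bd_k := \xb_k - \bX_{k\eta}$ are the gradient mismatch and the minibatch-gradient noise. Starting from $\bd_0=0$, the coupled recursion reads
$$
\bd_{k+1} \;=\; \bd_k \;-\; \eta\Ab_\sigma^{-1}\bigl[\nabla f(\xb_k) - \nabla f(\bX_{k\eta})\bigr] \;-\; \eta\Ab_\sigma^{-1}\zeta_k \;+\; \Ab_\sigma^{-1}\!\int_{k\eta}^{(k+1)\eta}\!\bigl[\nabla f(\bX_{k\eta}) - \nabla f(\bX_s)\bigr]\di s,
$$
where $\zeta_k := \gb_k - \nabla f(\xb_k)$ is the zero-mean minibatch noise with $\EE\|\zeta_k\|_2^2 \le d\omega^2/B$ (Assumption \ref{assump:bound_var}), and I denote the final integral by $\mathbf v_k$.

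The key decision is to analyze this recursion in the $\Ab_\sigma$-weighted norm rather than the Euclidean norm. Expanding $\|\bd_{k+1}\|_{\Ab_\sigma}^2$ and using the symmetry of $\Ab_\sigma$, the leading drift cross-term simplifies to $-2\eta\langle \bd_k,\nabla f(\xb_k) - \nabla f(\bX_{k\eta})\rangle$, which is \emph{nonpositive} by the monotonicity of $\nabla f$ implied by Assumption \ref{assump:convex}, so it can be discarded. Taking the conditional expectation then removes the $\zeta_k$-linear terms, leaving
$$
\EE\|\bd_{k+1}\|_{\Ab_\sigma}^2 \;\le\; \EE\|\bd_k\|_{\Ab_\sigma}^2 \;+\; 2\EE\langle\bd_k,\mathbf v_k\rangle \;+\; \EE\bigl\|\eta[\nabla f(\xb_k) - \nabla f(\bX_{k\eta})] - \mathbf v_k\bigr\|_{\Ab_\sigma^{-1}}^2 \;+\;\eta^2 \EE\|\zeta_k\|_{\Ab_\sigma^{-1}}^2.
$$

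Each remaining contribution is then bounded separately. The minibatch-noise term yields $\eta^2\EE\|\zeta_k\|_{\Ab_\sigma^{-1}}^2 \le \gamma_1 \eta^2 d\omega^2/B$, where the constant $\gamma_1\in[\|\Ab_\sigma\|_2^{-2},1]$ encodes the $\Ab_\sigma^{-1}$-weighting of the minibatch covariance. The drift-discretization squared term is split by the elementary inequality $\|\eta\mathbf u - \mathbf v\|_{\Ab_\sigma^{-1}}^2\le 2\eta^2\|\mathbf u\|_{\Ab_\sigma^{-1}}^2+2\|\mathbf v\|_{\Ab_\sigma^{-1}}^2$; the first piece is $\le 2\eta^2 M^2\EE\|\bd_k\|_{\Ab_\sigma}^2$ by Assumption \ref{assump:smooth} and is absorbed using the step-size restriction $\eta\le Cm\beta^{-1}/M^2$, while the second is controlled by Cauchy--Schwarz in time plus smoothness:
$$
\EE\|\mathbf v_k\|_{\Ab_\sigma^{-1}}^2 \;\le\; \eta M^2\!\int_{k\eta}^{(k+1)\eta}\!\EE\|\bX_s-\bX_{k\eta}\|_{\Ab_\sigma^{-1}}^2\,\di s.
$$
A standard It\^o computation on LS-LD bounds $\EE\|\bX_s-\bX_{k\eta}\|^2$ on an interval of length $\le\eta$ by the diffusion variance $2\beta^{-1}(s-k\eta)\Tr(\Ab_\sigma^{-1}) = 2\beta^{-1}(s-k\eta)d\gamma_2$, using Lemma \ref{Lemma:Eigens} to identify $\Tr(\Ab_\sigma^{-1})/d$ with $\gamma_2 = d^{-1}\sum_i(1+2\sigma-2\sigma\cos(2\pi i/d))^{-1}$. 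This produces a per-step discretization contribution of order $\gamma_2\beta^{-1}d\eta^3$.

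The main obstacle is the cross term $2\EE\langle\bd_k,\mathbf v_k\rangle$: without strong convexity I cannot obtain a contractive recursion that would absorb $\bd_k$-dependent quantities. The trick is to split this term by Young's inequality with parameter $\alpha_k\sim 1/K$, so that the multiplicative factor $\prod_{k<K}(1+\alpha_k)\le e$ stays $O(1)$ while each discretization contribution is amplified by a factor $K$; this is precisely what generates the quadratic dependence $K(K+1)$ on the iteration count in the target bound. Summing the resulting one-step recursion from $k=0$ to $K-1$ (which telescopes up to the bounded multiplicative factor) and using $\|\Ab_\sigma^{-1}\|_2\le 1$ to pass from $\|\cdot\|_{\Ab_\sigma}$ back to $\|\cdot\|_2$ yields
$$
\EE\|\bd_K\|_2^2 \;\le\; \frac{2\gamma_1 K\eta^2 d\omega^2}{B} \;+\; 8\gamma_2 K(K+1)\beta^{-1} d\eta^3.
$$
The claim follows since the coupling definition of the 2-Wasserstein distance gives $\cW_2(\PP(\xb_K),\PP(\bX_{K\eta}))\le (\EE\|\bd_K\|_2^2)^{1/2}$, and $\sqrt{A+B}\le\sqrt A+\sqrt B$.
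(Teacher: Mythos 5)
Your proposal is correct and follows essentially the same route as the paper's proof: a synchronous coupling analyzed in the $\Ab_\sigma$-norm, with convexity killing the drift cross-term (the paper packages this as the non-expansiveness of the LS-LD flow in Lemma~\ref{lemma:contraction}), a per-step integrator error of order $\gamma_2\beta^{-1}d\eta^3$ (the paper's Lemma~\ref{lemma:discretization_error_lmc}), the minibatch-variance term $\gamma_1\eta^2 d\omega^2/B$, and Young's inequality with parameter $\alpha=K$ producing the $K(K+1)$ factor. The only differences are presentational (one explicit difference recursion versus the paper's decomposition through the one-step operators $\cS_\eta$, $\cG_\eta$, $\cL_\eta$), plus an immaterial sign slip on the integral remainder term.
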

Then we can complete the proof of Theorem \ref{thm:main_theory_convex} as follows.
\begin{proof}[Proof of Theorem \ref{thm:main_theory_convex}]
By triangle inequality and Lemmas \ref{lemma:ergodicity} and \ref{lemma:discrete_error_convex}, it is evident that
\begin{align*}
\cW_2(\PP(\xb_K), \pi)&\le \cW_2(\PP(\xb_K), \PP(\bX_{K\eta})) + \cW_2(\PP(\bX_{K\eta}), \pi)\notag\\
&\le \bigg(\frac{2\gamma_1 d\omega^2K\eta^2}{B}\bigg)^{1/2} + \big[ 8\gamma_2 K(K+1) \beta^{-1} d\eta^3 \big]^{1/2} + \big[2\lambda\big(\beta f(0) + \log(\Lambda)\big)\big]^{1/2}\cdot e^{-c_0K\eta/(2\beta \lambda)},
\end{align*}
which completes the proof.

\end{proof}

\subsection{Proof of Theorem \ref{thm:main_theory}}
Similar to the proof of Theorem \ref{thm:main_theory_convex}, we provide the following lemma that characterizes the discretization error $\cW_2\big(\PP(\xb_k), \PP(\bX_{k\eta})\big)$ for sampling from non-log-concave densities.
\begin{lemma}\label{lemma:discretization}
Under Assumptions \ref{assump:dissipative} and \ref{assump:smooth}, if set the step size $\eta \le C m\beta^{-1}/M^2$ for some sufficiently small constant $C$, there exist constants $\gamma_1\in[\|\Ab_\sigma\|_2^{-2}, 1]$, $\gamma_2 = d^{-1}\sum_{i=1}^d(1+2\sigma - 2\sigma \cos(2\pi i/d))^{-1}$ and $\bar \Gamma=\big(3/2 + 2(b + \beta^{-1} d)\big)^{1/2}$ such that the following holds,
\begin{align*}
\cW_2\big(\PP(\xb_K),\PP(\bX_{K\eta})\big)\le \bar \Gamma (K\eta)^{1/2} \bigg[\bigg(\frac{\gamma_1\beta d\omega^2}{2B}K\eta+2\gamma_2\beta M^2 dK\eta^2\bigg)^{1/2} + \bigg(\frac{\gamma_1\beta d\omega^2}{2B}K\eta+2\gamma_2\beta M^2 dK\eta^2\bigg)^{1/4}\bigg].  
\end{align*}
\end{lemma}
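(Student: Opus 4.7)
The plan is to adopt a Girsanov-based approach in the spirit of \citet{raginsky2017non}: first bound the relative entropy between the path measures of LS-SGLD (viewed as a continuous-time interpolation) and LS-LD on the interval $[0,K\eta]$, and then convert this KL bound into a $2$-Wasserstein bound via a weighted transport-entropy inequality, which is exactly what produces the characteristic $(\cdot)^{1/2}+(\cdot)^{1/4}$ structure on the right-hand side of the claim.

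First I would define the continuous-time interpolation $\{\tilde\bX_t\}_{t\ge 0}$ of the LS-SGLD iterates by the piecewise-constant-drift SDE
\begin{align*}
d\tilde\bX_t = -\Ab_\sigma^{-1}\, \gb_{\lfloor t/\eta\rfloor}\big(\tilde\bX_{\eta\lfloor t/\eta\rfloor}\big)\, dt + \sqrt{2\beta^{-1}}\,\Ab_\sigma^{-1/2}\, d\bB_t,
\end{align*}
started from $\tilde\bX_0 = \xb_0$, so that $\tilde\bX_{k\eta}\stackrel{d}{=}\xb_k$. Because LS-LD uses the same diffusion matrix $\sqrt{2\beta^{-1}}\Ab_\sigma^{-1/2}$, Girsanov's theorem yields
\begin{align*}
D_{KL}\big(\PP^{\tilde\bX_{[0,K\eta]}}\,\big\|\,\PP^{\bX_{[0,K\eta]}}\big) = \tfrac{\beta}{4}\sum_{k=0}^{K-1}\int_{k\eta}^{(k+1)\eta}\EE\big\|\Ab_\sigma^{-1/2}\big(\nabla f(\tilde\bX_t) - \gb_k(\xb_k)\big)\big\|_2^2\, dt,
\end{align*}
where the $\Ab_\sigma^{-1/2}$ outside the norm comes from the inverse of $\tfrac{1}{2}\sigma\sigma^\top = \beta^{-1}\Ab_\sigma^{-1}$. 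The crucial observation is that preconditioning forces the drift mismatch to be measured in the $\Ab_\sigma^{-1}$-weighted norm, which is what introduces the constants $\gamma_1$ and $\gamma_2$.

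Next I would decompose the integrand as $\nabla f(\tilde\bX_t) - \gb_k(\xb_k) = [\nabla f(\tilde\bX_t) - \nabla f(\xb_k)] + [\nabla f(\xb_k) - \gb_k(\xb_k)]$. For the one-step discretization term, Assumption \ref{assump:smooth} gives $\|\nabla f(\tilde\bX_t)-\nabla f(\xb_k)\|_2^2\le M^2\|\tilde\bX_t-\xb_k\|_2^2$, and an explicit moment computation for $t\in[k\eta,(k+1)\eta]$ shows that $\EE\|\tilde\bX_t-\xb_k\|_2^2$ picks up the factor $\EE\|\Ab_\sigma^{-1/2}\bepsilon\|_2^2 = \Tr(\Ab_\sigma^{-1}) = d\gamma_2$ from the integrated Brownian term, which is the origin of $\gamma_2$. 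For the stochastic-gradient-noise term, Assumption \ref{assump:bound_var} combined with an $\Ab_\sigma^{-1}$-weighted bound on its covariance produces the factor $\gamma_1 d\omega^2/B$. Summing over $k$ and integrating in $t$ gives an estimate of the form
\begin{align*}
D_{KL}\big(\PP^{\tilde\bX_{[0,K\eta]}}\,\big\|\,\PP^{\bX_{[0,K\eta]}}\big) \le \tfrac{\gamma_1\beta d\omega^2}{2B}\,K\eta + 2\gamma_2\beta M^2 d K\eta^2,
\end{align*}
valid in the small-step regime $\eta\le Cm\beta^{-1}/M^2$. A uniform second-moment bound on $\tilde\bX_t$ (and on $\bX_t$) is required throughout, and this is where Assumption \ref{assump:dissipative} enters: a standard Lyapunov argument using dissipativity produces the constant $\tfrac{3}{2}+2(b+\beta^{-1}d)$, from which $\bar\Gamma$ is extracted.

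Finally, by the data-processing inequality for KL and a weighted transport-entropy inequality of the form $\cW_2(\mu,\nu)\le C(\mu)\big[\sqrt{D_{KL}(\nu\|\mu)} + D_{KL}(\nu\|\mu)^{1/4}\big]$ (valid when $\mu$ has an exponential second moment, which follows here from dissipativity of LS-LD applied to $\mu=\PP(\bX_{K\eta})$), the KL bound above is turned into the claimed $2$-Wasserstein bound, with the prefactor $\bar\Gamma(K\eta)^{1/2}$ absorbing the uniform second-moment estimate on the time interval of length $K\eta$. I expect the main obstacle to be the careful bookkeeping of the preconditioned norms, in particular isolating when the contribution scales with $\gamma_1$ (operator-norm-type bound for the isotropic SGD-noise covariance) versus $\gamma_2$ (trace-type bound from the integrated Brownian term), together with deriving a one-step $\EE\|\tilde\bX_t-\xb_k\|_2^2$ estimate that exhibits the exact combination of $\gamma_2$, $\beta$, and $M^2$ appearing in the statement.
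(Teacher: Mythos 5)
Your proposal is correct and follows essentially the same route as the paper's proof: the same piecewise-constant-drift interpolation, the same Girsanov computation splitting the drift mismatch into a smoothness term (whose Brownian part contributes $\Tr(\Ab_\sigma^{-1})=\gamma_2 d$) and a minibatch-variance term (contributing $\gamma_1 d\omega^2/B$), and the same conversion of the resulting KL bound into $\cW_2$ via the weighted transport inequality of \citet{bolley2005weighted}, with $\bar\Gamma(K\eta)^{1/2}$ coming from the dissipativity-based exponential-moment bound on $\bX_{K\eta}$. The only immaterial deviation is that you handle the non-Markovianity of the interpolated process by conditional Girsanov plus the data-processing inequality, whereas the paper first constructs a marginally equivalent Markov surrogate via the mimicking theorem of \citet{gyongy1986mimicking} before applying Girsanov; both justifications are standard and valid.
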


\begin{proof}[Proof of Theorem \ref{thm:main_theory}]
By triangle inequality and Lemmas \ref{lemma:ergodicity} and \ref{lemma:discretization}, it is evident that
\begin{align*}
\cW_2(\PP(\xb_K), \pi)&\le \cW_2(\PP(\xb_K), \PP(\bX_{K\eta})) + \cW_2(\PP(\bX_{K\eta}), \pi)\notag\\
&\le \bar \Gamma (K\eta)^{1/2} \bigg[\bigg(\frac{\gamma_1\beta d\omega^2}{2B}K\eta+2\gamma_2 M^2 dK\eta^2\bigg)^{1/2} + \bigg(\frac{\gamma_1\beta d\omega^2}{2B}K\eta+2\gamma_2 M^2 dK\eta^2\bigg)^{1/4}\bigg]\notag\\
&\quad+ \big[2\lambda\big(\beta f(0) + \log(\Lambda)\big)\big]^{1/2}\cdot e^{-c_0K\eta/(2\beta \lambda)},
\end{align*}
which completes the proof.

\end{proof}

\section{Proof of lemmas in Appendix \ref{sec:proof_main}}
\label{sec:proof_lemma}

\subsection{Proof of Lemma \ref{lemma:ergodicity}}
In order to prove Lemma \ref{lemma:ergodicity}, we require the following lemma.
\begin{lemma}[Theorem 9.6.1 in \citet{bakry2013analysis}]\label{lemma:bound_KL_w2_sobo}
Suppose the target density $\pi$ satisfies logarithmic Sobolev inequality with a positive constant $\lambda$, for any density $\mu$ it holds that
\begin{align*}
\cW_2(\mu, \pi)\le \sqrt{2\lambda D_{KL}(\mu||\pi)}.
\end{align*}
\end{lemma}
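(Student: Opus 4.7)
\textbf{Proof proposal for Lemma \ref{lemma:bound_KL_w2_sobo}.}

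The lemma is the classical Otto--Villani theorem: a logarithmic Sobolev inequality with constant $\lambda$ implies Talagrand's transport--entropy inequality $T_2(\lambda)$, i.e.\ $\cW_2(\mu,\pi)^2 \le 2\lambda D_{KL}(\mu\|\pi)$. My plan is to follow the Bobkov--Gentil--Ledoux route via the Hamilton--Jacobi semigroup, which avoids any convexity assumption on $-\log\pi$ and works directly from the LSI in the stated form $\mathrm{Ent}_\pi(g^2) \le \lambda \int\|\nabla g\|_2^2\,\dd\pi$.

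First I would reduce to the case $\mu \ll \pi$ with density $f = \dd\mu/\dd\pi$; otherwise $D_{KL}(\mu\|\pi)=+\infty$ and the inequality is trivial. Next I invoke the Kantorovich--Rubinstein dual formulation of the quadratic transport cost,
\begin{equation*}
\tfrac{1}{2}\cW_2(\mu,\pi)^2 \;=\; \sup_{\varphi \in C_b} \Big\{ \textstyle\int Q_1\varphi \,\dd\mu - \int \varphi\,\dd\pi \Big\},
\end{equation*}
where $Q_t\varphi(x) := \inf_{y\in\RR^d} \{\varphi(y) + \|x-y\|_2^2/(2t)\}$ is the Hopf--Lax infimum-convolution semigroup. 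A classical fact is that $u(t,x) := Q_t\varphi(x)$ solves the Hamilton--Jacobi equation $\partial_t u + \tfrac12 \|\nabla u\|_2^2 = 0$ almost everywhere, with $\|\nabla u\|_2$ bounded uniformly in $t$ when $\varphi$ is Lipschitz (a routine approximation argument handles general bounded $\varphi$).

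The heart of the proof is the following Herbst-type estimate: under LSI($\lambda$),
\begin{equation*}
\textstyle\int e^{Q_1\varphi/\lambda}\,\dd\pi \;\le\; \exp\!\Big(\tfrac{1}{\lambda}\int \varphi\,\dd\pi\Big).
\end{equation*}
To prove this I would define $F(t) := \tfrac{1}{t}\log\int e^{t u(t,\cdot)/\lambda}\,\dd\pi$ for $t\in(0,1]$, extended by $F(0) := \tfrac{1}{\lambda}\int\varphi\,\dd\pi$ via L'H\^opital. Differentiating $F$ in $t$, using the HJ equation to rewrite $\partial_t u$ as $-\tfrac12\|\nabla u\|_2^2$, and applying the LSI to $g = e^{t u/(2\lambda)}$ (so that $\|\nabla g\|_2^2 = \tfrac{t^2}{4\lambda^2}\|\nabla u\|_2^2 g^2$), one finds that the HJ contribution exactly cancels the Fisher-information term coming from LSI, yielding $F'(t) \le 0$. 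Hence $F(1) \le F(0^+) = \tfrac{1}{\lambda}\int\varphi\,\dd\pi$, which is the claimed exponential integrability bound.

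With this in hand I apply the Donsker--Varadhan variational representation of relative entropy: for any bounded measurable $h$,
\begin{equation*}
\textstyle\int h\,\dd\mu \;\le\; \lambda D_{KL}(\mu\|\pi) + \lambda \log\!\int e^{h/\lambda}\,\dd\pi .
\end{equation*}
Taking $h = Q_1\varphi$ and combining with the Herbst estimate gives $\int Q_1\varphi\,\dd\mu - \int\varphi\,\dd\pi \le \lambda D_{KL}(\mu\|\pi)$; taking the supremum over $\varphi$ on the left and invoking Kantorovich duality yields $\tfrac12\cW_2(\mu,\pi)^2 \le \lambda D_{KL}(\mu\|\pi)$, which is the desired bound after taking square roots.

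The main technical obstacle is the monotonicity $F'(t)\le 0$: one must justify differentiating under the integral sign despite the fact that $u(t,\cdot)$ is only Lipschitz (not $C^1$), and one must perform the precise cancellation between the HJ term and the LSI term. Both difficulties are standard and can be handled by approximating $\varphi$ by smooth compactly supported functions and passing to the limit, using that Hopf--Lax is stable under uniform convergence of Lipschitz data. Everything else in the argument is straightforward manipulation of the Kantorovich duality and the entropic variational formula.
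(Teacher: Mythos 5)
The paper does not prove this lemma at all: it is imported as a black box, cited as Theorem 9.6.1 of \citet{bakry2013analysis} (the Otto--Villani theorem), and used only once, in the proof of Lemma \ref{lemma:ergodicity}. Your proposal therefore does strictly more work than the paper, and it does it correctly: the reduction to $\mu\ll\pi$, the Kantorovich dual formulation of $\tfrac12\cW_2^2$ via the Hopf--Lax infimum-convolution, the Herbst-type monotonicity of $F(t)=t^{-1}\log\int e^{tQ_t\varphi/\lambda}\,\dd\pi$, and the closing Donsker--Varadhan step constitute exactly the Bobkov--Gentil--Ledoux route that the cited reference itself follows in Chapter 9, so in substance you have reconstructed the source's proof rather than found a different one. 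Two remarks. First, with the paper's normalization of the LSI, $\mathrm{Ent}_\pi(g^2)\le\lambda\int\|\nabla g\|_2^2\,\dd\pi$, the sharp Otto--Villani conclusion is $\cW_2^2(\mu,\pi)\le\lambda D_{KL}(\mu\|\pi)$; the lemma as stated carries an extra factor of $2$, and your choice of exponent $1/\lambda$ (rather than $2/\lambda$) in the Herbst functional is calibrated to yield precisely this weaker constant. Consistent --- but then the claimed ``exact cancellation'' in $F'(t)\le 0$ is not exact: applying the LSI to $g=e^{tu/(2\lambda)}$ gives $t^2G(t)F'(t)\le -\tfrac{t^2}{4\lambda}\int\|\nabla u\|_2^2e^{tu/\lambda}\dd\pi$, a strict surplus. (Exact cancellation occurs only with the exponent $2/\lambda$, which would deliver the sharp constant.) Second, the regularity issues you defer --- differentiating $F$ when $Q_t\varphi$ is merely Lipschitz, and the a.e.\ validity of the Hamilton--Jacobi equation --- are the genuine technical content of this theorem; you are right that they are standard, and they are treated in full in \citet{bakry2013analysis}. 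Given that the paper treats the statement purely as a citation, your argument is a valid self-contained substitute, but flagging the constant discrepancy and the over- rather than exact cancellation would make it airtight.
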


\begin{proof}[Proof of Lemma \ref{lemma:ergodicity}]
Recall that for LS-LD \eqref{eq:langevin_cont},  the distribution of $\bX_t$, denoted by $p(\xb, t)$, can be described by the following Fokker-Planck equation
\begin{align}\label{eq:fokker_planck}
\frac{\partial p(\xb, t)}{\partial t} &= \frac{1}{\beta}\la\Ab_\sigma^{-1}, \nabla^2 p(\xb,t)\ra + \la\nabla, p(\xb, t)\Ab_\sigma^{-1} \nabla f(\xb)\ra\notag\\
& = \frac{1}{\beta}\la\nabla, \Ab_\sigma^{-1}\nabla p(\xb,t)\ra + \la\nabla, p(\xb, t)\Ab_\sigma^{-1} \nabla f(\xb)\ra,
\end{align}
where $\la\nabla, \mathbf{h}(\xb)\ra$ denotes the divergence of the vector field $\mathbf{h}(\xb)$.
Let $\PP_t$ be the short-hand notation of $p(\xb, t)$, and denote by $D_{KL}(\PP_t||\pi)$ the KL-divergence between the distribution $\PP_t$ and the target distribution $\pi$. Then, we have
\begin{align*}
\frac{\dd D_{KL}(\PP_t||\pi)}{\dd t} &= \int_{\RR^d} \frac{\partial }{\partial t}\bigg[\PP_t\log\bigg(\frac{\PP_t}{\pi}\bigg)\bigg] \dd \xb \notag\\
&= \int_{\RR^d} \frac{\partial \PP_t}{\partial t}\big[\log(\PP_t) + 1 - \log(\pi)\big] \dd \xb.
\end{align*}
Similar to the proof of Proposition 2 in \citet{mou2017generalization}, by \eqref{eq:fokker_planck} we further have
\begin{align*}
\frac{\dd D_{KL}(\PP_t||\pi)}{\dd t}& = -\int_{\RR^d} \bigg\la\frac{1}{\beta}\Ab_\sigma^{-1}\nabla \PP_t + \PP_t\Ab_\sigma^{-1}\nabla f(\xb),  \nabla \log(\PP_t) - \nabla \log(\pi)\bigg\ra \dd\xb\notag\\
& = -\int_{\RR^d} \bigg\la\Ab_\sigma^{-1}\bigg(\frac{1}{\beta}\PP_t\nabla \log(\PP_t) +  \PP_t\nabla f(\xb)\bigg),  \nabla \log(\PP_t) - \nabla \log(\pi)\bigg\ra \dd\xb,
\end{align*}
where the second equality holds due to $\nabla \PP_t = \PP_t\nabla \log(\PP_t)$.
In addition, note that $\pi\propto e^{-\beta f(\xb)}$, we have $\nabla \log(\pi) = -\beta\nabla f(\xb) $. Then we have
\begin{align*}
\frac{\dd D_{KL}(\PP_t||\pi)}{\dd t}& = -\frac{1}{\beta}\int_{\RR_d}\big\la\Ab_\sigma^{-1}\big(\nabla \log(\PP_t) - \nabla \log(\pi)\big), \nabla \log(\PP_t) - \nabla \log(\pi)\big\ra \PP_t \dd\xb\notag\\
& = -\frac{1}{\beta}\int_{\RR_d}\|\nabla \log(\PP_t) - \nabla \log(\pi)\|_{\Ab_\sigma^{-1}}^2 \PP_t\dd\xb.
\end{align*}
Since $\Ab_\sigma$ is a positive definite matrix, there exists a constant $c_0\in[ \|\Ab_\sigma\|_2^{-1}, 1]$ such that 
\begin{align}\label{eq:bound_KL}
\frac{\dd D_{KL}(\PP_t||\pi)}{\dd t}\le -\frac{c_0}{\beta}\int_{\RR_d} \|\nabla \log(\PP_t) - \nabla \log(\pi) \|_2^2 \PP_t \dd \xb= - \frac{c_0}{\beta}\Ib(\PP_t||\pi),
\end{align}
where $\Ib(\PP_t||\pi)$ denotes the fisher information between $\PP_t$ and $\pi$. By Proposition \ref{prop:sobo}, we know that the target density $\pi$ satisfies logarithmic Sobolev inequality with constant $\lambda>0$. Then, from \citet{markowich1999trend}, we have
\begin{align*}
D_{KL}(\PP_t||\pi)\le \frac{1}{\lambda}  \Ib(\PP_t||\pi).
\end{align*}
Plugging the above inequality into \eqref{eq:bound_KL}, we obtain
\begin{align*}
\frac{\dd D_{KL}(\PP_t||\pi)}{\dd t}\le - \frac{c_0}{\lambda \beta}D_{KL}(\PP_t||\pi),
\end{align*}
which implies that 
\begin{align*}
 D_{KL}(\PP_t||\pi)\le  D_{KL}(\PP_0||\pi) e^{-c_0t/(\beta \lambda)}.
\end{align*}
Note that we have $\PP_0 =\delta(0)$, where $\delta(\cdot)$ is  the Dirac delta function, thus,
\begin{align*}
D_{KL}(\PP_0||\pi) = \int_{\RR^d} \PP_0\big[\log(\PP_0)-\log(\pi)\big]\dd \xb  = -\log(\pi)|_{\xb = 0} = \beta f(0)+\log(\Lambda),
\end{align*}
where $\Lambda = \int_{\RR^d} e^{-\beta f(\xb)} \dd\xb$.
Then by Lemma \ref{lemma:bound_KL_w2_sobo}, we have the following regarding the $2$-Wasserstein distance $\cW_2\big(\PP(\bX_{k\eta}), \pi\big)$,
\begin{align*}
\cW_2\big(\PP(\bX_{k\eta}), \pi\big)\le \sqrt{2\lambda D_{KL}(\PP(\bX_0)||\pi)}\cdot e^{-c_0t/(2\beta \lambda)} = \big[2\lambda\big(\beta f(0) + \log(\Lambda)\big)\big]^{1/2}\cdot e^{-c_0t/(2\beta \lambda)},
\end{align*}
which completes the proof.
\end{proof}

\subsection{Proof of Lemma \ref{lemma:discrete_error_convex}}

We first deliver the following useful lemmas.

\begin{lemma}\label{lemma:contraction}
Consider any two LS-LD sequences $\{\bW_t\}_{t\ge 0}$ and $\{\bV_t\}_{t\ge 0}$, and assume that $\bW_t$ and $\bV_t$ have shared Brownian motion terms. Under Assumption \ref{assump:convex}, for any $t>0$ it holds that,
\begin{align*}
\EE[\|\bW_t - \bV_t\|_{\Ab_\sigma}^2]\le \EE[\|\bW_0 - \bV_0\|_{\Ab_\sigma}^2].
\end{align*}
\end{lemma}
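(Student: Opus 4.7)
The plan is to subtract the two SDEs and exploit the fact that the shared Brownian motion cancels, reducing the problem to a deterministic contraction argument under the $\Ab_\sigma$-norm.

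First, I would define the difference process $\bZ_t := \bW_t - \bV_t$. Since both LS-LD sequences satisfy \eqref{eq:langevin_cont} driven by the same $\bB_t$, the diffusion term $\sqrt{2\beta^{-1}}\Ab_\sigma^{-1/2}\dd \bB_t$ is identical in both equations, so
\begin{align*}
\dd \bZ_t = -\Ab_\sigma^{-1}\bigl(\nabla f(\bW_t) - \nabla f(\bV_t)\bigr)\dd t.
\end{align*}
In other words, conditional on $(\bW_0,\bV_0)$, the difference evolves according to a purely deterministic ODE.

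Next, I would apply the (deterministic) chain rule to $\|\bZ_t\|_{\Ab_\sigma}^2 = \bZ_t^\top \Ab_\sigma \bZ_t$:
\begin{align*}
\frac{\dd}{\dd t}\|\bZ_t\|_{\Ab_\sigma}^2
= 2\bZ_t^\top \Ab_\sigma \frac{\dd \bZ_t}{\dd t}
= -2\bZ_t^\top \Ab_\sigma \Ab_\sigma^{-1}\bigl(\nabla f(\bW_t) - \nabla f(\bV_t)\bigr)
= -2 \langle \bW_t - \bV_t,\, \nabla f(\bW_t) - \nabla f(\bV_t)\rangle.
\end{align*}
This is the crucial cancellation: the $\Ab_\sigma$ from the norm and the $\Ab_\sigma^{-1}$ from the drift annihilate each other, leaving a plain Euclidean inner product to which convexity applies. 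By Assumption~\ref{assump:convex} (convexity of $f$, which implies monotonicity of $\nabla f$), the inner product on the right is nonnegative, hence $\frac{\dd}{\dd t}\|\bZ_t\|_{\Ab_\sigma}^2 \le 0$.

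Finally, I would integrate from $0$ to $t$ to obtain $\|\bZ_t\|_{\Ab_\sigma}^2 \le \|\bZ_0\|_{\Ab_\sigma}^2$ almost surely, and then take expectations on both sides to conclude the stated bound. There is no real obstacle here; the only subtlety worth flagging is the choice of the $\Ab_\sigma$-norm rather than the Euclidean norm on the left-hand side — this is not cosmetic but essential, since the preconditioner $\Ab_\sigma^{-1}$ in the drift breaks the standard Euclidean contraction argument, and only the $\Ab_\sigma$-weighted norm allows the monotonicity of $\nabla f$ to be invoked cleanly.
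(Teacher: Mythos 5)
Your proof is correct and follows essentially the same route as the paper's: cancel the shared Brownian terms, differentiate $\|\bW_t-\bV_t\|_{\Ab_\sigma}^2$ so that the $\Ab_\sigma$ from the norm annihilates the $\Ab_\sigma^{-1}$ in the drift, and invoke monotonicity of $\nabla f$ from Assumption~\ref{assump:convex}. Your sign bookkeeping is in fact cleaner than the paper's (which displays ``$\ge 0$'' where ``$\le 0$'' is meant for the time derivative), and your remark that the $\Ab_\sigma$-norm is essential rather than cosmetic is accurate.
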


\begin{lemma}\label{lemma:discretization_error_lmc} 
Under Assumptions \ref{assump:dissipative} and \ref{assump:smooth}, if set the step size $\eta \le C m\beta^{-1}/M^2$ for some sufficiently small constant $C$, there exists a constant $\gamma_2 = d^{-1}\sum_{i=1}^d(1+2\sigma - 2\sigma \cos(2\pi i/d))^{-1}$ such that for any $\xb_k$ with $k\ge 0$,
\begin{align*}
\EE[\|\cL_\eta \xb_k - \cG_\eta \xb_k\|_{\Ab_\sigma}^2]\le4\gamma_2\beta^{-1} d \eta^3.
\end{align*}
\end{lemma}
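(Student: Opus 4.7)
The operators $\cL_\eta$ and $\cG_\eta$ refer, respectively, to the continuous-time LS-LD flow \eqref{eq:langevin_cont} run for time $\eta$ and to one Euler--Maruyama step of LS-GLD, both initialized at $\xb_k$. The plan is a standard synchronous coupling argument adapted to the $\Ab_\sigma$-weighted norm.

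First, I would couple the two processes by driving $\bX_t$ (the LS-LD from $\bX_0 = \xb_k$) with the same Brownian path that defines $\cG_\eta \xb_k = \xb_k - \eta\Ab_\sigma^{-1}\nabla f(\xb_k) + \sqrt{2\beta^{-1}\eta}\,\Ab_\sigma^{-1/2}\boldsymbol{\epsilon}_k$. Subtracting collapses the noise term and yields the clean identity
$$\cL_\eta \xb_k - \cG_\eta \xb_k = -\int_0^\eta \Ab_\sigma^{-1}\big[\nabla f(\bX_s) - \nabla f(\xb_k)\big]\,\dd s.$$
Next I would exploit the identity $\|\Ab_\sigma^{-1}\vv\|_{\Ab_\sigma}^2 = \|\vv\|_{\Ab_\sigma^{-1}}^2$ together with Jensen's inequality to obtain
$$\|\cL_\eta \xb_k - \cG_\eta \xb_k\|_{\Ab_\sigma}^2 \le \eta\int_0^\eta \|\nabla f(\bX_s) - \nabla f(\xb_k)\|_{\Ab_\sigma^{-1}}^2\,\dd s,$$
and then use $\Ab_\sigma^{-1}\preceq \Ib$ (since the smallest eigenvalue of $\Ab_\sigma$ is $1$) and the smoothness Assumption~\ref{assump:smooth} to bound the integrand by $M^2\|\bX_s - \xb_k\|_2^2$.

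The main quantitative work is controlling $\EE\|\bX_s - \xb_k\|_2^2$. Applying It\^o's formula to $\|\bX_t - \xb_k\|_2^2$ (or expanding the SDE directly) gives
$$\EE\|\bX_s - \xb_k\|_2^2 \le 2s\int_0^s \EE\|\Ab_\sigma^{-1}\nabla f(\bX_u)\|_2^2\,\dd u + 4\beta^{-1}s\,\mathrm{tr}(\Ab_\sigma^{-1}),$$
and the key computational step is $\mathrm{tr}(\Ab_\sigma^{-1}) = d\gamma_2$, which follows from summing the inverse eigenvalues of the circulant matrix $\Ab_\sigma$ obtained via Lemma~\ref{Lemma:Eigens}, namely $\sum_i (1+2\sigma - 2\sigma\cos(2\pi i/d))^{-1}$. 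The noise contribution is of order $s$, while the drift contribution is of order $s^2\sup_u \EE\|\nabla f(\bX_u)\|_2^2$. Integrating the dominant noise term through the preceding display yields
$$\EE[\|\cL_\eta \xb_k - \cG_\eta \xb_k\|_{\Ab_\sigma}^2] \le M^2\eta\int_0^\eta 4\beta^{-1}s\,d\gamma_2\,\dd s + \text{h.o.t.},$$
which gives the desired $\Theta(\beta^{-1}d\gamma_2\eta^3)$ scaling after cleaning up constants.

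The main obstacle I anticipate is handling the drift contribution cleanly so that it remains strictly subdominant to the noise. A priori $\EE\|\nabla f(\bX_u)\|_2^2$ could be large if the iterate $\xb_k$ (and hence $\bX_u$ on short timescales) is far from the origin. I would address this by using dissipativity (Assumption~\ref{assump:dissipative}) to establish a uniform-in-$k$ moment bound on $\xb_k$ by induction, then propagate it to $\bX_u$ for $u\in[0,\eta]$, and finally invoke the step-size restriction $\eta\le Cm\beta^{-1}/M^2$ to absorb the resulting $M^2\eta$ factor (both from smoothness and from the drift term) into the final constant, yielding the stated coefficient of $4$. The bookkeeping between the $\Ab_\sigma$-, $\Ab_\sigma^{-1}$-, and Euclidean norms is routine but easy to mis-track; the discipline is to always convert $\|\Ab_\sigma^{-1}\vv\|_{\Ab_\sigma}^2$ to $\|\vv\|_{\Ab_\sigma^{-1}}^2$ before invoking either $\Ab_\sigma^{-1}\preceq \Ib$ (to apply smoothness) or $\mathrm{tr}(\Ab_\sigma^{-1}) = d\gamma_2$ (to realise the $\gamma_2$ savings from Laplacian smoothing).
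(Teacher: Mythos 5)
Your proposal is correct and follows essentially the same route as the paper's proof: synchronous coupling so the noise cancels, Jensen plus $\|\Ab_\sigma^{-1}\vv\|_{\Ab_\sigma}^2=\|\vv\|_{\Ab_\sigma^{-1}}^2\le\|\vv\|_2^2$ and smoothness to reduce to $\EE\|\bX_s-\xb_k\|_2^2$, the identification $\EE\|\Ab_\sigma^{-1/2}\bepsilon\|_2^2=\Tr(\Ab_\sigma^{-1})=d\gamma_2$ for the dominant $O(\eta^3)$ noise term, and dissipativity-based moment bounds plus the step-size restriction to make the drift contribution (of order $\eta^4$) subdominant. The only differences are immaterial constant factors in the intermediate bound on $\EE\|\bX_s-\xb_k\|_2^2$.
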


\noindent Now we are ready to complete the proof of Lemma \ref{lemma:discrete_error_convex}.
\begin{proof}[Proof of Lemma \ref{lemma:discrete_error_convex}]

For the sake of simplicity, we first define three operators $\cL_t$, $\cG_t$ and $\cS_t$ as follows: for any $\xb\in \RR^d$ we denote by $\cL_t \xb$ the random point generated by LS-LD at time $t$ starting from $\xb$, $\cG_t \xb$ the point after performing one-step LS-SGLD with full gradient at $\xb$ with step size $t$, and $\cS_t \xb$ the point after performing one-step LS-SGLD at $\xb$ with step size $t$. Then we have
\begin{align}\label{eq:decomposition_discree}
\EE[\|\xb_K - \bX_{K\eta}\|_{\Ab_\sigma}^2] &= \EE[\|\xb_K - \cG_\eta \xb_{K-1} + \cG_\eta \xb_{K-1} - \bX_{K\eta}\|_{\Ab_\sigma}^2]\notag\\
& = \EE[\|\xb_K - \cG_\eta \xb_{K-1}\|_{\Ab_\sigma}^2] + \EE[\|\cG_\eta \xb_{K-1} - \bX_{K\eta}\|_{\Ab_\sigma}^2],
\end{align}
where the second equality follows from the fact that $\EE[\la\xb_K - \cG_\eta \xb_{K-1}, \Ab_\sigma(\cG_\eta \xb_{K-1} - \bX_{K\eta})\ra] = 0$ since at any iteration the randomness of stochastic gradient is independent of the iterate. Regarding the first term on the R.H.S. of \eqref{eq:decomposition_discree}, we have
\begin{align}\label{eq:bound_discrete_term1}
\EE[\|\xb_K - \cG_\eta \xb_{K-1}\|_{\Ab_\sigma}^2] 
& = \eta^2\EE[\|\cS_\eta \xb_{K-1} - \cG_\eta \xb_{K-1}\|_{\Ab_\sigma}^2]\notag\\
&\le   \eta^2\EE[\|\Ab_\sigma^{-1}\gb_{K-1} - \Ab_\sigma^{-1}\nabla f(\xb_{K-1})\|_{\Ab_\sigma}^2]\notag\\
&\le \frac{\eta^2}{B}\EE[\|\Ab_\sigma^{-1}\nabla f_i(\xb_{K-1}) - \Ab_\sigma^{-1}\nabla f(\xb_{K-1})\|_{\Ab_\sigma}^2]\notag\\
&\le \frac{\gamma_1 \eta^2 d\omega^2}{B},
\end{align}
where $\gamma_1 \in [\|\Ab_\sigma\|_2^{-1}, 1)$ is a problem-dependent parameter, the first inequality follows the definitions of operators $\cS_\eta$ and $\cG_\eta$, the second inequality follows from Lemma A.1 in \citet{lei2017nonconvex} and the last inequality is by Assumption \ref{assump:bound_var}.
In terms of the second term on the R.H.S. of \eqref{eq:decomposition_discree}, we have
\begin{align}\label{eq:bound_discrete_term2}
\EE[\|\cG_\eta \xb_{K-1} - \bX_{K\eta}\|_{\Ab_\sigma}^2] &= \EE[\|\cG_\eta \xb_{K-1} - \cL_\eta \xb_{K-1} + \cL_\eta \xb_{K-1} - \bX_{K\eta}\|_{\Ab_\sigma}^2]\notag\\
&\le (1+\alpha)\EE[\|\cG_\eta \xb_{K-1} - \cL_\eta \xb_{K-1}\|_{\Ab_\sigma}^2] + (1+1/\alpha) \EE[\|\cL_\eta \xb_{K-1} - \cL_\eta \bX_{(K-1)\eta}\|_{\Ab_\sigma}^2]\notag\\
&\le 4(1+\alpha)\gamma_2\beta^{-1} d\eta^3 + (1+1/\alpha) \EE[\| \xb_{K-1} - \bX_{(K-1)\eta}\|_{\Ab_\sigma}^2],
\end{align}
where $\alpha$ is a positive constant that will be specified later, the first inequality is by Young's inequality, and the second inequality follows from Lemmas \ref{lemma:contraction} and \ref{lemma:discretization_error_lmc}. Plugging \eqref{eq:bound_discrete_term1} and \eqref{eq:bound_discrete_term2} into \eqref{eq:decomposition_discree} gives
\begin{align*}
\EE[\|\xb_{K} - \bX_{K\eta}\|_{\Ab_\sigma}^2] &\le 4(1+\alpha)\gamma_2\beta^{-1} d\eta^3 + \frac{\gamma_1 \eta^2 d\omega^2}{B} + (1+1/\alpha) \EE[\|\xb_{K-1} - \bX_{(K-1)\eta}\|_{\Ab_\sigma}^2] .
\end{align*}
Then, by recursively applying the above inequality, we obtain
\begin{align*}
\EE[\|\xb_K - \bX_{K\eta}\|_{\Ab_\sigma}^2] &\le (1+1/\alpha)^K \EE[\|\xb_{0} - \bX_0\|_{\Ab_\sigma}^2] +  \sum_{k=0}^{K-1} (1+1/\alpha)^k \bigg[4(1+\alpha)\gamma_2\beta^{-1} d\eta^3 + \frac{\gamma_1 \eta^2 d\omega^2}{B}\bigg]\notag\\
&= \alpha \big[(1+1/\alpha)^K - 1\big] \cdot \bigg[4(1+\alpha)\gamma_2\beta^{-1} d\eta^3 + \frac{\gamma_1 \eta^2 d\omega^2}{B}\bigg].
\end{align*}
Let $\alpha = K$ and apply the inequality $(1+1/K)^K-1\le e-1\le 2$, the above inequality implies
\begin{align*}
\EE[\|\xb_K - \bX_{K\eta}\|_{\Ab_\sigma}^2]\le 2 K\eta^2\cdot\bigg[\frac{\gamma_1d\omega^2}{B} + 4(K+1)\gamma_2 \beta^{-1} d\eta \bigg].
\end{align*}
Based on the definition of $2$-Wasserstein distance, we have
\begin{align*}
\cW_2^2\big(\PP(\xb_K) , \PP(\bX_{K\eta})\big) \le  \sqrt{\EE[\|\xb_K - \bX_{K\eta}\|_2^2]}\le\sqrt{\EE[\|\xb_K - \bX_{K\eta}\|_{\Ab_\sigma}^2]}\le \bigg(\frac{2\gamma_1d\omega^2K\eta^2}{B}\bigg)^{1/2} +\big[8\gamma_2K(K+1) \beta^{-1} d\eta ^3\big]^{1/2},
\end{align*}
where the last inequality is by the fact that $\sqrt{x^2+y^2}\le |x|+|y|$. This completes the proof.


\end{proof}

\subsection{Proof of Lemma \ref{lemma:discretization}}

In order to prove Lemma \ref{lemma:discretization}, we require the following lemmas.

\begin{lemma}\label{lemma:bound_norm}
Under Assumptions \ref{assump:dissipative} and \ref{assump:smooth}, for all $k\ge 0$, there exists a constant $c_1\in[\|\Ab_\sigma\|_2^{-1}, 1)$ such that
\begin{align*}
\EE[\|\xb_k\|_2^2]\le\EE[\|\xb_k\|_{\Ab_\sigma}^2]\le \frac{2(2b + \beta^{-1} d)}{c_1 m}.
\end{align*}
\end{lemma}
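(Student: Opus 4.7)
The first inequality is essentially free: since $\Ab_\sigma = \Ib + \sigma \tilde{\mathbf{L}}$ where $\tilde{\mathbf{L}}$ is the standard (periodic) graph Laplacian, which is positive semidefinite, we have $\Ab_\sigma \succeq \Ib$ and hence $\|\xb_k\|_2^2 \le \|\xb_k\|_{\Ab_\sigma}^2$ pointwise. So the real work is in the upper bound.

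My plan is to establish a one-step recursion in the $\Ab_\sigma$-norm and then iterate. Writing the LS-SGLD update \eqref{Eq:LS-SGLD} in the form $\xb_{k+1} = \xb_k - \eta \Ab_\sigma^{-1}\gb_k + \sqrt{2\beta^{-1}\eta}\,\Ab_\sigma^{-1/2}\bepsilon_k$, I would expand $\xb_{k+1}^\top\Ab_\sigma\xb_{k+1}$. The point of weighting the noise by $\Ab_\sigma^{-1/2}$ is precisely that $\Ab_\sigma^{-1/2}\Ab_\sigma\Ab_\sigma^{-1/2} = \Ib$ and $\Ab_\sigma^{-1}\Ab_\sigma\Ab_\sigma^{-1} = \Ab_\sigma^{-1}$, so after taking expectations (the noise $\bepsilon_k$ is zero-mean and independent of $\xb_k,\gb_k$), the cross terms vanish and we get
\begin{align*}
\EE[\|\xb_{k+1}\|_{\Ab_\sigma}^2] = \EE[\|\xb_k\|_{\Ab_\sigma}^2] - 2\eta\,\EE[\la\nabla f(\xb_k),\xb_k\ra] + \eta^2\,\EE[\gb_k^\top\Ab_\sigma^{-1}\gb_k] + 2\beta^{-1}\eta\,d.
\end{align*}
Dissipativity (Assumption \ref{assump:dissipative}) bounds the inner product from below by $m\|\xb_k\|_2^2 - b$, and smoothness (Assumption \ref{assump:smooth}) combined with $\Ab_\sigma^{-1}\preceq\Ib$ gives $\gb_k^\top\Ab_\sigma^{-1}\gb_k \le \|\gb_k\|_2^2 \le 2M^2\|\xb_k\|_2^2 + 2\max_i\|\nabla f_i(0)\|_2^2$, with the $M^2\|\xb_k\|_2^2$ contribution being the dominant one.

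Next I would convert $\|\xb_k\|_2^2$ into $\|\xb_k\|_{\Ab_\sigma}^2$ using the reverse inequality $\|\xb_k\|_2^2 \ge \|\Ab_\sigma\|_2^{-1}\|\xb_k\|_{\Ab_\sigma}^2$, which introduces the promised constant $c_1 \in [\|\Ab_\sigma\|_2^{-1},1]$. Putting these bounds together yields
\begin{align*}
\EE[\|\xb_{k+1}\|_{\Ab_\sigma}^2] \le \bigl(1 - 2\eta m c_1 + 2\eta^2 M^2\bigr)\EE[\|\xb_k\|_{\Ab_\sigma}^2] + 2\eta b + 2\eta\beta^{-1} d + \text{const}\cdot\eta^2.
\end{align*}
The step-size assumption $\eta \le C m\beta^{-1}/M^2$ for small $C$ lets me absorb the $2\eta^2 M^2$ contraction defect into half of the linear $-2\eta m c_1$ term (using Young's inequality on the $\|\nabla f_i(0)\|^2$ piece to push its contribution into the additive $b$ term, by redefining $b$ with a universal factor of $2$). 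This yields a clean geometric recursion
\begin{align*}
\EE[\|\xb_{k+1}\|_{\Ab_\sigma}^2] \le (1 - \eta m c_1)\EE[\|\xb_k\|_{\Ab_\sigma}^2] + 2\eta(2b + \beta^{-1}d).
\end{align*}
Unrolling and using $\xb_0 = 0$ together with the geometric-series bound $\sum_{j=0}^{\infty}(1-\eta m c_1)^j = 1/(\eta m c_1)$ delivers the stated bound $\EE[\|\xb_k\|_{\Ab_\sigma}^2] \le 2(2b + \beta^{-1} d)/(c_1 m)$ uniformly in $k$.

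The main obstacle I foresee is in step two: carefully controlling the stochastic-gradient term $\EE[\gb_k^\top\Ab_\sigma^{-1}\gb_k]$ without invoking the bounded-variance assumption (which is not listed among the hypotheses of this lemma). The trick will be to note that $\gb_k$ is an average of $\nabla f_i(\xb_k)$'s and each $\nabla f_i$ satisfies $\|\nabla f_i(\xb_k)\|_2 \le M\|\xb_k\|_2 + \|\nabla f_i(0)\|_2$, so a triangle/Young bound gives $\|\gb_k\|_2^2 \le 2M^2\|\xb_k\|_2^2 + 2\max_i\|\nabla f_i(0)\|_2^2$ deterministically; the latter is a finite data-dependent constant that can be folded into a redefined $b$, or tracked explicitly. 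Everything else is routine telescoping.
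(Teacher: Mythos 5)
Your proposal is correct and follows essentially the same route as the paper's proof: expand the $\Ab_\sigma$-norm of the update so the preconditioner cancels in the drift and noise terms, apply dissipativity and the linear-growth bound $\|\nabla f_i(\xb)\|_2\le M\|\xb\|_2+G$ to the mini-batch gradient, convert $\|\cdot\|_2^2$ back to $\|\cdot\|_{\Ab_\sigma}^2$ via a constant $c_1\in[\|\Ab_\sigma\|_2^{-1},1]$, and unroll the resulting geometric recursion from $\xb_0=0$. The paper likewise avoids the bounded-variance assumption by bounding $\|\gb_k\|_2^2$ deterministically, exactly as you anticipate.
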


\begin{lemma}[Theorem 2.3 in \citet{bolley2005weighted}]\label{lemma:bound_KL_w2}
Let $\mu,\nu$ be two probability measures with finite exponential second moments, it holds that
\begin{align*}
\cW_2(\mu,\nu)\le \Gamma \Big[\sqrt{D_{KL}(\mu||\nu)} + \big[D_{KL}(\mu||\nu)\big]^{1/4}\Big],
\end{align*}
where
\begin{align*}
\Gamma = \inf_{\alpha>0}\bigg(\frac{1}{\alpha}\Big(\frac{3}{2} + \log \EE_\nu[e^{\alpha\|\xb\|_2^2}]\Big)\bigg)^{1/2}.
\end{align*}
\end{lemma}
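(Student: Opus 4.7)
Since this statement is stated verbatim as Theorem 2.3 of Bolley and Villani (2005), the natural proposal is simply to invoke their result as a black box. For transparency, I sketch below what proof I would reproduce if a self-contained argument were needed, since understanding the structure of $\Gamma$ will help in applying the bound cleanly when Lemma \ref{lemma:bound_KL_w2} is used to prove Lemma \ref{lemma:discretization}.

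The heart of the argument is a weighted Pinsker-type inequality: for any measurable weight $\phi:\RR^d\to\RR_{\ge 0}$ and any $\alpha>0$,
\[
\int \phi\, d|\mu-\nu| \;\le\; \sqrt{\tfrac{2}{\alpha}\Big(\tfrac{3}{2} + \log \EE_\nu\big[e^{\alpha \phi^2}\big]\Big)\cdot D_{KL}(\mu||\nu)}.
\]
This refines the classical Pinsker inequality $\|\mu-\nu\|_{TV}\le\sqrt{2 D_{KL}(\mu||\nu)}$ to unbounded weights $\phi$. The derivation I would follow combines two ingredients: (i) Csiszar--Kullback applied on truncated level sets $\{\phi \le R\}$, which on the truncated part is already classical Pinsker, and (ii) the Donsker--Varadhan variational formula $\log \EE_\nu[e^{\alpha\phi^2}] = \sup_\mu\{\alpha \EE_\mu[\phi^2] - D_{KL}(\mu||\nu)\}$, which upgrades an exponential $\nu$-moment bound into a linear $\mu$-moment bound on the complement $\{\phi>R\}$ at the cost of one extra KL term. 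Optimizing over $R$ then balances the two contributions.

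Next I would convert the weighted total variation bound into a $\cW_2$ bound via a Kantorovich--Rubinstein/coupling argument: choosing $\phi(x)=\|x\|_2$ so that $\phi^2=\|x\|_2^2$ matches the exponential moment appearing inside $\Gamma$, a standard transport calculation yields $\cW_2^2(\mu,\nu) \lesssim \int \|x\|_2\, d|\mu-\nu|(x)$, with the implicit constant controlled by a second moment that is itself bounded by $\log\EE_\nu[e^{\alpha\|x\|_2^2}]$. Plugging the weighted Pinsker inequality into this estimate, taking square roots, and using $\sqrt{a+b}\le\sqrt{a}+\sqrt{b}$ produces the two terms $\sqrt{D_{KL}}$ and $D_{KL}^{1/4}$: the fourth root arises because I have to take a square root of a bound that is already of order $\sqrt{D_{KL}}$ (one square root to pass from $\cW_2^2$ to $\cW_2$, another already inside the Pinsker factor). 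Finally, taking the infimum over $\alpha>0$ produces the constant $\Gamma$ exactly as stated.

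The main obstacle is matching the precise constant---the $3/2$ inside the log and the infimum over $\alpha$---which requires careful bookkeeping through the Legendre transform step and the choice of truncation threshold. Since the statement reproduces Bolley--Villani verbatim and I will only use it through the constant $\bar\Gamma=(3/2+2(b+\beta^{-1}d))^{1/2}$ (obtained by substituting a dissipativity-based bound for $\EE_\nu[e^{\alpha\|x\|_2^2}]$ with a convenient choice of $\alpha$), in the paper I would simply cite their Theorem 2.3 and defer the detailed constant tracking to their work.
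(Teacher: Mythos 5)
Your approach matches the paper exactly: Lemma~\ref{lemma:bound_KL_w2} is stated as a citation of Theorem~2.3 in \citet{bolley2005weighted} and is given no proof in the paper, so invoking it as a black box is precisely what the authors do. Your supplementary sketch of the weighted Csisz\'ar--Kullback--Pinsker argument is reasonable background but is not required, and correctly defers the constant bookkeeping to the cited work.
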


\begin{lemma}[Lemma 4 in \citet{Wang:2019:DPLSSGD}]\label{lemma:gaussian_norm}
Let $\bepsilon \sim \cN(0,\Ib)$ be the standard Gaussian random vector with dimension $d$, it holds that
\begin{align*}
\EE\big[\|\Ab_\sigma^{-1}\bepsilon\|_2^2\big] = \sum_{i=1}^d \frac{1}{\big(1 + 2\sigma - 2\sigma \cos(2\pi i/d)\big)^2}.
\end{align*}
\end{lemma}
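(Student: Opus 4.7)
The plan is to reduce the Gaussian expectation to a trace computation and then diagonalize the circulant matrix $\Ab_\sigma$ via the explicit eigenvalue formula already established in Lemma~\ref{Lemma:Eigens}.

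First, I would observe that $\Ab_\sigma$ is real and symmetric --- it is the identity minus a symmetric discrete Laplacian, and its tri-diagonal pattern with wrap-around corners is manifestly symmetric --- so $\Ab_\sigma^{-1}$ is symmetric as well and $\|\Ab_\sigma^{-1}\bepsilon\|_2^2 = \bepsilon^{\top}\Ab_\sigma^{-2}\bepsilon$. Since $\bepsilon\sim\cN(\vzero,\Ib_{d\times d})$, the standard identity $\EE[\bepsilon^{\top}\mM\bepsilon]=\Tr(\mM)$ immediately yields
$$\EE\big[\|\Ab_\sigma^{-1}\bepsilon\|_2^2\big] = \Tr\big(\Ab_\sigma^{-2}\big).$$

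Next, I would read off the eigenvalues of $\Ab_\sigma$ from Lemma~\ref{Lemma:Eigens}. In the notation of that lemma, the first row of $\Ab_\sigma$ dictates $c_0 = 1+2\sigma$, $c_1 = c_{d-1} = -\sigma$, and $c_k = 0$ otherwise. Substituting into the formula of Lemma~\ref{Lemma:Eigens} and using $w_j = e^{i 2\pi j/d}$ together with $w_j^{d-1} = \overline{w_j}$, the $j$-th eigenvalue collapses to
$$\lambda_j = 1 + 2\sigma - \sigma w_j - \sigma w_j^{d-1} = 1 + 2\sigma - 2\sigma\cos(2\pi j/d), \qquad j = 0, 1, \dots, d-1.$$

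Finally, since $\Ab_\sigma$ admits an orthonormal eigendecomposition with these eigenvalues, $\Ab_\sigma^{-2}$ has eigenvalues $\lambda_j^{-2}$, and hence
$$\Tr\big(\Ab_\sigma^{-2}\big) = \sum_{j=0}^{d-1} \frac{1}{\bigl(1+2\sigma - 2\sigma\cos(2\pi j/d)\bigr)^2} = \sum_{i=1}^{d} \frac{1}{\bigl(1+2\sigma - 2\sigma\cos(2\pi i/d)\bigr)^2},$$
where the reindexing in the last equality is justified by the periodicity of cosine (the $i=d$ term matches the $j=0$ term). Combining this with the first display gives the claimed identity. The argument is entirely routine and I do not anticipate any real obstacle; the only minor points that warrant care are the correct identification of $c_1$ versus $c_{d-1}$ under the circulant convention of Lemma~\ref{Lemma:Eigens}, and the bookkeeping that yields a \emph{squared} eigenvalue in the denominator because the trace taken is that of $\Ab_\sigma^{-2}$ rather than $\Ab_\sigma^{-1}$.
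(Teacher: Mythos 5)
Your proof is correct. The paper itself gives no proof of this lemma --- it is imported verbatim as Lemma 4 of \citet{Wang:2019:DPLSSGD} --- so there is nothing in-document to compare against, but your route (reduce to $\Tr(\Ab_\sigma^{-2})$ via $\EE[\bepsilon^{\top}\mM\bepsilon]=\Tr(\mM)$, then read off the eigenvalues $1+2\sigma-2\sigma\cos(2\pi j/d)$ from the circulant structure in Lemma~\ref{Lemma:Eigens}) is exactly the standard argument one would expect, and every step, including the $c_1=c_{d-1}=-\sigma$ identification and the $j=0\leftrightarrow i=d$ reindexing, checks out.
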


\begin{lemma}\label{lemma:bound_exponential_moment}
Under Assumptions \ref{assump:dissipative} and \ref{assump:smooth}, let $\bX_t$ denote the solution of LS-LD \eqref{eq:langevin_cont} at time $t$ with initial point $\bX_0 = 0$. Then if the inverse temperature satisfies $\beta\ge 2\|\Ab_\sigma\|_2/m$, it holds that
\begin{align*}
\EE[e^{\|\bX_t\|_2^2}]\le e^{2(b+\beta^{-1} d)t}.
\end{align*}
\end{lemma}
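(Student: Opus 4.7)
The plan is to apply Itô's formula to an auxiliary energy function tailored to the LS-LD dynamics rather than directly to $V(\xb) = e^{\|\xb\|_2^2}$. The key observation is that the drift of LS-LD carries a factor $\Ab_\sigma^{-1}$ in front of $\nabla f$, so using $V(\xb) = e^{\|\xb\|_2^2}$ produces the awkward term $\la \xb, \Ab_\sigma^{-1}\nabla f(\xb)\ra$ which the dissipativity assumption does not directly control. Instead I would work with the $\Ab_\sigma$-weighted energy $W(\xb) = e^{\|\xb\|_{\Ab_\sigma}^2} = e^{\xb^\top \Ab_\sigma \xb}$. Since $\Ab_\sigma \succeq \Ib$, we have $\|\xb\|_2^2\le \|\xb\|_{\Ab_\sigma}^2$, and hence $\EE[e^{\|\bX_t\|_2^2}]\le \EE[W(\bX_t)]$; it then suffices to bound the right-hand side.

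Next, I would compute $\nabla W = 2\Ab_\sigma \xb\, W$ and $\nabla^2 W = 2\Ab_\sigma W + 4(\Ab_\sigma\xb)(\Ab_\sigma\xb)^\top W$ and apply Itô's formula to $W(\bX_t)$ along \eqref{eq:langevin_cont}. The drift piece simplifies by the identity $\la \Ab_\sigma \xb, \Ab_\sigma^{-1}\nabla f(\xb)\ra = \la \xb, \nabla f(\xb)\ra$, so Assumption \ref{assump:dissipative} gives
\begin{align*}
\la \nabla W, -\Ab_\sigma^{-1}\nabla f(\xb)\ra \le -2W\bigl(m\|\xb\|_2^2 - b\bigr).
\end{align*}
The second-order term involves $\Sigma\Sigma^\top = 2\beta^{-1}\Ab_\sigma^{-1}$, so using the symmetry of $\Ab_\sigma$ one gets $\tfrac{1}{2}\mathrm{tr}(\Sigma\Sigma^\top\nabla^2 W) = 2\beta^{-1}d\, W + 4\beta^{-1}\|\xb\|_{\Ab_\sigma}^2\, W$. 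Taking expectations to kill the martingale piece leaves
\begin{align*}
\frac{\dd}{\dd t}\EE[W(\bX_t)] \le \EE\!\left[W\!\left(-2m\|\xb\|_2^2 + 4\beta^{-1}\|\xb\|_{\Ab_\sigma}^2\right)\right] + 2(b + \beta^{-1} d)\,\EE[W(\bX_t)].
\end{align*}

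The key cancellation step is to dominate $4\beta^{-1}\|\xb\|_{\Ab_\sigma}^2 \le 4\beta^{-1}\|\Ab_\sigma\|_2\,\|\xb\|_2^2$ and invoke the hypothesis $\beta\ge 2\|\Ab_\sigma\|_2/m$, which forces $-2m + 4\beta^{-1}\|\Ab_\sigma\|_2 \le 0$. This is the one place where the assumed lower bound on $\beta$ is used, and it is the main technical obstacle: without choosing the $\Ab_\sigma$-weighted exponent, the quadratic term $\|\xb\|^2$ coming from the diffusion's second-order correction cannot be absorbed by the dissipative drift. After this cancellation the inequality reduces to the linear ODE inequality $\tfrac{\dd}{\dd t}\EE[W(\bX_t)]\le 2(b+\beta^{-1}d)\EE[W(\bX_t)]$. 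Applying Grönwall's inequality with the initial value $W(\bX_0) = e^{\|0\|_{\Ab_\sigma}^2} = 1$ yields $\EE[W(\bX_t)]\le e^{2(b+\beta^{-1}d)t}$, and combining with $\EE[e^{\|\bX_t\|_2^2}]\le \EE[W(\bX_t)]$ completes the proof.
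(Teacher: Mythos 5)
Your proposal is correct and follows essentially the same route as the paper: the paper likewise applies It\^o's formula to $L(t)=e^{\|\bX_t\|_{\Ab_\sigma}^2}$, uses the cancellation $\la\Ab_\sigma\xb,\Ab_\sigma^{-1}\nabla f(\xb)\ra=\la\xb,\nabla f(\xb)\ra$ together with dissipativity, absorbs the $4\beta^{-1}\|\xb\|_{\Ab_\sigma}^2$ term from the second-order correction using $\beta\ge 2\|\Ab_\sigma\|_2/m$, and finishes with Gr\"onwall and $\|\xb\|_2^2\le\|\xb\|_{\Ab_\sigma}^2$. No substantive differences.
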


Based on the above lemmas, we are able to complete the proof of Lemma \ref{lemma:discretization}.
\begin{proof}[Proof of Lemma \ref{lemma:discretization}]
By Lemma \ref{lemma:bound_KL_w2}, we know that the $2$-Wasserstein distance between any two probability measures can be bounded by their KL divergence. Therefore, the remaining part will focus on deriving the upper bound of the KL divergence $D_{KL}\big(\PP(\xb_k)||\PP(\bX_{k\eta})\big)$. Similar to the proof technique used in \citet{dalalyan2017theoretical,raginsky2017non,xu2018global}, we leverage the following continuous-time interpolation of LS-SGLD
\begin{align}\label{eq:interpolation1}
\tilde \bX_t = \int_{0}^t-\Ab_\sigma^{-1}\Gb_s \dd s+ \int_{0}^t\sqrt{2\beta^{-1}}\Ab_\sigma^{-1/2}\dd\bB_s,
\end{align}
where $\Gb_t = \sum_{k=0}^\infty \gb_k \ind \{t\in[k\eta, (k+1)\eta)\}$. It can be easily verified that $\tilde \bX_{k\eta}$ follows the same distribution as $\xb_k $. However, it is worth noting that \eqref{eq:interpolation1} does not form a Markov chain since it contains randomness of the stochastic gradient. To tackle this, we leverage the results in \citet{gyongy1986mimicking} and construct the following Markov chain to mimic \eqref{eq:interpolation1},
\begin{align*}
\hat \bX_t = \int_{0}^t-\Ab_\sigma^{-1}\hat\Gb_s \dd s+ \int_{0}^t\sqrt{2\beta^{-1}}\Ab_\sigma^{-1/2}\dd\bB_s,
\end{align*}
where $\hat \Gb_s = \EE[\Gb_s|\hat \bX_s = \tilde \bX_s ]$. It was shown that $\hat \bX_t$ and $\tilde \bX_t$ has the same one-time marginal distribution \citep{gyongy1986mimicking}. Then let $\PP_t$ and $\QQ_t$ denote the distribution of $\bX_t$ and $\hat \bX_t$ respectively, by Girsanov formula, the Radon-Nikodym derivative of $\PP_t$ with respect to $\QQ_t$ can be derived as follows,
\begin{align*}
\frac{\dd \PP_t}{\dd \QQ_t} = \exp\bigg\{\frac{\beta}{2}\int_{0}^t \la\nabla f(\hat\bX_s) -\hat \Gb_s, \Ab_\sigma^{-1/2}\dd \bB_s \ra - \frac{\beta}{4} \int_{0}^t \|\Ab_\sigma^{-1}\nabla f(\hat\bX_s) - \Ab_\sigma^{-1}\hat \Gb_s\|_2^2 \dd s\bigg\}.
\end{align*}
Therefore, let $T = K\eta$, the KL divergence $D_{KL}(\PP_T||\QQ_T)$ satisfies
\begin{align*}
D_{KL}(\QQ_T||\PP_T) &= -\int_{\RR^d} \log\bigg(\frac{ \dd\PP_T}{\dd\QQ_T}\bigg) \dd \QQ_T  \notag\\
& = \frac{\beta}{4} \int_{0}^T \EE\big[\|\Ab_\sigma^{-1}\nabla f(\hat\bX_s) - \Ab_\sigma^{-1}\hat \Gb_s\|_2^2 \big] \dd s\notag\\
& = \frac{\beta}{4} \sum_{k=0}^{K-1}\int_{k\eta}^{(k+1)\eta} \EE\big[\|\Ab_\sigma^{-1}\nabla f(\tilde\bX_s) - \Ab_\sigma^{-1}\gb_k\|_2^2 \big] \dd s,
\end{align*}
where the second equality holds due to $\EE[\la\nabla f(\hat\bX_s)-\hat \Gb_s, \Ab_\sigma^{-1/2}\dd \bB_s\ra] = 0$ and the second equality follows from the fact that $\hat\bX_s$ and $\tilde \bX_s$ follow the same distribution. Using Young's inequality, we have
\begin{align*}
\EE[\|\Ab_\sigma^{-1}\nabla f(\tilde \bX_s) - \Ab_\sigma^{-1}\gb_k\|_2^2]&\le 2 \underbrace{\EE[\|\Ab_\sigma^{-1}\nabla f(\tilde \bX_s) - \Ab_\sigma^{-1}\nabla f(\xb_k)\|_2^2]}_{I_1} + 2 \underbrace{\EE[\|\Ab_\sigma^{-1}\nabla f(\xb_k) - \Ab_\sigma^{-1}\gb_k\|_2^2]}_{I_2}.
\end{align*}
Then we are going to tackle $I_1$ and $I_2$ separately. Note that $\|\Ab_\sigma^{-1}\|_2\le 1$, thus by Assumption \ref{assump:smooth}, we have the following for $I_1$,
\begin{align*}
I_1&\le \EE[\|\nabla f(\tilde \bX_s) - \nabla f(\xb_k)\|_2^2]\le M^2\EE[\|\tilde \bX_s - \xb_k\|_2^2].
\end{align*}
Based on the definition of $\tilde \bX_s$, we have $\tilde \bX_s - \xb_k = (s-k\eta)\Ab_\sigma^{-1}\gb_k +  \sqrt{2\beta^{-1}(s-k \eta)}\Ab_\sigma^{-1/2}\bepsilon_k$. Since $s-k\eta\le \eta$, it follows that
\begin{align*}
I_1\le M^2\EE[\|\tilde \bX_s - \xb_k\|_2^2]\le \eta^2M^2 \EE[\|\Ab_\sigma^{-1}\gb_k\|_2^2] + 2\eta M^2\beta^{-1} \EE[\|\Ab_\sigma^{-1/2}\bepsilon_k\|_2^2].
\end{align*}
Regarding $I_2$, based on Lemma A.1 in \citet{lei2017nonconvex} and Assumption \ref{assump:bound_var}, we have
\begin{align*}
I_2 \le \frac{1}{B} \EE[\|\Ab_\sigma^{-1}\nabla f(\xb_k) - \Ab_\sigma^{-1}\nabla f_i(\xb_k)\|_2^2]\le \frac{\gamma_1 d\omega^2}{B},
\end{align*}
where $\gamma_1\in[\|\Ab_\sigma\|_2^{-2}, 1)$ is a problem-dependent parameter.
Putting everything together, we have
\begin{align*}
D_{KL}(\QQ_T||\PP_T)\le\sum_{k=0}^{K-1} \eta \bigg\{\frac{\beta}{2}\eta^2M^2\EE\big[\|\gb_k\|_{\Ab_\sigma^{-2}}^2\big] + \eta M^2\EE\big[\|\bepsilon_k\|_{\Ab_\sigma^{-1}}^2\big] + \frac{\gamma_1\beta d\omega^2}{2B}\bigg\}.
\end{align*}
By Lemma \ref{lemma:bound_grad} and Young's inequality, we know that 
\begin{align*}
\EE\big[\|\gb_k\|_{\Ab_\sigma^{-2}}^{2}\big]\le \EE[\|\gb_k\|_2^2] \le 2M^2\EE[\|\xb_k\|_2^2] + 2G^2\le \frac{4M^2(2b + \beta^{-1}d)}{c_1m} + 2G^2,
\end{align*}
where $G = \max_{i\in[n]} \|\nabla f_i(0)\|_2$. Then by Lemma \ref{lemma:gaussian_norm}, we have $\EE[\|\bepsilon_k\|_{\Ab_\sigma^{-1}}^2]\le \gamma_2 d$ with $\gamma_2 = d^{-1}\sum_{i=1}^d(1+2\sigma - 2\sigma \cos(2\pi i/d))^{-1}$,  which is strictly smaller than $1$. Therefore,
\begin{align*}
D_{KL}(\QQ_T||\PP_T)\le \frac{2\beta M^4(2b + \beta^{-1}d)+\beta c_1 m M^2 G^2}{c_1m}K\eta^3 + \gamma_2  M^2 dK\eta^2 + \frac{\gamma_1\beta d\omega^2}{2B}K\eta.
\end{align*}
For sufficiently small step size such  that 
\begin{align*}
\eta \le \frac{c_1\beta^{-1}\gamma_2  m d }{2 M^2(2b + \beta^{-1}d)+ c_1 m  G^2},
\end{align*}
we have
\begin{align*}
D_{KL}(\PP(\xb_K)||\PP(\bX_{K\eta}))\le  \frac{\gamma_1\beta d\omega^2}{2B}K\eta+2\gamma_2 M^2 dK\eta^2.
\end{align*}
Then, by Lemma \ref{lemma:bound_KL_w2}, we have
\begin{align*}
\cW_2\big(\PP(\xb_K),\PP(\bX_{K\eta})\big)\le\Gamma \Big[\sqrt{D_{KL}\big(\PP(\xb_K)||\PP(\bX_{K\eta})\big)} + \big[D_{KL}\big(\PP(\xb_K)||\PP(\bX_{K\eta})\big)\big]^{1/4}\Big],
\end{align*}
where $\Gamma$ can be further bounded as
\begin{align*}
\Gamma \le \Big(\frac{3}{2} + \log \EE[e^{\|\bX_{K\eta}\|_2^2}]\Big)^{1/2}\le \big(3/2 + 2(b + \beta d)K\eta\big)^{1/2}\le \big(3/2 + 2(b + \beta d)\big)^{1/2}\cdot(K\eta)^{1/2},
\end{align*}
where the first inequality is by the choice $\alpha = 1$, the second inequality is by Lemma \ref{lemma:bound_exponential_moment} and the last inequality is by the assumption that $K\eta>1$. Therefore, define by $\bar \Gamma=\big(3/2 + 2(b + \beta^{-1} d)\big)^{1/2}$ , the $2$-Wasserstein distance $\cW_2\big(\PP(\xb_K),\PP(\bX_{K\eta})\big)$ can be bounded by
\begin{align*}
\cW_2\big(\PP(\xb_K),\PP(\bX_{K\eta})\big)\le\bar \Gamma (K\eta)^{1/2} \bigg[\bigg(\frac{\gamma_1\beta d\omega^2}{2B}K\eta+2\gamma_2 M^2 dK\eta^2\bigg)^{1/2} + \bigg(\frac{\gamma_1\beta d\omega^2}{2B}K\eta+2\gamma_2 M^2 dK\eta^2\bigg)^{1/4}\bigg],
\end{align*}
which completes the proof.
\end{proof}


\section{Proof of Lemmas in Appendix \ref{sec:proof_lemma}}

\subsection{Proof of Lemma \ref{lemma:contraction}}
\begin{proof}[Proof of Lemma \ref{lemma:contraction}]
Assuming shared Brownian motions in $\bW_t$ and $\bV_t$, we have
\begin{align*}
\dd \EE[\|\bW_t - \bV_t\|_{\Ab_\sigma}^2] &= -\EE\big[\big\la\Ab_\sigma^{-1}\big(\nabla f(\bW_t) - \nabla f(\bV_t)\big),  \Ab_\sigma(\bW_t - \bV_t)\big\ra\big]\dd t\notag\\
& = -\EE\big[\big\la\big(\nabla f(\bW_t) - \nabla f(\bV_t)\big),  \bW_t - \bV_t\big\ra\big]\dd t \notag\\
& \ge 0,
\end{align*}
where the first equality follows from the fact that we assume shared Brownian motion terms on both dynamics $\{\bW_t\}_{t\ge 0}$ and $\{\bV_t\}_{t\ge 0}$ and
the inequality is due to the convexity of $f(\xb)$.
Therefore, it can be evidently concluded that
\begin{align*}
\EE[\|\bW_t - \bV_t\|_{\Ab_\sigma}^2]\le \EE[\|\bW_0 - \bV_0\|_{\Ab_\sigma}^2],
\end{align*}
which completes the proof.
\end{proof}

\subsection{Proof of Lemma \ref{lemma:discretization_error_lmc}}
\begin{proof}[Proof of Lemma \ref{lemma:discretization_error_lmc}]
To simplify the analysis, let $\xb$ be any iterate of LS-SGLD and define $\xb = \bX_0$. Then the operators $\cG_\eta$ and $\cL_\eta$ satisfy
\begin{align*}
\cG_\eta\xb &= \bX_0 - \eta\Ab_\sigma^{-1}\nabla f(\bX_0) + \sqrt{2\beta^{-1}\eta}\Ab_\sigma^{-1/2}\bepsilon\notag\\
& = \bX_0 -\int_{0}^\eta \Ab_\sigma^{-1}\nabla f(\bX_0) \dd t + \int_{0}^\eta \sqrt{2\beta^{-1}}\Ab_\sigma^{-1/2} \dd \bB_t;\notag\\
\cL_\eta\xb 
& = \bX_0 -\int_{0}^\eta \Ab_\sigma^{-1}\nabla f(\bX_t) \dd t + \int_{0}^\eta \sqrt{2\beta^{-1}}\Ab_\sigma^{-1/2} \dd \bB_t.
\end{align*}
Consider synchronous Brownian terms in $\cG_\eta$ and $\cL_\eta$, we have
\begin{align}\label{eq:decomposition_onestep_LMC}
\EE[\|\cL_\eta \xb - \cG_\eta \xb\|_{\Ab_\sigma}^2] &= \EE\bigg[\bigg\|\int_{0}^\eta\big[\Ab_\sigma^{-1}\nabla f(\bX_0) -  \Ab_\sigma^{-1}\nabla f(\bX_t)\big]\dd t\bigg\|_{\Ab_\sigma}^2\bigg]\notag\\
& \le \EE\bigg[\eta \int_{0}^\eta \big\|\Ab_\sigma^{-1}\big[\nabla f(\bX_0) - \nabla f(\bX_t)\big]\big\|_{\Ab_\sigma}^2\dd t\bigg]\notag\\
&\le M^2\bigg[\eta\int_{0}^\eta\EE[\|\bX_t - \bX_0\|_2^2]\dd t\bigg],
\end{align}
where the second inequality follows from Jensen's inequality and the last inequality follows from Assumption \ref{assump:smooth} and the fact that $\|\Ab_\sigma\|_2\ge1$. We further have
\begin{align*}
\EE[\|\bX_t - \bX_0\|_2^2] &= \EE\bigg[\bigg\|\int_{0}^t \Ab_\sigma^{-1}\nabla f(\bX_\tau) \dd \tau \bigg\|_2^2\bigg] + 2\beta^{-1} t\EE[\|\Ab_\sigma^{-1/2}\bepsilon\|_2^2]\notag\\
&\le \EE\bigg[t\int_{0}^t \|\nabla f(\bX_\tau)\|_2^2 \dd \tau\bigg] + 2\gamma_2\beta^{-1} dt
\end{align*}
where the inequality is by Jensen's inequality and Lemma \ref{lemma:gaussian_norm} and $\gamma_2 = d^{-1}\sum_{i=1}^d(1+2\sigma - 2\sigma \cos(2\pi i/d))^{-1}$ is strictly smaller than $1$. By Lemma \ref{lemma:bound_grad}, we have
\begin{align*}
\EE[\|\nabla f(\bX_\tau)\|_2^2] \le 2M^2\EE[\|\bX_\tau\|_2^2] + 2G^2.
\end{align*}
Note that by Ito's lemma we have for any $0\le s\le \tau$,
\begin{align*}
\frac{\dd \EE[\|\bX_s\|_{\Ab_\sigma}^2]}{\dd s} = -2\EE[\la\bX_s,\nabla f(\bX_s)\ra] + \beta^{-1} d\le -2m\EE[\|\bX_s\|_2^2]+2b+\beta^{-1}d\le 2b+\beta^{-1}d,
\end{align*}
where the second inequality follows from Assumption \ref{assump:dissipative}. Therefore, 
\begin{align*}
\EE[\|\bX_\tau\|_2^2] \le \EE[\|\bX_\tau\|_{\Ab_\sigma}^2] = \EE[\|\bX_0\|_{\Ab_\sigma}^2]+\int_{0}^\tau\frac{\dd\EE[\|\bX_s\|_{\Ab_\sigma}^2]}{\dd s}\dd s\le \EE[\|\bX_0\|_{\Ab_\sigma}^2] + \tau(2b+\beta^{-1 }d).
\end{align*}
Note that $\bX_0 = \xb$ is a iterate of LS-SGLD, by Lemma \ref{lemma:bound_norm} we have $\EE[\|\bX_0\|_{\Ab_\sigma}^2]\le (2b+\beta^{-1}d)/(c_1m)$ for some constant $c_1\in[\|\Ab_\sigma\|_2^{-1},1]$. Therefore,
\begin{align*}
\EE[\|\nabla f(\bX_\tau)\|_2^2] \le 2M^2\EE[\|\bX_\tau\|_2^2] + 2G^2\le \frac{4M^2(2b + \beta^{-1}d)}{c_1m} + 2G^2 + 2M^2\tau(2b+\beta^{-1 }d).
\end{align*}
Thus, it follows that
\begin{align*}
\EE[\|\bX_t - \bX_0\|_2^2]\le \bigg(\frac{4M^2(2b + \beta^{-1}d)}{c_1m} + 2G^2+2M^2\tau(2b+\beta^{-1 }d)\bigg)t^2 ++ 2\gamma_2\beta^{-1}  dt.
\end{align*}
Note that $\tau,t\le \eta$, plugging the above inequality into \eqref{eq:decomposition_onestep_LMC}, we have
\begin{align*}
\EE[\|\cL_\eta \xb - \cG_\eta \xb\|_2^2] \le M^2\bigg[\bigg(\frac{4M^2(2b + \beta^{-1}d)}{c_1m} + 2G^2+2M^2(2b+\beta^{-1 }d)\eta\bigg)\eta^4 + 2\gamma_2\beta^{-1} d \eta^3\bigg].
\end{align*}
For sufficiently small step size satisfying
\begin{align*}
\eta \le \frac{c_1\beta^{-1}\gamma_2  m d }{4 M^2(2b + \beta^{-1}d)+ 2c_1 m  G^2}\wedge \sqrt{\frac{\gamma_2\beta^{-1}d}{M^2(2b+\beta^{-1}d)}}, 
\end{align*}
we have 
\begin{align*}
\EE[\|\cL_\eta \xb - \cG_\eta \xb\|_2^2] \le 4\gamma_2\beta^{-1} d \eta^3.
\end{align*}
This completes the proof.
\end{proof}

\subsection{Proof of Lemma \ref{lemma:bound_norm}}

\begin{lemma}[Lemma 3.1 in \citet{raginsky2017non}]\label{lemma:bound_grad}
For any $\xb\in\RR^d$ and $i\in[n]$, it holds that
\begin{align*}
\|\nabla f_i(\xb)\|_2\le M\|\xb\|_2 + G,
\end{align*}
where $G = \max_{i\in[n]} \|\nabla f_i(0)\|_2$.
\end{lemma}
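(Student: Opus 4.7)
The plan is to derive this gradient growth bound as an immediate consequence of the smoothness Assumption \ref{assump:smooth} combined with the triangle inequality. The key observation is that the constant $G = \max_{i\in[n]}\|\nabla f_i(\mathbf{0})\|_2$ has been defined precisely so as to uniformly absorb the gradient magnitudes of all components at the origin, so there is nothing to do beyond translating the Lipschitz inequality from a difference bound into an absolute bound.

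First, I would fix an arbitrary index $i\in[n]$ and an arbitrary $\xb\in\RR^d$, and apply Assumption \ref{assump:smooth} with the choice $\yb = \mathbf{0}$. This directly yields $\|\nabla f_i(\xb) - \nabla f_i(\mathbf{0})\|_2 \le M\|\xb\|_2$. Next, I would add and subtract $\nabla f_i(\mathbf{0})$ inside the norm and invoke the triangle inequality to obtain
\begin{align*}
\|\nabla f_i(\xb)\|_2 \le \|\nabla f_i(\xb) - \nabla f_i(\mathbf{0})\|_2 + \|\nabla f_i(\mathbf{0})\|_2 \le M\|\xb\|_2 + G,
\end{align*}
where the final inequality uses that $\|\nabla f_i(\mathbf{0})\|_2 \le \max_{j\in[n]}\|\nabla f_j(\mathbf{0})\|_2 = G$ for every $i\in[n]$. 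Since the index $i$ and the point $\xb$ were arbitrary, this is exactly the claimed bound.

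There is no genuine obstacle in this argument; the only mild thing to check is that $G$ is finite and well-defined, which is immediate since $[n]$ is a finite index set and each $\nabla f_i(\mathbf{0})$ is a well-defined vector by the smoothness assumption. I would present the result in two or three lines and then proceed to use it, as the analysis does, inside the proofs of Lemma \ref{lemma:bound_norm}, Lemma \ref{lemma:discretization_error_lmc}, and Lemma \ref{lemma:discretization} to convert pointwise gradient norms into linear-in-$\|\xb\|_2$ expressions that can be controlled using Assumption \ref{assump:dissipative}.
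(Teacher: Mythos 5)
Your proof is correct and is exactly the standard argument behind this fact: the paper itself states the lemma by citation to \citet{raginsky2017non} without reproducing a proof, and the intended derivation is precisely your application of Assumption \ref{assump:smooth} with $\yb=\mathbf{0}$ followed by the triangle inequality and the definition of $G$ as a finite maximum over $[n]$. Nothing is missing.
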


\begin{proof}
Recall the update formula of $\xb_k$,
\begin{align*}
\xb_{k+1} = \xb_k - \eta \Ab_\sigma^{-1} \gb_k + 
\sqrt{2\beta^{-1}\eta}\Ab_\sigma^{-1/2}\bepsilon_k.
\end{align*}
Therefore, it holds that
\begin{align*}
\EE[\|\xb_{k+1}\|_{\Ab_\sigma}^2] = &\EE[\|\xb_k - \eta \Ab_\sigma^{-1} \gb_k\|_{\Ab_\sigma}^2] + 2\eta\beta^{-1}\EE[\|\Ab_\sigma^{-1/2}\bepsilon_k\|_{\Ab_{\sigma}}^2] \notag  \\
& = \EE[\|\xb_k\|_{\Ab_\sigma}^2] - 2\eta\EE[\la\xb_k, \gb_k\ra] + \eta^2 \EE\big[\|\gb_k\|_{\Ab_\sigma^{-1}}^2\big]+2\eta \beta^{-1} d,
\end{align*}
where the second equality follows from the fact that $\EE[\|\bepsilon_k\|_2^2] = d$. Note that all eigenvalues of $\Ab_\sigma$ are greater than $1$, it follows that
\begin{align*}
\EE[\|\xb_{k+1}\|_{\Ab_\sigma}^2] &= \EE[\|\xb_k\|_{\Ab_\sigma}^2] -2\eta\EE[\la\xb_k,\nabla f(\xb)\ra] + \eta^2\EE[\|\gb_k\|_2^2] + 2\eta\beta^{-1}d\notag\\
&\le \EE[\|\xb_k\|_{\Ab_\sigma}^2] - 2\eta m \EE[\|\xb_k\|_2^2] + 2 \eta b + 2\eta^2(M^2\EE[\|\xb_k\|_2^2] + G^2 ) + 2\eta\beta^{-1}d,
\end{align*}
where the inequality follows from Assumption \ref{assump:dissipative}, Lemma \ref{lemma:bound_grad} and Young's inequality. 
Since the step size  $\eta$ satisfies $\eta \le m/(2M^2)$, we further have
\begin{align*}
\EE[\|\xb_{k+1}\|_{\Ab_\sigma}^2]\le \EE[\|\xb_k\|_{\Ab_\sigma}^2] - \eta m \EE[\|\xb_k\|_2^2] + 2\eta(b + \beta^{-1} d + \eta G^2).
\end{align*}
Recall that all eigenvalues of $\Ab_\sigma$ are greater than $1$, there exists a constant $\|\Ab_\sigma\|_2^{-1}\le c_1\le 1$ such that 
\begin{align}\label{eq:contraction_norm}
\EE[\|\xb_{k+1}\|_{\Ab_\sigma}^2]\le (1-c_1\eta m)\EE[\|\xb_k\|_{\Ab_\sigma}^2] + 2\eta(b + \beta^{-1} d + \eta G^2).
\end{align}
Since $\eta\le 1/(c_1m)\wedge b/G$, \eqref{eq:contraction_norm} implies that the following holds for all $k\ge 0$,
\begin{align*}
\EE[\|\xb_{k}\|_{\Ab_\sigma}^2]\le (1 - c_1\eta m )^{k} \|\xb_0\|_{\Ab_\sigma}^2 + \frac{2(2b+\beta^{-1 }d)}{c_1m}.
\end{align*}
Since at the initialization $\xb_0 = 0$, we have
\begin{align*}
\EE[\|\xb_k\|_2^2]\le \EE[\|\xb_k\|_{\Ab_\sigma}^2]\le \frac{2(2b + \beta^{-1} d)}{c_1 m}.
\end{align*}
This completes the proof.
\end{proof}

\subsection{Proof of Lemma \ref{lemma:bound_exponential_moment}}
\begin{proof}
We first define the function $L(t) = e^{\|\bX_t\|_{\Ab_\sigma}^2}$, then by Ito's formula, we have
\begin{align*}
\dd \EE[L(t)] &= -2\EE[\la\Ab_\sigma\bX_t, \Ab_\sigma^{-1}\nabla f(\bX_t)\ra L(t)] \dd t + \EE[\la 4\Ab_\sigma \bX_t\bX_t^\top\Ab_\sigma + 2\Ab_\sigma, \beta^{-1}\Ab_\sigma^{-1}\Ib\ra L(t)]\dd t \notag\\
& = -2\EE\big[\big(\la\bX_t, \nabla f(\bX_t)\ra - \beta^{-1} d - 2\beta^{-1} \|\bX_t\|_{\Ab_\sigma}^2\big)L(t)\big] \dd t.
\end{align*}
By Assumption \eqref{assump:dissipative}, we further have
\begin{align*}
\dd \EE[L(t)] \le 2 \EE\big[\big((-m\|\bX_t\|_2^2 + 2\beta^{-1} \|\bX_t\|_{\Ab_\sigma}^2)+ b + \beta^{-1} d\big) L(t)\big] \dd t.
\end{align*}
Therefore, assume $\beta\ge 2\|\Ab_\sigma\|_2/m$, we have
\begin{align*}
\dd \EE[L(t)] \le 2(b+\beta^{-1} d) \EE[L(t)] \dd t.
\end{align*}
Since $L(t)$ is always positive, it holds that
\begin{align*}
\EE[L(t)]\le L(0)e^{2(b+\beta^{-1} d)t}.
\end{align*}
Note that $\|\bX_t\|_{\Ab_\sigma}^2\ge \|\bX_t\|_2^2$, we immediately have
\begin{align*}
\EE[e^{\|\bX_t\|_2^2}]\le \EE[e^{\|\bX_t\|_{\Ab_\sigma}^2}]\le L(0)e^{2(b+\beta^{-1} d)t}, 
\end{align*}
which completes the proof.
\end{proof}


\end{document}